\newtheorem{theorem}{Theorem}
\newtheorem{remark}{Remark}
\newtheorem{lemma}{Lemma}
\newtheorem{assumption}{Assumption}
\DeclareMathOperator*{\argmax}{arg\,max}
\newcommand{\dual}[1]{{#1'}}
\newcommand{\logdet}{\log\det}
\newcommand{\R}{\mathbb{R}}
\newcommand{\iverson}[1]{1[{#1}]}
\newcommand{\Frob}{\mathfrak{F}}
\newcommand{\zero}{\mathbf{0}}
\newcommand{\one}{\mathbf{1}}
\newcommand{\I}{\mathbf{I}}
\renewcommand{\t}[1]{{#1}^{\rm T}}
\newcommand{\inv}[1]{{#1}^{-1}}
\newcommand{\diag}{\mathbf{diag}}
\newcommand{\greekbf}[1]{\text{\boldmath $#1$}}
\newcommand{\BSigma}{\mathbf{\Sigma}}
\newcommand{\BSigmah}{\widehat{\BSigma}}
\newcommand{\sigmah}{\widehat{\sigma}}
\newcommand{\vsigmah}{\widehat{\greekbf{\sigma}}}
\newcommand{\BSigmat}{\overline{\BSigma}}
\newcommand{\sigmat}{\overline{\sigma}}
\newcommand{\BOmega}{\mathbf{\Omega}}
\newcommand{\BOmegah}{\widehat{\BOmega}}
\newcommand{\BOmegat}{\overline{\BOmega}}
\newcommand{\BOmegac}{\widetilde{\BOmega}}
\newcommand{\omegah}{\widehat{\omega}}
\newcommand{\vomega}{\greekbf{\omega}}
\newcommand{\vomegah}{\widehat{\vomega}}
\newcommand{\vomegac}{\widetilde{\vomega}}
\newcommand{\omegat}{\overline{\omega}}
\newcommand{\W}{\mathbf{W}}
\newcommand{\y}{\mathbf{y}}
\renewcommand{\S}{\mathbf{S}}
\renewcommand{\u}{\mathbf{u}}
\renewcommand{\H}{\mathbf{H}}
\newcommand{\h}{\mathbf{h}}
\newcommand{\A}{\mathbf{A}}
\newcommand{\B}{\mathbf{B}}
\renewcommand{\a}{\mathbf{a}}
\newcommand{\x}{\mathbf{x}}
\newcommand{\q}{\mathbf{q}}
\renewcommand{\r}{\mathbf{r}}
\newcommand{\g}{\mathbf{g}}
\renewcommand{\c}{\mathbf{c}}
\newcommand{\si}[1]{^{(#1)}}
\renewcommand{\O}{\mathcal{O}}
\newcommand{\z}{\mathbf{z}}
\renewcommand{\b}{\mathbf{b}}
\newcommand{\Sup}{\mathcal{S}}
\newcommand{\Supc}{\Sup^c}
\newcommand{\Supt}{\overline{\Sup}}
\newcommand{\E}{\mathbb{E}}
\renewcommand{\P}{\mathbb{P}}
\renewcommand{\exp}[1]{e^{#1}}
\newcommand{\lexp}[1]{{\rm exp}\left(#1\right)}
\newcommand{\sgn}{{\rm sgn}}
\newcommand{\BGammat}{{\overline{\mathbf{\Gamma}}}}
\newcommand{\BPsit}{{\overline{\mathbf{\Psi}}}}
\newcommand{\psit}{{\overline{\psi}}}
\newcommand{\BPhit}{{\overline{\mathbf{\Phi}}}}
\newcommand{\phit}{{\overline{\phi}}}
\newcommand{\Z}{\mathbf{Z}}
\newcommand{\Zh}{\widehat{\Z}}
\newcommand{\zh}{\widehat{z}}
\newcommand{\zc}{\widetilde{z}}
\newcommand{\vzh}{\widehat{\z}}
\newcommand{\vzc}{\widetilde{\z}}
\newcommand{\Zc}{\widetilde{\Z}}
\newcommand{\m}{g}
\newcommand{\eps}{\varepsilon}
\newcommand{\K}{\mathcal{K}}
\newcommand{\vt}{\mathbf{t}}
\renewcommand{\vec}{\mathbf{vec}}
\newcommand{\dotprod}[2]{\langle #1,#2 \rangle}
\newcommand{\kronprod}[2]{#1 \otimes #2}
\newcommand{\BDelta}{\mathbf{\Delta}}
\newcommand{\C}{\mathbf{C}}
\newcommand{\D}{\mathbf{D}}
\renewcommand{\d}{\mathbf{d}}
\newcommand{\myprod}[2]{#1 \odot #2}
\newcommand{\hns}{\hspace{-0.025in}}
\newcommand{\J}{\mathbf{J}}
\newcommand{\F}{\mathbf{F}}
\newcommand{\Ball}{\mathbb{B}}
\newcommand{\U}{\mathcal{U}}
\newcommand{\DD}{\mathcal{D}}
\newcommand{\T}{\mathcal{T}}
\newcommand{\V}{\mathcal{V}}
\newcommand{\BesselK}{\mathbb{K}}
\newcommand{\MeijerG}{\mathbb{G}}
\newcommand{\captionx}[1]{\caption{\footnotesize{#1}}} 
\renewcommand{\cite}[1]{\somethingunexistant}
\begin{document}

\title{\textbf{On the Statistical Efficiency of $\ell_{1,p}$ Multi-Task Learning of Gaussian Graphical Models}}

\author{
Jean Honorio, \texttt{jhonorio@purdue.edu} \\
Computer Science Dept., Purdue University, West Lafayette, IN 47907, USA \\
Tommi Jaakkola, \texttt{tommi@csail.mit.edu} \\
CSAIL, Massachusetts Institute of Technology, Cambridge, MA 02139, USA \\
Dimitris Samaras, \texttt{samaras@cs.stonybrook.edu} \\
Computer Science Dept., Stony Brook University, Stony Brook, NY 11794, USA}

\date{\vspace{-0.2in}}

\maketitle

\begin{abstract} %
In this paper, we present $\ell_{1,p}$ multi-task structure learning for Gaussian graphical models.
We analyze the sufficient number of samples for the correct recovery of the support union and edge signs.
We also analyze the necessary number of samples for any conceivable method by providing information-theoretic lower bounds.
We compare the statistical efficiency of multi-task learning versus that of single-task learning.
For experiments, we use a block coordinate descent method that is provably convergent and generates a sequence of positive definite solutions.
We provide experimental validation on synthetic data as well as on two publicly available real-world data sets, including functional magnetic resonance imaging and gene expression data.
\end{abstract}

\section{Introduction}

Structure learning aims to discover the topology of a probabilistic graphical model that accurately represents a given data set.
Accuracy of representation is measured by the likelihood that the model explains the observed data.
One challenge faced by structure learning is that the number of possible structures is super-exponential in the number of variables.
A computationally tractable method, $\ell_1$-norm regularization, has been successfully used for learning sparse structures, cf., \citep{Meinshausen06,Banerjee06,Yuan07}.
Sparse structures are easier to be interpreted, given the small number of edges.
Furthermore, sparsity guarantees the correct recovery of the edges \citep{Ravikumar11}.

In several application domains, structure learning is very useful for analyzing data sets for which probabilistic dependencies are not known apriori.
For instance, these techniques allow for modeling interactions between brain regions as well as the interactions between genes.
Suppose that we want to learn the structure of interactions for one group under analysis (e.g., a specific collection site or a specific cancer type).
We can expect that the interaction patterns of two groups are not exactly the same.
On the other hand, when learning the structure for one group, we would like to use evidence from other groups as side information in our learning process.
This becomes more important in settings with limited amount of data and high variability, such as in functional magnetic resonance imaging (fMRI) as well as in gene expression data.
Multi-task learning allows for a more efficient use of training data which is available for multiple related tasks.

In this paper, we consider the $\ell_{1,p}$ multi-task structure learning problem, which generalizes the learning of sparse Gaussian graphical models to the multi-task setting by replacing the $\ell_1$-norm regularization with an $\ell_{1,p}$-norm, also known as the \emph{simultaneous} prior \citep{Turlach05,Tropp06b} for $p=\infty$, or the \emph{group-sparse} prior \citep{Yuan06,Meier08} for $p=2$.
Here, we assume that $p>1$.
This includes two specific instances previously evaluated experimentally: \citep{Honorio10b} when $p=\infty$, and \citep{Varoquaux10} when $p=2$.
The regularizer in \citep{Mohan14} considers a general $\ell_{1,p}$ regularizer with an additional $\ell_1$-norm regularization.
In this manuscript, we perform a sample complexity analysis and provide further experimental validation.

Our main contributions and their implications are the following.
First, we analyze the sufficient number of samples for correctly recovering the support union and edge signs for $N$ variables and $K$ tasks.
For sub-Gaussian variables, the sufficient number of samples is $\O(\log K+\log N)$.
For random variables with finite high-order moment, the sufficient number of samples is $\O(K^{1/\m}N^{1/\m})$ for some integer $\m \geq 1$.
Second, we provide information-theoretic lower bounds and show that the necessary number of samples is $\O(\log K+\log N)$ for any conceivable algorithm.
The latter has the same rate as the sufficient number of samples for sub-Gaussian variables.
Thus, the polynomial-time method of $\ell_{1,p}$ regularization achieves optimal rates.
Third, we compare the statistical efficiency of multi-task and single-task learning.
For Gaussian variables, we show the sharp \emph{phase transition} between the recovery success and failure of both multi-task and single-task learning.
The sufficient number of samples is ${\O(\log K + \log N)}$ for both single-task and multi-task learning.
The necessary number of samples is ${\O(\log K + \log N)}$, $\O(K^{-1}(\log K + \log N))$ and $\O(K^{-2}(\log K + \log N))$ for the single-task, $\ell_{1,2}$ multi-task and $\ell_{1,\infty}$ multi-task problems respectively.
Thus, as more tasks are available, multi-task regularization requires less samples in order to avoid failure.

The paper is organized as follows.
Section \ref{sec:background} introduces Gaussian graphical models as well as techniques for learning such structures from data.
Section \ref{sec:prelims} presents the $\ell_{1,p}$ multi-task structure learning problem.
Section \ref{sec:consistency} analyzes the consistency of the method for correctly recovering the support union and edge signs, for both sub-Gaussian random variables and variables with finite high-order moment.
We also include information-theoretic lower bounds, and compare the statistical efficiency of multi-task and single-task learning.
Section \ref{sec:bcd} presents a block coordinate descent method that produces sparse and positive definite estimates.
We briefly discuss the computational complexity and convergence of the algorithm.
Section \ref{sec:results} presents experimental results on synthetic data as well as in two real-world data sets, including fMRI and gene expression data.
The $\ell_{1,p}$ multi-task method compares favorably with others.
Section \ref{sec:conclusions} summarizes the main contributions and results.

\section{Background} \label{sec:background}

\begin{table}
\begin{center}
\begin{footnotesize}
\begin{tabular}{p{1in}@{\hspace{0.07in}}p{4.35in}}
  \hline
  \textbf{Notation} & \textbf{Description} \\
  \hline
  $\|\c\|_1$ & $\ell_1$-norm of $\c\in \R^N$, i.e., $\sum_n{|c_n|}$ \\
  $\|\c\|_\infty$ & $\ell_\infty$-norm of $\c\in \R^N$, i.e., $\max_n{|c_n|}$ \\
  $\|\c\|_p$ & $\ell_p$-norm of $\c\in \R^N$, i.e., $(\sum_n{|c_n|^p})^{1/p}$ for $p \in (1,\infty)$ \\
  $\diag(\c) \in \R^{N \times N}$ & matrix with elements of $\c\in \R^N$ on its diagonal \\
  $\A\succeq \zero$ & $\A\in \R^{N \times N}$ is symmetric and positive semidefinite \\
  $\A\succ \zero$ & $\A\in \R^{N \times N}$ is symmetric and positive definite \\
  $\|\A\|_1$ & $\ell_1$-norm of $\A\in \R^{M \times N}$, i.e., $\sum_{mn}{|a_{mn}|}$ \\
  $\|\A\|_\infty$ & $\ell_\infty$-norm of $\A\in \R^{M \times N}$, i.e., $\max_{mn}{|a_{mn}|}$ \\
  $\|\A\|_2$ & spectral norm of $\A\in \R^{N \times N}$, i.e., the maximum eigenvalue of $\A\succ \zero$ \\
  $\|\A\|_\Frob$ & Frobenius norm of $\A\in \R^{M \times N}$, i.e., $\sqrt{\sum_{mn}{a_{mn}^2}}$ \\
  $\diag(\A) \in \R^{N \times N}$ & matrix with diagonal elements of $\A\in \R^{N \times N}$ only \\
  $\dotprod{\A}{\B}$ & scalar product of $\A,\B\in \R^{M \times N}$, i.e., $\sum_{mn}{a_{mn}b_{mn}}$ \\
  $\kronprod{\A}{\B} \in \R^{M^2 \times N^2}$ & Kronecker product of $\A,\B\in \R^{M \times N}$, i.e., $[\kronprod{\A}{\B}]_{(m_1m_2)(n_1n_2)} = a_{m_1n_1}b_{m_2n_2}$ \\
  \hline
\end{tabular}
\end{footnotesize}
\end{center}
\vspace{-0.15in}
\captionx{Notation used in this paper.}
\label{tab:notation}
\end{table}

A \emph{Gaussian graphical model} is a Markov random field in which the random variables are jointly Gaussian distributed.
This model corresponds to the multivariate Gaussian distribution for $N$ variables with covariance matrix $\BSigma\in \R^{N\times N}$.
Conditional independence in a Gaussian graphical model is simply reflected in the zero entries of the precision matrix $\BOmega=\inv{\BSigma}$ \citep{Lauritzen96}.
Let $\BOmega \equiv \{\omega_{n_1n_2}\}$, two variables $n_1$ and $n_2$ are conditionally independent given all the remaining variables, if and only if $\omega_{n_1n_2}=0$.

The concept of robust estimation by performing covariance selection was first introduced in \citep{Dempster72} where some elements of the precision matrix $\BOmega$ are set to zero.
Finding the most sparse precision matrix which fits a data set, is a NP-hard problem \citep{DAspremont08}.
In order to overcome this problem, several $\ell_1$-regularization methods have been proposed for learning sparse Gaussian graphical models from data, cf., \citep{Meinshausen06,Banerjee06,Yuan07}.

In this paper, we use the notation in Table \ref{tab:notation}.
Given a dense sample covariance matrix $\BSigmah\succeq \zero$, the problem of finding a sparse precision matrix $\BOmega$ by regularized maximum likelihood estimation (MLE) is defined as the following non-smooth convex problem:
\begin{align} \label{SparseGGM}
\max_{\BOmega\succ \zero}\left(\ell_{\BSigmah}(\BOmega)-\rho \|\BOmega\|_1\right) \; ,
\end{align}
\noindent for regularization parameter $\rho >0$.
The term $\|\BOmega\|_1$ encourages sparsity of the precision matrix or conditional independence among variables, while the term $\ell_{\BSigmah}(\BOmega)$ is the Gaussian log-likelihood function, and it is defined as:\footnote{The average Gaussian log-likelihood is in fact $\frac{1}{2}\ell_{\BSigmah}(\BOmega) - \frac{N}{2}\log{(2\pi)}$.
We consider the likelihood function in Eq.~\eqref{eq:loglikelihood} for clarity of exposition.}
\begin{align} \label{eq:loglikelihood}
\ell_{\BSigmah}(\BOmega) \equiv \logdet\BOmega-\dotprod{\BSigmah}{\BOmega} \; .
\end{align}
Several algorithms have been proposed for solving Eq.~\eqref{SparseGGM}:
sparse regression \citep{Meinshausen06,Zhou11},
linearly-constrained optimization \citep{Yuan07,Yuan10,Cai11},
block coordinate descent \citep{Banerjee06,Banerjee08,Friedman07,Hsieh13,Treister14,Yun11},
Cholesky decomposition \citep{Rothman08},
a projected gradient method \citep{Duchi08},
Nesterov's smooth optimization \citep{Lu09},
alternating linearization \citep{Scheinberg10},
quadratic approximation \citep{Hsieh11,Hsieh14},
Newton-like methods \citep{Schmidt09b,Olsen12,Dinh13},
greedy optimization \citep{Scheinberg10b,Johnson12},
divide and conquer \citep{Hsieh12},
iterative thresholding \citep{Guillot12} and
distributed optimization \citep{Kambadur13,Wang13}.

Besides sparsity, several regularizers have been proposed for Gaussian graphical models for \emph{single-task} learning:
for enforcing diagonal structure \citep{Levina08},
block structure for known block-variable assignments \citep{Duchi08,Schmidt09b,Yang13c} and unknown block-variable assignments \citep{Marlin09,Marlin09b,Sun14},
spatial coherence \citep{Honorio09},
sparse changes in controlled experiments \citep{Danaher14,Kolar09,Mohan12,Mohan14,Zhang10},
power law regularization in scale free networks \citep{LiuQiang11} and
variable selection \citep{Honorio12}.
In Section \ref{sec:prelims}, we discuss regularizers that are more relevant to our setting.

Multi-task learning has been applied to very diverse problems, such as:
linear regression \citep{Liu09,LiuJun09},
classification \citep{Jebara04},
compressive sensing \citep{Qi08},
reinforcement learning \citep{Wilson07} and
structure learning of Bayesian networks \citep{Niculescu07,Oyen12}.
Structure learning through $\ell_1$-regularization has been also proposed for different types of graphical models:
Markov random fields \citep{Lee06,Wainwright06},
Bayesian networks \citep{Schmidt07} and
conditional random fields \citep{Schmidt08}.

\section{Preliminaries} \label{sec:prelims}

In this section, we present the $\ell_{1,p}$ multi-task structure learning problem.
The $\ell_{1,p}$-norm regularizer is motivated from the multi-task learning literature.
Given $K$ arbitrary tasks, our goal is to learn one structure for each task that best explains the observed data, while promoting a common sparsity pattern of edges for all tasks.

For each task $k$, we learn a precision matrix $\BOmega\si{k}\in \R^{N\times N}$ for $N$ variables.
Let ${\BOmega \equiv \{\omega_{n_1n_2}\si{k}\}}$.
Recall that the $\ell_1$-norm regularizer is a proxy for the number of edges in the graph.
Similarly, the $\ell_{1,p}$ multi-task regularizer is a proxy for the minimum number of edges that cover the graphs from all the tasks.
In order to do this, the $\ell_{1,p}$ regularizer penalizes corresponding edges across tasks (i.e., $\omega_{n_1n_2}\si{1},\dots,\omega_{n_1n_2}\si{K}$).
Let $\BSigmah\si{k}\succeq \zero$ be the dense sample covariance matrix for task $k$, and $T\si{k}>0$ be proportional to the number of samples in task $k$.
The \emph{$\ell_{1,p}$ multi-task structure learning problem} is defined as the following non-smooth convex problem:
\begin{align} \label{eq:multitaskggm}
\BOmegah = \argmax_{(\forall k){\rm\ }\BOmega\si{k}\succ \zero}\left(\sum_k{T\si{k}\ell_{\BSigmah\si{k}}(\BOmega\si{k})} - \rho\|\BOmega\|_{1,p} \right) \; ,
\end{align}
\noindent for regularization parameter $\rho >0$ and $p>1$.
The term $\ell_{\BSigmah\si{k}}(\BOmega\si{k})$ is the Gaussian log-likelihood function defined in Eq.~\eqref{eq:loglikelihood}, while the term $\|\BOmega\|_{1,p}$ is the $\ell_{1,p}$ regularizer, and it is defined as:
\begin{align} \label{MultiTaskMeasure}
\|\BOmega\|_{1,p} \equiv \sum_{n_1 \neq n_2}{\|(\omega_{n_1n_2}\si{1},\dots,\omega_{n_1n_2}\si{K})\|_p} \; .
\end{align}
Note that the above regularizer allows for different signs of corresponding edges across tasks (i.e., $\omega_{n_1n_2}\si{1},\dots,\omega_{n_1n_2}\si{K}$).

Here, we assume that $p>1$.
Note that for $p=1$, the multi-task problem in Eq.~\eqref{eq:multitaskggm} reduces to $K$ \emph{single-task} problems as in Eq.~\eqref{SparseGGM}.
For $p<1$, Eq.~\eqref{eq:multitaskggm} is not convex.
The factor $T\si{k}$ naturally comes from the definition of the average Gaussian log-likelihood of the observed data from the $K$ tasks.

Some generalizations have been proposed subsequent to our preliminary version \citep{Honorio10b}.
Having the same intuitive motivation as for the $\ell_{1,p}$ multi-task regularizer, \citet{Guo11} proposed the following non-convex regularizer:
\begin{align*}
\|\BOmega\|_{\rm{Guo}} \equiv \sum_{n_1 \neq n_2}{\textstyle{ \sqrt{\|(\omega_{n_1n_2}\si{1},\dots,\omega_{n_1n_2}\si{K})\|_1} }} \; ,
\end{align*}
A regularizer which encourages sign coherence across tasks was proposed in \citep{Chiquet11}.
In contrast, as we mentioned before, we allow for different signs across tasks.
The regularizer proposed by \citet{Chiquet11} is:
\begin{align*}
\|\BOmega\|_{\rm{Chiquet}} \equiv \sum_{n_1 \neq n_2}{\left(\textstyle{ \sqrt{\sum_k{\max(0,\omega_{n_1n_2}\si{k})^2}} + \sqrt{\sum_k{\max(0,-\omega_{n_1n_2}\si{k})^2}} }\right)} \; .
\end{align*}
A \emph{dirty model} was proposed in \citep{Hara11,Hara13}.
In this model, the precision matrix for each task is equal to the sum of two elements.
The $\ell_1$-norm regularizer is applied to one summand, and the $\ell_{1,p}$-norm regularizer is applied to the other summand.

Bayesian methods were proposed in \citep{Peterson15,Zhu15}.
The disadvantage of \citep{Peterson15} is that the computational complexity of the normalizing constant is exponential in the number of tasks $K$ and thus, it does not scale.
(For instance, one of our real-world data sets has $41$ tasks.)
Regarding \citep{Zhu15}, although experimental results are encouraging, the non-convexity of the problem make the theoretical understanding of the consistency of the method unclear.

Next, we discuss the \emph{multi-attribute} setting of \citet{Kolar13,Kolar14}, in which each node contains a data vector.
It might seem that the multi-attribute setting is very similar to the \emph{multi-task} setting.
After all, in the multi-task setting there is one node for each task, thus, we could think of a node as containing a data vector from all tasks.
The key differences between both settings are the following.
The multi-attribute method outputs a single structure, in contrast to the multi-task method that outputs one structure for each task.
Furthermore, one data sample in the multi-attribute setting contains all the attributes for all the nodes.
In the multi-task setting each task is independent, and might contain different number of data samples.

\section{Consistency Analysis} \label{sec:consistency}

In this section, we study the consistency of the $\ell_{1,p}$ multi-task structure learning problem.
The $\ell_{1,p}$ multi-task problem, as any other MLE approach, provides a strategy for the estimation of the \emph{true} precision matrices when training data is available.

For each task $k$, let $\x\si{k} \in \R^N$ denote a zero-mean random variable.
We denote the \emph{true covariance} by $\BSigmat\si{k} = \E[\x\si{k}\t{\x\si{k}}] \succ \zero$, and thus the \emph{true precision matrix} is given by:
\begin{align} \label{eq:truesolution}
(\forall k){\rm\ }\BOmegat\si{k} = \inv{\BSigmat\si{k}} \; .
\end{align}
When training data is available, we approximate the true covariances by using sample covariances $\BSigmah\si{k} \succeq \zero$.
We then solve the $\ell_{1,p}$ multi-task structure learning problem in Eq.~\eqref{eq:multitaskggm} and obtain the \emph{empirical minimizer} $\BOmegah$.

In what follows, we analyze the conditions under which the empirical minimizer $\BOmegah$ perfectly recovers the sparsity pattern and signs of the true model $\BOmegat$.
We use a definition of sparsity pattern similar to the one used in the linear regression literature \citep{Negahban11,Obozinski11}.
Let $\BOmega \equiv \{\omega_{n_1n_2}\si{k}\}$.
The \emph{support union} (i.e., the union of the edge sets across tasks) is given by:
\begin{align} \label{eq:supportunion}
\Sup_{\BOmega} \equiv \left\{(n_1,n_2) \mid n_1 \neq n_2 \text{\ \ and\ \ } (\exists k){\rm\ }\omega_{n_1n_2}\si{k} \neq 0 \right\} \; .
\end{align}
Our goal is to show that the support union and the edge signs of the empirical minimizer and the true model are equal.

Although other techniques are also possible for consistency analysis, we follow a proof similar to the one of \citet{Ravikumar11}.
The main reason for our choice is to allow for comparison of the results.
Our proof is based in the \emph{primal-dual witness} method which has been also used, for instance, in the analysis of linear regression \citep{Negahban11,Obozinski11,Wainwright09b} and structure learning of Ising models \citep{Wainwright06}.

Our analysis keeps explicit track of some problem-specific quantities, that can scale in a non-trivial manner with respect to the number of variables $N$ and tasks $K$.
The first of these quantities is the node-degree of true model, which is defined as the maximum degree (among all nodes) of the support union of $\BOmegat \equiv \{\omegat_{n_1n_2}\si{k}\}$:
\begin{align} \label{eq:degree}
d_{\BOmegat} \equiv 1 + \max_{n_1}{|\{ n_2 \mid n_1 \neq n_2 \text{\ \ and\ \ } (\exists k){\rm\ }\omegat_{n_1n_2}\si{k} \neq 0 \}|} \; .
\end{align}
Some quantities involve the Hessian of the $\logdet$ function evaluated at the true precision matrices:
\begin{align} \label{eq:hessian}
(\forall k){\rm\ }\BGammat\si{k} \equiv \left. \nabla^2 \logdet{\BOmega\si{k}}  \right|_{\BOmega\si{k}=\BOmegat\si{k}}  {\rm\ }= \kronprod{\inv{\BOmegat\si{k}}}{\inv{\BOmegat\si{k}}} \; ,
\end{align}
\noindent where $\BGammat\si{k} \in \R^{N^2 \times N^2}$.
Let $\BSigmat \equiv \{\sigmat_{n_1n_2}\si{k}\}$.
Finally, we define the following two quantities:
\begin{align}
C_{\BSigmat} & \equiv \max_{n_1k}{\sum_{n_2}{|\sigmat_{n_1n_2}\si{k}|}} \; , \label{eq:sigmaconst} \\
C_{\BGammat} & \equiv \max_{ik}{\sum_j{|\phit_{ij}\si{k}|}} \text{\ \ for\ \ } \BPhit\si{k} \equiv \inv{\BGammat_{\Sup\Sup}\si{k}} \in \R^{|\Sup| \times |\Sup|} {\rm\ ,\ } \Sup \equiv \Sup_{\BOmegat} \; . \label{eq:gammaconst}
\end{align}
We assume that the Hessian satisfies the following type of \emph{mutual incoherence} or \emph{irrepresentable condition}:
\begin{assumption} \label{asm:incoherence}
Let $\BPsit\si{k} \equiv \BGammat_{\Supc\Sup}\si{k}\inv{\BGammat_{\Sup\Sup}\si{k}} \in \R^{|\Supc| \times |\Sup|}$ where $\Sup \equiv \Sup_{\BOmegat}$ and $\Supc$ is the complement of $\Sup$.
Let the $\ell_\dual{p}$-norm be the dual of the $\ell_p$-norm, i.e., $\frac{1}{p} + \frac{1}{\dual{p}} = 1$.
Assume that there exists some $\alpha_\dual{p} \in (0,1]$ such that:
\begin{align*}
\max_i \|\textstyle{ (\sum_j{|\psit_{ij}\si{1}|},\dots,\sum_j{|\psit_{ij}\si{K}|}) }\|_\dual{p} \leq 1-\alpha_\dual{p} \; .
\end{align*}
\end{assumption}
The above assumption is a minor generalization of the mutual incoherence condition in \citep{Ravikumar11}.
Assumption \ref{asm:incoherence} reduces to the condition in \citep{Ravikumar11} for $p=1$ and $\dual{p}=\infty$.
Intuitively speaking, Assumption \ref{asm:incoherence} limits the influence that the non-edge terms, indexed by $\Supc$, can have on the edge-based terms, indexed by $\Sup$.
Thus, there should not be any node pair that is not included in the graph (i.e., in $\Supc$) and that it is highly correlated with node pairs within the true edge-set $\Sup$.
Mutual incoherence conditions have also been used in the analysis of linear regression \citep{Negahban11,Obozinski11,Wainwright09b} and structure learning of Ising models \citep{Wainwright06}.

As part of our proof, it is necessary to show that for each task, the sample covariance is close to the true covariance.
As it is usual, the related concentration inequalities are decreasing with respect to the number of samples per task.
Thus, the worst case behavior is dominated by the task with the minimum number of training samples.
For clarity of exposition, we assume that every task contains the same number of samples.

Next, we study the consistency of the $\ell_{1,p}$ multi-task structure learning problem under two scenarios: sub-Gaussian random variables, and variates with finite high-order moment.

\subsection{Sub-Gaussian Random Variables}

Here, we provide an exponential-tail bound for sub-Gaussian variates.
The class of sub-Gaussian variates includes for instance Gaussian variables, any bounded random variable (e.g., Bernoulli, multinomial, uniform), any random variable with strictly log-concave density, and any finite mixture of sub-Gaussian variables.
Next, we show that in order to correctly recover the support union and edge signs, it is sufficient to have a number of samples $M$ that is logarithmic respect to the number of variables $N$ and tasks $K$.
\begin{theorem} \label{thm:consistencysubgaussian}
Let the $\ell_\dual{p}$-norm be the dual of the $\ell_p$-norm, i.e., $\frac{1}{p} + \frac{1}{\dual{p}} = 1$.
Let Assumption \ref{asm:incoherence} hold for the $\ell_\dual{p}$-norm and $\alpha_\dual{p} \in (0,1]$.
Let $\BSigmat \equiv \{\sigmat_{n_1n_2}\si{k}\}$.
Assume that for each $n$ and $k$, the random variable $x_n\si{k} / \sqrt{\sigmat_{nn}\si{k}}$ is zero-mean and sub-Gaussian with parameter $C_1$.
Assume that we are given $M$ i.i.d. samples for each of the $K$ tasks.
Set the regularization parameter in Eq.~\eqref{eq:multitaskggm} to $\rho = C_2 \eps K^{1/\dual{p}}/\alpha_\dual{p}$ for some $C_2 \geq 8$, where:
\begin{align*}
\eps = \sqrt{\frac{\log K + \tau \log N}{M}} \left( 8\sqrt{2}(1+4C_1^2) \max_{nk}{\sigmat_{nn}\si{k}} \right) \; ,
\end{align*}
\noindent for some $\tau > 2$.
If the sample size $M$ satisfies the bound:
\begin{align*}
M \geq d_{\BOmegat}^2 {\rm\ } (\log K + \tau \log N) \hns\left( 48\sqrt{2}(1+4C_1^2) \max_{nk}{\sigmat_{nn}\si{k}} \max{(C_{\BSigmat} C_{\BGammat}, C_{\BSigmat}^3 C_{\BGammat}^2)} \right)^2 \hns\hns\left(1\hns+\hns\frac{C_2K^{1/\dual{p}}}{\alpha_\dual{p}}\right)^4 \; ,
\end{align*}
\noindent then with probability at least $1-4/N^{\tau-2}$, we have:
\begin{align*}
\textstyle{ (\forall k){\rm\ }\|\BOmegat\si{k} - \BOmegah\si{k}\|_\infty \leq 2 \eps C_{\BGammat} \left(1+\frac{C_2K^{1/\dual{p}}}{\alpha_\dual{p}}\right) } \; .
\end{align*}
Furthermore, the support union and the edge signs of the empirical minimizer $\BOmegah \equiv \{\omegah_{n_1n_2}\si{k}\}$ are equal to those of the true model $\BOmegat \equiv \{\omegat_{n_1n_2}\si{k}\}$.
That is:
\begin{align*}
\Sup_{\BOmegah} = \Sup_{\BOmegat}\text{\ \ \ and\ \ \ }(\forall k,(n_1,n_2) \in \Sup_{\BOmegat}){\rm\ } & \textstyle{ \left( \omegat_{n_1n_2}\si{k} = 0 \text{\ \ and\ \ } |\omegah_{n_1n_2}\si{k}| \leq 2 \eps C_{\BGammat} \left(1+\frac{C_2K^{1/\dual{p}}}{\alpha_\dual{p}}\right) \right) } \\
 & \text{or\ \ } (\omegat_{n_1n_2}\si{k} > 0 \text{\ \ and\ \ } \omegah_{n_1n_2}\si{k} > 0) \\
 & \text{or\ \ } (\omegat_{n_1n_2}\si{k} < 0 \text{\ \ and\ \ } \omegah_{n_1n_2}\si{k} < 0) \; ,
\end{align*}
\noindent provided that:
\begin{align*}
\textstyle{ (\forall k,(n_1,n_2) \in \Sup_{\BOmegat}){\rm\ }\omegat_{n_1n_2}\si{k} = 0 \text{\ \ \ or\ \ \ } |\omegat_{n_1n_2}\si{k}| > 4 \eps C_{\BGammat} \left(1+\frac{C_2K^{1/\dual{p}}}{\alpha_\dual{p}}\right) } \; .
\end{align*}
\end{theorem}
\noindent (See Appendix \ref{sec:detailedproofs} for detailed proofs.)

\subsection{Random Variables with Finite High-Order Moment}

Here, we provide a polynomial-tail bound for a more general class of random variables than the ones considered in the previous sub-section.
We consider random variables with finite high-order moment.
Next, we show that in order to correctly recover the support union and edge signs, it is sufficient to have a number of samples $M$ that is polynomial with respect to the number of variables $N$ and tasks $K$.
\begin{theorem} \label{thm:consistencygeneral}
Let the $\ell_\dual{p}$-norm be the dual of the $\ell_p$-norm, i.e., $\frac{1}{p} + \frac{1}{\dual{p}} = 1$.
Let Assumption \ref{asm:incoherence} hold for the $\ell_\dual{p}$-norm and $\alpha_\dual{p} \in (0,1]$.
Let $\BSigmat \equiv \{\sigmat_{n_1n_2}\si{k}\}$.
Assume that for each $n$ and $k$, the random variable $x_n\si{k} / \sqrt{\sigmat_{nn}\si{k}}$ is zero-mean and has $4\m$-th moments upper bounded by $C_1$.
Assume that we are given $M$ i.i.d. samples for each of the $K$ tasks.
Set the regularization parameter in Eq.~\eqref{eq:multitaskggm} to $\rho = C_2 \eps K^{1/\dual{p}}/\alpha_\dual{p}$ for some $C_2 \geq 8$, where:
\begin{align*}
\eps = \sqrt{\frac{K^{1/\m} N^{\tau/\m}}{M}} \left( 2\m(\m(C_1+1))^{\frac{1}{2\m}} \max_{nk}{\sigmat_{nn}\si{k}} \right) \; ,
\end{align*}
\noindent for some $\tau > 2$.
If the sample size $M$ satisfies the bound:
\begin{align*}
M \geq d_{\BOmegat}^2 {\rm\ } K^{1/\m} N^{\tau/\m} \left( 12\m(\m(C_1+1))^{\frac{1}{2\m}} \max_{nk}{\sigmat_{nn}\si{k}} \max{(C_{\BSigmat} C_{\BGammat}, C_{\BSigmat}^3 C_{\BGammat}^2)} \right)^2 \left(1+\frac{C_2K^{1/\dual{p}}}{\alpha_\dual{p}}\right)^4 \; ,
\end{align*}
\noindent then with probability at least $1-1/N^{\tau-2}$, we have:
\begin{align*}
\textstyle{ (\forall k){\rm\ }\|\BOmegat\si{k} - \BOmegah\si{k}\|_\infty \leq 2 \eps C_{\BGammat} \left(1+\frac{C_2K^{1/\dual{p}}}{\alpha_\dual{p}}\right) } \; .
\end{align*}
Furthermore, the support union and the edge signs of the empirical minimizer $\BOmegah \equiv \{\omegah_{n_1n_2}\si{k}\}$ are equal to those of the true model $\BOmegat \equiv \{\omegat_{n_1n_2}\si{k}\}$.
That is:
\begin{align*}
\Sup_{\BOmegah} = \Sup_{\BOmegat}\text{\ \ \ and\ \ \ }(\forall k,(n_1,n_2) \in \Sup_{\BOmegat}){\rm\ } & \textstyle{ \left( \omegat_{n_1n_2}\si{k} = 0 \text{\ \ and\ \ } |\omegah_{n_1n_2}\si{k}| \leq 2 \eps C_{\BGammat} \left(1+\frac{C_2K^{1/\dual{p}}}{\alpha_\dual{p}}\right) \right) } \\
 & \text{or\ \ } (\omegat_{n_1n_2}\si{k} > 0 \text{\ \ and\ \ } \omegah_{n_1n_2}\si{k} > 0) \\
 & \text{or\ \ } (\omegat_{n_1n_2}\si{k} < 0 \text{\ \ and\ \ } \omegah_{n_1n_2}\si{k} < 0) \; ,
\end{align*}
\noindent provided that:
\begin{align*}
\textstyle{ (\forall k,(n_1,n_2) \in \Sup_{\BOmegat}){\rm\ }\omegat_{n_1n_2}\si{k} = 0 \text{\ \ \ or\ \ \ } |\omegat_{n_1n_2}\si{k}| > 4 \eps C_{\BGammat} \left(1+\frac{C_2K^{1/\dual{p}}}{\alpha_\dual{p}}\right) } \; .
\end{align*}
\end{theorem}

\subsection{Comparison to the Single-Task Problem}

Next, we compare the statistical efficiency of the \emph{multi-task} problem versus that of the \emph{single-task} problem.
First, we show the sharp \emph{phase transition} between the recovery success and failure of both methods.
Second, we show a regime in which multi-task regularization requires less amount of data than its single-task counterpart.
The analysis of sharp phase transition is difficult for general random variables and arbitrary graph topologies.
For instance, specific Gaussian ensembles have been analyzed in the linear regression literature \citep{Negahban11,Obozinski11,Wainwright09b}.
Here, we assume Gaussian random variables and that all tasks have exactly the same set of edges.
\begin{theorem} \label{thm:tight}
Assume that $\x\si{k}$ follows a zero-mean multivariate Gaussian distribution for each $k$.
Assume that we are given $M$ i.i.d. samples for each of the $K$ tasks.
Set the regularization parameter in Eq.~\eqref{eq:multitaskggm} to $\rho = \eps K^{1/\dual{p}}$ for some $\eps \in (0,1/40)$.
Fix $\delta \in (0,1)$.
There is a true model $\BOmegat$ with support $\Supt$, such that if the sample size $M$ satisfies the bound:
\begin{align} \label{eq:tightsufficient}
M \geq \frac{8}{\eps^2} \left(\log K + 2\log N + \log{\frac{2}{\delta}}\right) \; ,
\end{align}
then the support union is correctly recovered (i.e., $\Sup_{\BOmegah} = \Supt$) with probability at least $1-\delta$.
Furthermore, if the sample size $M$ satisfies the bound:
\begin{align} \label{eq:tightnecessary}
M \leq \frac{1}{\eps^2 K^{2/\dual{p}}} \left(1+\log{\frac{1}{1-\delta^{2/(NK)}}} + \sqrt{1+2\log{\frac{1}{1-\delta^{2/(NK)}}}}\right) \; ,
\end{align}
then the support union is not correctly recovered (i.e., $\Sup_{\BOmegah} \neq \Supt$) with probability at least $1-\delta$.
\end{theorem}
Note that for both multi-task and single-task problems, the sufficient number of samples for the correct support union recovery in Eq.\eqref{eq:tightsufficient} is $\O(\log K + \log N)$.
For the single-task problem ($p=1$, $\dual{p}=\infty$), the necessary number of samples for the correct support union recovery in Eq.\eqref{eq:tightnecessary} is $\O(\log K + \log N)$.
For the $\ell_{1,2}$ multi-task problem ($p=2$, $\dual{p}=2$), the bound in Eq.\eqref{eq:tightnecessary} is $\O(K^{-1}(\log K + \log N))$.
For the $\ell_{1,\infty}$ problem ($p=\infty$, $\dual{p}=1$), the bound in Eq.\eqref{eq:tightnecessary} is $\O(K^{-2}(\log K + \log N))$.
Thus, as more tasks are available, multi-task regularization requires less samples in order to avoid failure.

\subsection{Information-Theoretic Lower Bound}

In the previous analyses, we provided the sufficient number of samples for correctly recovering the true model.
Here, we analyze the necessary number of samples for any conceivable algorithm.
Since the following result is method-independent, it does not make use of any specific regularization technique.
Next, we state our result based on information-theoretic arguments.
We assume Gaussian random variables and models generated uniformly at random from a finite ensemble, as in \citep{Wang10}.
For simplicity, we assume that all tasks have exactly the same set of edges.
\begin{theorem} \label{thm:inftheorylower}
Fix a node-degree $d$.
Let $u$ be the index of a model with precision matrices $\BOmegat\si{u,k}$ for each task $k$, each with support $\Supt\si{u}$ of node-degree $d$.
Assume that $u$ is chosen uniformly at random from a finite ensemble of models $\U$.
Furthermore, assume that the following holds simultaneously for all models $u \in \U$:
\begin{align*}
\min_{k,(n_1,n_2) \in \Supt\si{u}}{\frac{|\omegat_{n_1n_2}\si{u,k}|}{\sqrt{\omegat_{n_1n_1}\si{u,k} \omegat_{n_2n_2}\si{u,k}}}} \geq \lambda \; .
\end{align*}
For a fixed true model $u$, assume that $\x\si{k}$ follows a zero-mean multivariate Gaussian distribution for each $k$.
Assume that we are given a data set $\DD$ of $M$ i.i.d. samples for each of the $K$ tasks.
Assume that an algorithm acts on the data set $\DD$ and returns a precision matrix $\BOmega\si{k}(\DD) \succ \zero$ for each task $k$, with support union $\Sup(\DD)$.
Fix $\lambda \in (0,1/2]$.
There is an ensemble of models $\U$, such that if the sample size $M$ satisfies the bound:
\begin{align} \label{eq:inftheorylower1}
M < \min{\left( \frac{\log{\binom{KN-d}{2}}-1}{4\lambda^2} \; , \; \frac{\log{\binom{KN}{d}}-1}{\frac{1}{2}\left(\log{(1+\frac{d\lambda}{1-\lambda})}-\frac{d\lambda}{1+(d-1)\lambda}\right)} \right)} \; ,
\end{align}
\noindent then the support union is not correctly recovered with probability at least $1/2$.
That is:
\begin{align*}
\P_{u,\DD}\left[ \Sup(\DD) \neq \Supt\si{u} \right] \geq 1/2 \; .
\end{align*}
Fix $\eps \in (0,1/4]$.
There is an ensemble of models $\U$, such that if the sample size $M$ satisfies the bound:
\begin{align} \label{eq:inftheorylower2}
M < \frac{\log K+\log N+\log{(d/4)}-2}{4\eps^2} \; ,
\end{align}
\noindent then the element-wise $\ell_\infty$-norm is large with probability at least $1/2$.
That is:
\begin{align*}
\P_{u,\DD}\left[ (\exists k){\rm\ }\|\BOmega\si{k}(\DD) - \BOmegat\si{u,k}\|_\infty > \eps \right] \geq 1/2 \; .
\end{align*}
\end{theorem}
Note that the bounds in Eq.\eqref{eq:inftheorylower1} and Eq.\eqref{eq:inftheorylower2} are $\O(\log K+\log N)$.
Thus, the necessary number of samples for correct support union recovery in Theorem \ref{thm:inftheorylower} matches the rate of the sufficient number of samples in Theorem \ref{thm:consistencysubgaussian} for sub-Gaussian variates.
Therefore, the polynomial-time method of $\ell_{1,p}$ regularization achieves optimal rates, up to constant factors.

\section{Block Coordinate Descent Method} \label{sec:bcd}

In this section, we present a block coordinate descent method for the $\ell_{1,p}$ multi-task structure learning problem.
There are a plethora of optimization methods to solve general non-smooth convex optimization problems, cf., \citep{Duchi09,Duchi10,Nemirovski09,Xiao10,Yu08}.
In this paper, we apply a block coordinate descent method on the primal problem \citep{Honorio09,Honorio10b,Honorio12}.

Our block coordinate descent algorithm is as follows.
In an outer loop, we cyclically maximize with respect to one row/column of all precision matrices $\BOmega\si{k}$ at a time.
In an inner loop, we cyclically optimize with respect to one entry of such row/column.
In our derivations, the row/column will be the vector $\y\si{k}$ corresponding to the off-diagonal elements of $\BOmega\si{k}$ and the scalar $z\si{k}$ corresponding to the diagonal element of $\BOmega\si{k}$.
The entry of the off-diagonal vector $\y\si{k}$ will be the scalar $x_k$.

More formally, for solving the $\ell_{1,p}$ multi-task structure learning problem in Eq.~\eqref{eq:multitaskggm}, we maximize with respect to one row/column of all precision matrices $\BOmega\si{k}$ at a time.
Without loss of generality, we use the last row/column in our presentation, since permutation of rows and columns is always possible.
Let:
\begin{align} \label{DerivVars}
\BOmega\si{k} = \left[ \begin{array}{cc}
\W\si{k} & \y\si{k} \\
\t{\y\si{k}} & z\si{k}
\end{array} \right]
\; , \;
\BSigmah\si{k} = \left[ \begin{array}{cc}
\S\si{k} & \u\si{k} \\
\t{\u\si{k}} & v\si{k}
\end{array} \right] \; ,
\end{align}
\noindent where $\W\si{k},\S\si{k}\in \R^{(N-1) \times (N-1)}$, $\u\si{k}\in \R^{N-1}$ are constants, and $\y\si{k} \in \R^{N-1}, z\si{k}$ are the variables to be optimized.

In terms of the variables $\y\si{k},z\si{k}$ and the constant matrix $\W\si{k}$, the multi-task structure learning problem in Eq.~\eqref{eq:multitaskggm} can be reformulated as:
\begin{align} \label{DerivMultiTaskGGM}
\max_{(\forall k){\rm\ }\BOmega\si{k}\succ \zero}\left(\begin{array}{l}
  \sum_k{T\si{k}\left(\log(z\si{k}-\t{\y\si{k}}\inv{\W\si{k}}\y\si{k}) -2\t{\u\si{k}}\y\si{k}-v\si{k}z\si{k}\right)} \\
  -2\rho\sum_n\|(y_n\si{1},\dots,y_n\si{K})\|_p
\end{array}\right) \; .
\end{align}
Given the above, the optimal setting of $z\si{1},\dots,z\si{K}$ can be performed in closed form, as we show in the following theorem.
Furthermore, the generated solution $\BOmega\si{k}$ is positive definite as far as the update of $z\si{k}$ is always executed after updating $\y\si{k}$.
\begin{theorem} \label{thm:diagonalstep}
The ``diagonal update step'' of the block coordinate descent method for the $\ell_{1,p}$ multi-task structure learning problem in Eq.~\eqref{eq:multitaskggm} is equivalent to setting:
\begin{align} \label{eq:diagonalstep}
(\forall k){\rm\ }{z\si{k}}^* = \frac{1}{v\si{k}}+\t{\y\si{k}}\inv{\W\si{k}}\y\si{k} \; .
\end{align}
Moreover, the block coordinate descent method generates a sequence of positive definite solutions.
\end{theorem}

In order to optimize the Eq.~\eqref{DerivMultiTaskGGM} with respect to $\y\si{1},\dots,\y\si{K}$, we optimize with respect to one entry at a time.
Without loss of generality, we use the last entry in our presentation, i.e., $\x = \t{(y_{N-1}\si{1},\dots,y_{N-1}\si{K})}$.
As we show in the following theorem, our block coordinate descent algorithm solves a sequence of $\ell_p$ regularized quadratic minimization subproblems parametrized by $\x \in \R^K$.
\begin{theorem} \label{thm:lpquad}
The ``off-diagonal update step'' of the block coordinate descent method for the $\ell_{1,p}$ multi-task structure learning problem in Eq.~\eqref{eq:multitaskggm} is equivalent to solving a sequence of strictly convex $\ell_p$ regularized separable quadratic subproblems:
\begin{align} \label{eq:lpquad}
\min_{\x \in \mathbb{R}^K}\left(\frac{1}{2}\t{\x}\diag(\q)\x-\t{\c}\x+\rho\|\x\|_p\right) \; ,
\end{align}
\noindent where $\x = \t{(y_{N-1}\si{1},\dots,y_{N-1}\si{K})}$.
The factors $\q>\zero$ and $\c$ are defined in terms of the constants $T\si{k}$, $v\si{k}$, $\inv{\W\si{k}}$ and $\u\si{k}$, as well as the constant part of $\y\si{k}$.
\end{theorem}
The regularized separable quadratic problem in Eq.~\eqref{eq:lpquad} has two special instances which can be solved either in closed form or by one-dimensional optimization.
As shown in the following theorem,\footnote{The proof connects the original problem to the continuous quadratic knapsack problem.} Eq.~\eqref{eq:lpquad} can be solved in closed form when $p=\infty$.
\begin{theorem} \label{thm:quadknapsack}
Let $\pi$ be a permutation of the indices ${1,2,\dots,K}$ that sorts the breakpoints in decreasing order, i.e.,
$\frac{|c_{\pi_1}|}{q_{\pi_1}} \geq
\frac{|c_{\pi_2}|}{q_{\pi_2}} \geq \dots \geq
\frac{|c_{\pi_K}|}{q_{\pi_K}} \geq
\frac{|c_{\pi_{K+1}}|}{q_{\pi_{K+1}}}\equiv 0$.
Let ${g_k(\nu)=\max(0,|c_k|-\nu q_k)}$ and $k^*$ be the range in which:
\begin{align} \label{eq:quadknapsackrange}
\t{\one}\g\left(\frac{|c_{\pi_{k^*}}|}{q_{\pi_{k^*}}}\right) \leq \rho \leq \t{\one}\g\left(\frac{|c_{\pi_{k^*+1}}|}{q_{\pi_{k^*+1}}}\right) \; .
\end{align}
For $\q>\zero$, $\rho >0$, $p=\infty$, the $\ell_p$ regularized separable quadratic problem in Eq.~\eqref{eq:lpquad} has the optimal solution:
\begin{align} \label{LinfQuadSolution}
\|\c\|_1\leq \rho & \Rightarrow \x^*=\zero \nonumber \\
\|\c\|_1> \rho \text{\ and\ } k>k^* & \Rightarrow \textstyle{ x_{\pi_k}^* = \frac{c_{\pi_k}}{q_{\pi_k}} } \nonumber \\
\|\c\|_1> \rho \text{\ and\ } k\leq k^* & \Rightarrow \textstyle{ x_{\pi_k}^* = \sgn(c_{\pi_k})\frac{\sum_{k=1}^{k^*}{|c_{\pi_k}|}-\rho}{\sum_{k=1}^{k^*}{q_{\pi_k}}} } \; .
\end{align}
\end{theorem}
As shown in the following theorem,\footnote{The proof connects the original problem to the separable quadratic trust-region problem.} Eq.~\eqref{eq:lpquad} can be solved by the one-dimensional Newton-Raphson method when $p=2$.
\begin{theorem} \label{thm:quadtrustreg}
Let $\lambda^*$ be the optimal solution to the one-dimensional problem:
\begin{align} \label{eq:quadtrustregdual}
\min_{\lambda\geq 0}{\left(\sum_n\frac{c_n^2}{q_n+\lambda q_n^2}+\rho^2\lambda\right)} \; .
\end{align}
For $\q>\zero$, $\rho >0$, $p=2$, the $\ell_p$ regularized separable quadratic problem in Eq.~\eqref{eq:lpquad} has the optimal solution:
\begin{align} \label{L2QuadSolution}
\|\c\|_2\leq \rho & \Rightarrow \x^*=\zero \nonumber \\
\|\c\|_2> \rho & \Rightarrow \x^*=\lambda^*\inv{\diag(\one+\lambda^*\q)}\c \; .
\end{align}
\end{theorem}

\begin{algorithm}[t]
\begin{footnotesize}
\caption{\textbf{Block coordinate descent algorithm for $\ell_{1,p}$ multi-task learning of Gaussian graphical models.}
A Matlab implementation of the algorithm is available at \url{http://people.csail.mit.edu/jhonorio/ggms.zip} }
\label{Algorithm}
\begin{algorithmic}
    \STATE {\bfseries Input:} $\rho >0$, for each $k$, $\BSigmah\si{k}\succeq \zero$, $T\si{k}>0$
    \STATE Initialize for each $k$, $\BOmega\si{k}=\inv{\diag(\BSigmah\si{k})}$
    \FOR{ each iteration $1,\dots,L$ and each variable $1,\dots,N$}
        \STATE Split for each $k$, $\BOmega\si{k}$ into $\W\si{k},\y\si{k},z\si{k}$ and $\BSigmah\si{k}$ into $\S\si{k},\u\si{k},v\si{k}$ as described in Eq.~\eqref{DerivVars}
        \STATE Update for each $k$, $\inv{\W\si{k}}$ by using the Sherman-Woodbury-Morrison formula (Note that when iterating from one variable to the next one, only one row/column change on matrix $\W\si{k}$)
        \FOR{ each entry $n = 1,\dots,N-1$ and $\x=\t{(y_n\si{1},\dots,y_n\si{K})}$}
            \STATE For $p=\infty$, solve the $\ell_\infty$ regularized separable quadratic problem by Theorem \ref{thm:quadknapsack}, either by sorting the breakpoints or using medians of breakpoint subsets. For $p=2$, solve the $\ell_2$ regularized separable quadratic problem by Theorem \ref{thm:quadtrustreg}, by using the Newton-Raphson method.
        \ENDFOR
        \STATE Update for each $k$, $z\si{k} \leftarrow \frac{1}{v\si{k}}+\t{\y\si{k}}\inv{\W\si{k}}\y\si{k}$ (Note that after this step, $\BOmega\si{k}\succ\zero$)
    \ENDFOR
    \STATE {\bfseries Output:} for each $k$, $\BOmega\si{k}\succ \zero$
\end{algorithmic}
\end{footnotesize}
\end{algorithm}

Algorithm \ref{Algorithm} shows the block coordinate descent method in detail.
The algorithm has a time complexity of $\O(LN^3K)$ for $L$ iterations, $N$ variables and $K$ tasks.
In our experiments, we used $L=10$ iterations and observed that the algorithm usually reaches a plateau.
Algorithm \ref{Algorithm} is provably convergent.
Additionally, as a preprocessing step, we can reduce the size of the original problem by removing nodes that are not endpoints of any edge in the optimal solution.
(See Appendix \ref{sec:algorithm} for details.)

\section{Experimental Results} \label{sec:results}

In this section, we present our experiments on synthetic as well as real-world data sets.
For comparison purposes, we used the graphical lasso \citep{Friedman07} as the \emph{single-task} method.
We used the graphical lasso under two scenarios: by either pooling the data from all the tasks together and learning a single model, or by learning models independently per task.
We also compared our results with the methods of \citet{Chiquet11} (\url{http://cran.r-project.org/web/packages/simone/}), \citet{Guo11} and \citet{Varoquaux10}.
We did not include the method of \citet{Mohan14}, since their particular implementation (\url{http://faculty.washington.edu/mfazel/}) considers only two tasks.
Fortunately, as we discussed before, the regularizer in \citep{Mohan14} subsumes the problem that we analyze here, since it includes an additional $\ell_1$-norm regularization.
Thus, it does not differ much from the methods we experimentally compared here.

\subsection{Synthetic Experiments}

\begin{figure}
\begin{center}
\includegraphics[width=0.5\linewidth]{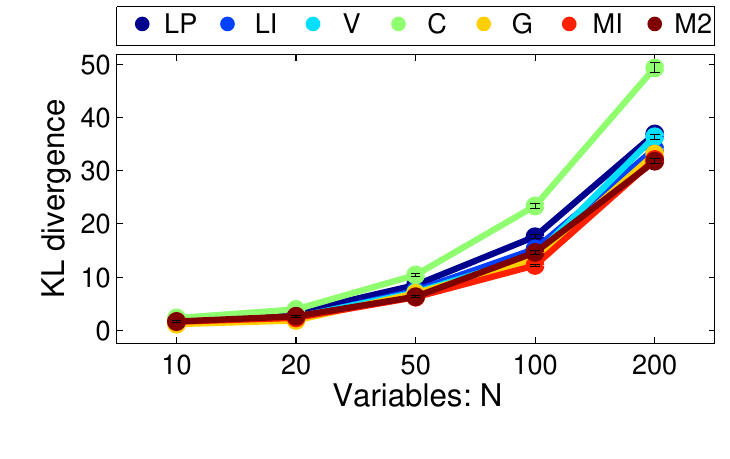}\includegraphics[width=0.5\linewidth]{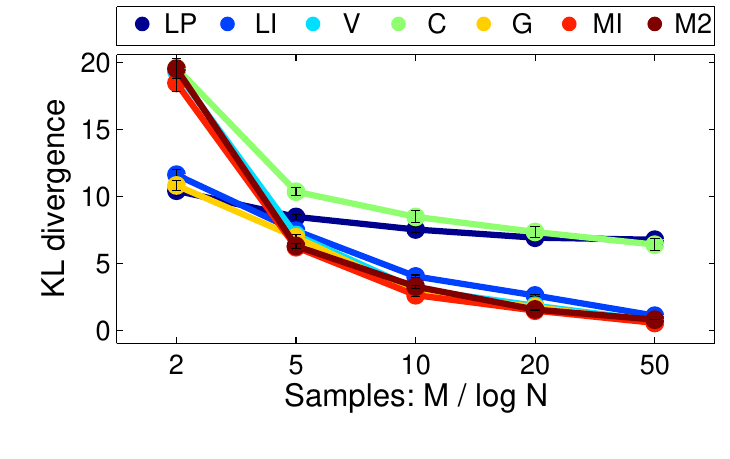} \\
\vspace{-0.25in}\makebox[0.5\linewidth]{\footnotesize \makebox[0.35in]{} (a) $M=5\log N$, $K=5$, overlap 1, density .05}\makebox[0.5\linewidth]{\footnotesize \makebox[0.35in]{} (b)  $N=50$, $K=5$, overlap 1, density .05} \\
\vspace{-0.1in}\makebox[1\linewidth]{} \\
\includegraphics[width=0.5\linewidth]{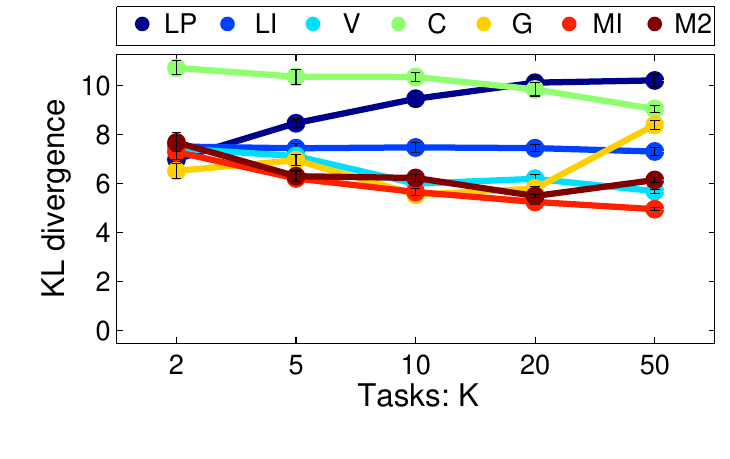}\includegraphics[width=0.5\linewidth]{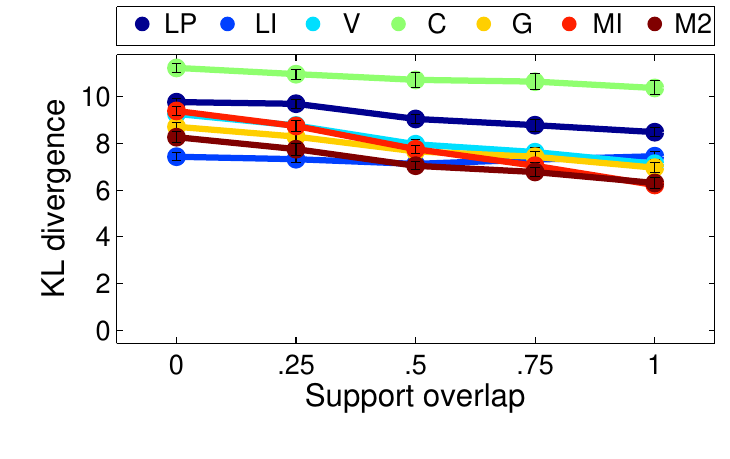} \\
\vspace{-0.25in}\makebox[0.5\linewidth]{\footnotesize \makebox[0.35in]{} (c) $N=50$, $M=5\log N$, overlap 1, density .05}\makebox[0.5\linewidth]{\footnotesize \makebox[0.35in]{} (d)  $N=50$, $M=5\log N$, $K=5$, density .05} \\
\vspace{-0.1in}\makebox[1\linewidth]{} \\
\includegraphics[width=0.5\linewidth]{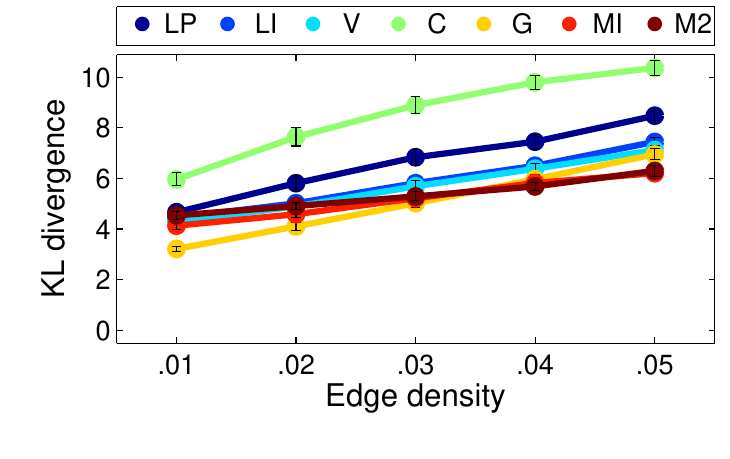} \\
\vspace{-0.25in}\makebox[0.5\linewidth]{\footnotesize \makebox[0.35in]{} (e) $N=50$, $M=5\log N$, $K=5$, overlap 1} \\
\end{center}
\vspace{-0.2in}
\captionx{
Kullback-Leibler (KL) divergence between the models and the ground truth, for different (a) number of variables, (b) number of samples, (c) number of tasks, (d) support overlap between tasks and (e) edge density.
We include the graphical lasso, by either pooling the data from all the tasks together and learning a single model (LP), or by learning models independently per task (LI); the methods of \citeauthor{Varoquaux10} (V), \citeauthor{Chiquet11} (C), \citeauthor{Guo11} (G), and the $\ell_{1,\infty}$ (MI) and $\ell_{1,2}$ (M2) multi-task methods.
(Error bars at $90\%$ significance level.
The regularization parameter was selected in a validation set.)
In most of the cases, our multi-task methods (MI, M2) have equal or better (lower) KL divergence than the comparison methods (LP,LI,V,C,G).
In some cases, our multi-task methods (MI,M2) are outperformed by the comparison methods, specifically for very small number of samples, for low support overlap between tasks and for low edge density.
}
\label{fig:kldsynthetic}
\end{figure}

\begin{figure}
\begin{center}
\includegraphics[width=0.5\linewidth]{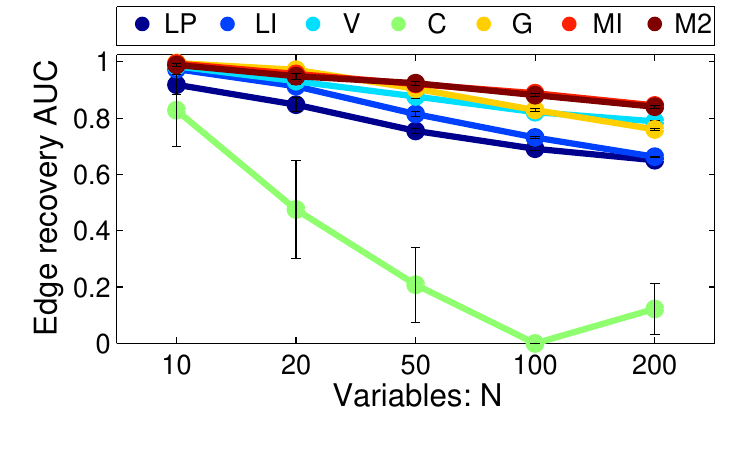}\includegraphics[width=0.5\linewidth]{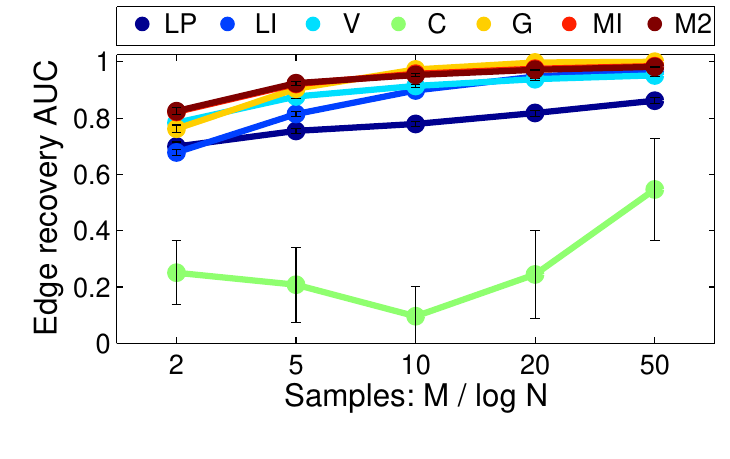} \\
\vspace{-0.25in}\makebox[0.5\linewidth]{\footnotesize \makebox[0.35in]{} (a) $M=5\log N$, $K=5$, overlap 1, density .05}\makebox[0.5\linewidth]{\footnotesize \makebox[0.35in]{} (b)  $N=50$, $K=5$, overlap 1, density .05} \\
\vspace{-0.1in}\makebox[1\linewidth]{} \\
\includegraphics[width=0.5\linewidth]{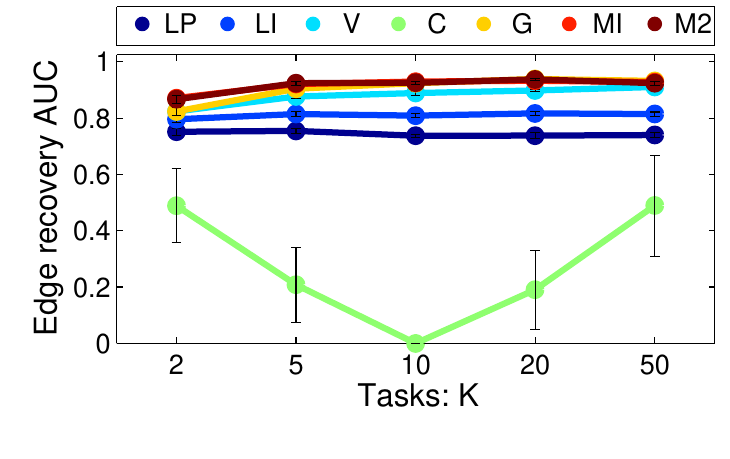}\includegraphics[width=0.5\linewidth]{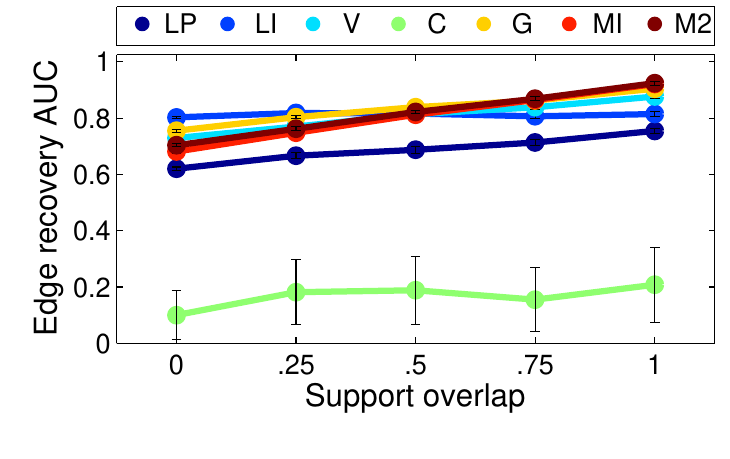} \\
\vspace{-0.25in}\makebox[0.5\linewidth]{\footnotesize \makebox[0.35in]{} (c) $N=50$, $M=5\log N$, overlap 1, density .05}\makebox[0.5\linewidth]{\footnotesize \makebox[0.35in]{} (d)  $N=50$, $M=5\log N$, $K=5$, density .05} \\
\vspace{-0.1in}\makebox[1\linewidth]{} \\
\includegraphics[width=0.5\linewidth]{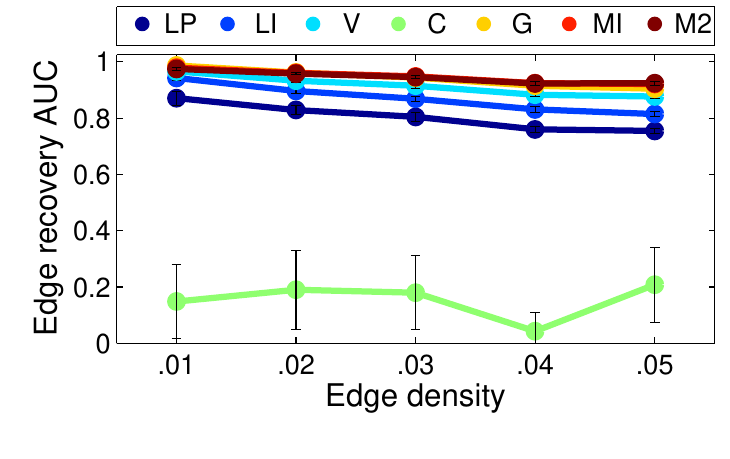} \\
\vspace{-0.25in}\makebox[0.5\linewidth]{\footnotesize \makebox[0.35in]{} (e) $N=50$, $M=5\log N$, $K=5$, overlap 1} \\
\end{center}
\vspace{-0.2in}
\captionx{
Area under the curve (AUC) of sensitivity vs. specificity of edge recovery for learnt models as the regularization level is varied, for different (a) number of variables, (b) number of samples, (c) number of tasks, (d) support overlap between tasks and (e) edge density.
We include the graphical lasso, by either pooling the data from all the tasks together and learning a single model (LP), or by learning models independently per task (LI); the methods of \citeauthor{Varoquaux10} (V), \citeauthor{Chiquet11} (C), \citeauthor{Guo11} (G), and the $\ell_{1,\infty}$ (MI) and $\ell_{1,2}$ (M2) multi-task methods.
(Error bars at $90\%$ significance level.)
In most of the cases, our multi-task methods (MI, M2) have equal or better (higher) AUC than the comparison methods (LP,LI,V,C,G).
In some cases, our multi-task methods (MI,M2) are outperformed by the comparison methods, specifically for low support overlap between tasks.
}
\label{fig:aucsynthetic}
\end{figure}

We begin with a synthetic example to test the ability of the method to recover the ground truth structure from data.
The model contains $N \in \{10,20,50,100,200\}$ variables and $K \in \{2,5,10,20,50\}$ tasks.
For each of $20$ repetitions, we perform the following steps.
For each task $k$, we generate a Gaussian graphical model $\BOmega\si{k}_g$ with a required edge density ($0.01,0.02,\dots,0.05$).
The weight of each edge in $\BOmega\si{k}_g$ is generated uniformly at random from $\{-1,+1\}$.
We ensure that the support overlap (i.e., the intersection of the edge sets across tasks) is equal to a prescribed proportion ($0,0.25,\dots,1$) of the support union.
We guarantee positive definiteness of $\BOmega\si{k}_g$ by verifying that its minimum eigenvalue is at least $0.1$.
We then generate $M \in \{2 \log N,5 \log N,10 \log N,20 \log N,50 \log N\}$ samples for training and $M$ samples for validation, from each Gaussian graphical model $\BOmega\si{k}_g$.
We use the validation set for selecting the optimal regularization parameter.
Since all the tasks have the same number of samples, we set $(\forall k){\rm\ }T\si{k} = 1$.

Figure \ref{fig:kldsynthetic} shows the Kullback-Leibler divergence between the models and the ground truth, for different number of variables, samples, tasks, support overlap between tasks, and edge density.
The regularization parameter was selected in a validation set.
Figure \ref{fig:aucsynthetic} shows the area under the curve of sensitivity vs. specificity of edge recovery for learnt models as the regularization level is varied, under the same scenarios as in Figure \ref{fig:kldsynthetic}.
Both our $\ell_{1,\infty}$ and $\ell_{1,2}$ multi-task methods produce better probability distributions (lower Kullback-Leibler divergence) than the comparison methods, except in the case of very small number of samples ($<5\log N$), low support overlap between tasks ($<0.5$), or for low edge density ($<0.03$).
Additionally, our multi-task methods recover the ground truth edges better (higher area under the curve) than the comparison methods, except in the case of low support overlap between tasks ($<0.5$).

\subsection{Real-World Data Sets}

\begin{table}
\begin{center}
\begin{footnotesize}
\begin{tabular}{l@{\hspace{0.07in}}r@{\hspace{0.07in}}r@{\hspace{0.07in}}l@{\hspace{0.07in}}r@{\hspace{0.07in}}r@{\hspace{0.07in}}l@{\hspace{0.07in}}r@{\hspace{0.07in}}r}
  \hline
  \textbf{Site} & \textbf{Subjects} & \textbf{Scans} & \textbf{Site} & \textbf{Subjects} & \textbf{Scans} & \textbf{Site} & \textbf{Subjects} & \textbf{Scans} \\
  \hline
  AnnArbor\_a & 23 & 295 & Cleveland1 & 17 & 125 & NewYorkA2 & 24 & 192 \\
  Baltimore & 46 & 120 & Cleveland2 & 14 & 125 & NewYorkB & 20 & 168 \\
  Bangor & 20 & 256 & Dallas & 24 & 114 & Newark & 19 & 135 \\
  Beijing1 & 40 & 225 & ICBM & 42 & 128 & Ontario & 11 & 100 \\
  Beijing2 & 42 & 225 & Leiden1 & 12 & 210 & Orangeburg & 20 & 162 \\
  Beijing3 & 41 & 225 & Leiden2 & 19 & 210 & Oulu1 & 57 & 243 \\
  Beijing4 & 30 & 225 & Leipzig & 37 & 192 & Oulu2 & 47 & 243 \\
  Beijing5 & 45 & 225 & NYU\_TRT1A & 13 & 192 & Oxford & 22 & 175 \\
  Berlin & 26 & 192 & NYU\_TRT1B & 12 & 192 & PaloAlto & 17 & 234 \\
  Cambridge1 & 48 & 117 & NYU\_TRT2A & 13 & 192 & Queensland & 18 & 189 \\
  Cambridge2 & 46 & 117 & NYU\_TRT2B & 12 & 192 & SaintLouis & 31 & 125 \\
  Cambridge3 & 49 & 117 & NYU\_TRT3A & 13 & 192 & Taipei\_a & 13 & 256 \\
  Cambridge4 & 55 & 117 & NYU\_TRT3B & 12 & 192 & Taipei\_b & 8 & 160 \\
  CambridgeWG & 35 & 144 & NewYorkA1 & 35 & 192 & & & \\
  \hline
\end{tabular}
\end{footnotesize}
\end{center}
\vspace{-0.15in}
\captionx{Number of subjects per collection site and number of scans per subject in the \emph{1000 functional connectomes} data set.}
\label{tab:sitesconnectomes}
\end{table}

\begin{table}
\begin{center}
\begin{footnotesize}
\begin{tabular}{l@{\hspace{0.07in}}l@{\hspace{0.07in}}r}
  \hline
  \textbf{Abbreviation} & \textbf{Cancer Type} & \textbf{Subjects} \\
  \hline
  BRCA & Breast invasive carcinoma & 590 \\
  COAD & Colon adenocarcinoma & 174 \\
  GBM & Glioblastoma multiforme & 595 \\
  LUSC & Lung squamous cell carcinoma & 155 \\
  OV & Ovarian serous cystadenocarcinoma & 590 \\
  \hline
\end{tabular}
\end{footnotesize}
\end{center}
\vspace{-0.15in}
\captionx{Number of subjects per cancer type in the \emph{cancer genome atlas} data set.}
\label{tab:typescancer}
\end{table}

We chose two real-world data sets for experimental validation: the \emph{1000 functional connectomes} data set and the \emph{cancer genome atlas} data set.

The \emph{1000 functional connectomes} data set contains resting-state fMRI of $1128$ subjects collected on $41$ sites around the world.
The data set is publicly available at \url{http://www.nitrc.org/projects/fcon_1000/}.
Resting-state fMRI is a procedure that captures brain function of a subject that is at wakeful rest (i.e., not focused on the outside world).
Registration of the data set to the same spatial reference template (Talairach space) and spatial smoothing was performed in SPM2 (\url{http://www.fil.ion.ucl.ac.uk/spm/}).
We extracted voxels from the gray matter only, and grouped them into $157$ regions by using standard labels (See Appendix \ref{sec:regionsconnectomes}), given by the Talairach Daemon (\url{http://www.talairach.org/}).
These regions span the entire brain: cerebellum, cerebrum and brainstem.
In order to capture laterality effects, we have regions for the left and right side of the brain.
Table \ref{tab:sitesconnectomes} shows the number of subjects per collection site as well as the number of scans per subject.

The \emph{cancer genome atlas} data set contains gene expression data of $2360$ subjects for various types of cancer.
The data set is publicly available at \url{http://tcga-data.nci.nih.gov/tcga/}.
We used $187$ genes commonly regulated in cancer (See Appendix \ref{sec:genescancer}) that were identified on independent data sets by \citet{Lu07}.
Table \ref{tab:typescancer} shows the number of subjects per collection site as well as the number of scans per subject.

\begin{figure}
\begin{center}
\includegraphics[width=0.5\linewidth]{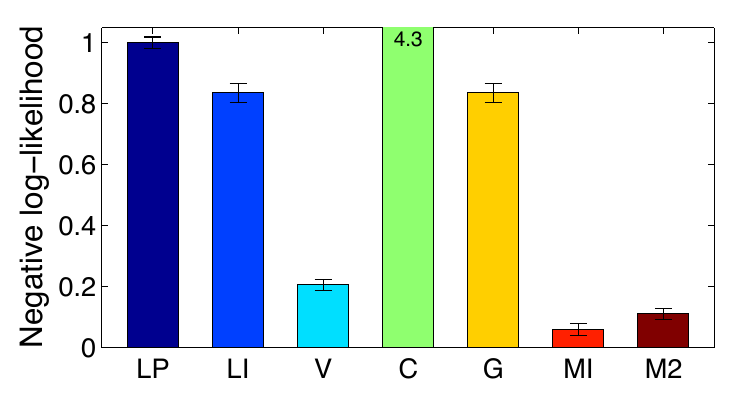}\includegraphics[width=0.5\linewidth]{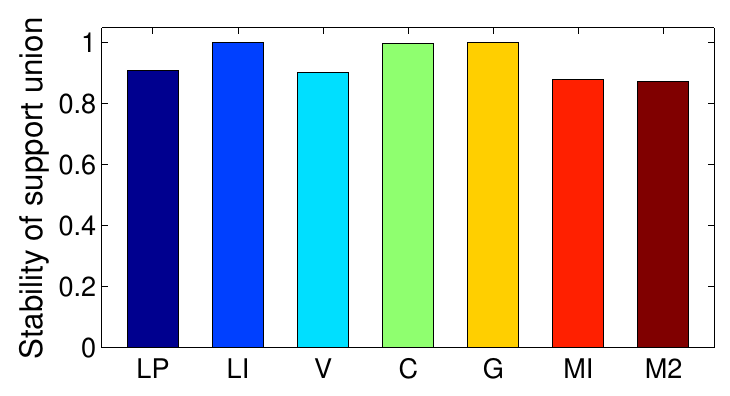} \\
\vspace{-0.125in}\makebox[0.5\linewidth]{\footnotesize \makebox[0.35in]{} (a)}\makebox[0.5\linewidth]{\footnotesize \makebox[0.35in]{} (b)} \\
\includegraphics[width=0.5\linewidth]{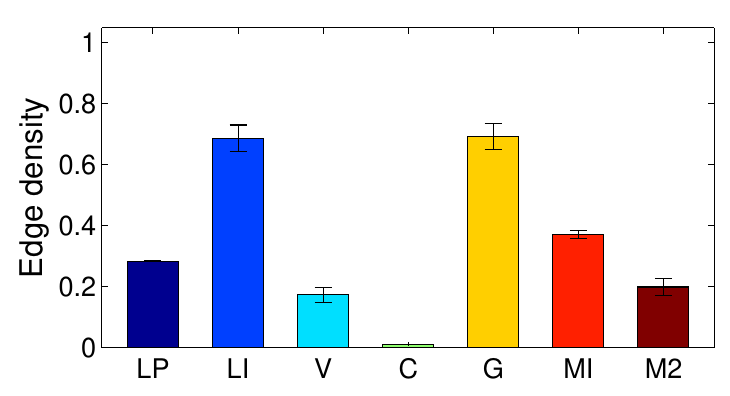}\includegraphics[width=0.5\linewidth]{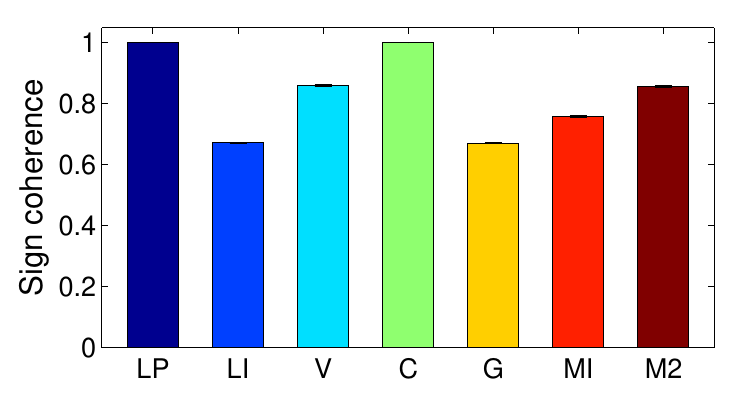} \\
\vspace{-0.125in}\makebox[0.5\linewidth]{\footnotesize \makebox[0.35in]{} (c)}\makebox[0.5\linewidth]{\footnotesize \makebox[0.35in]{} (d)} \\
\includegraphics[width=0.5\linewidth]{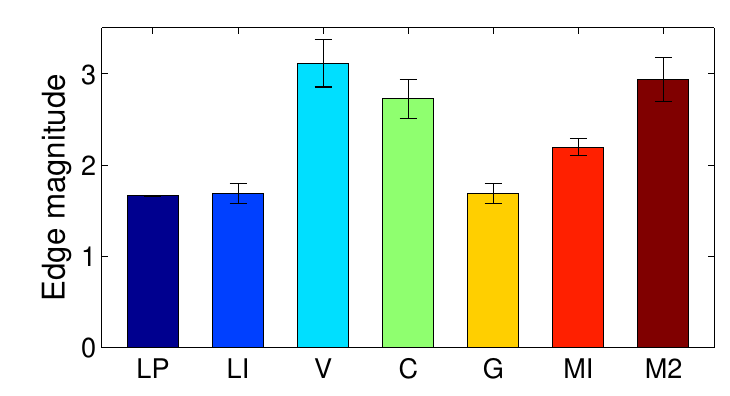}\includegraphics[width=0.5\linewidth]{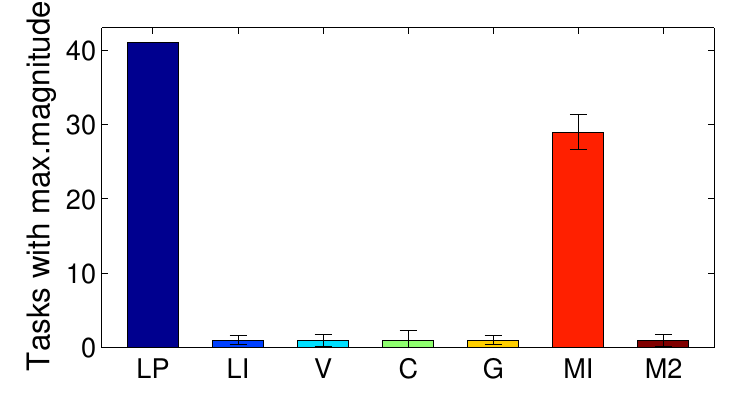} \\
\vspace{-0.125in}\makebox[0.5\linewidth]{\footnotesize \makebox[0.35in]{} (e)}\makebox[0.5\linewidth]{\footnotesize \makebox[0.35in]{} (f)} \\
\end{center}
\vspace{-0.2in}
\captionx{
(a) test negative log-likelihood, (b) stability of the support union across tasks, (c) edge density, (d) sign coherence across tasks, (e) average edge magnitude and (f) tasks where edges have maximum magnitude, for structures learnt for the \emph{1000 functional connectomes} data set.
We include the graphical lasso, by either pooling the data from all the tasks together and learning a single model (LP), or by learning models independently per task (LI); the methods of \citeauthor{Varoquaux10} (V), \citeauthor{Chiquet11} (C), \citeauthor{Guo11} (G), and the $\ell_{1,\infty}$ (MI) and $\ell_{1,2}$ (M2) multi-task methods.
(Error bars at $90\%$ significance level.
The regularization parameter was selected in a validation set.)
Our multi-task methods (MI, M2) have better (smaller) negative log-likelihood than the comparison methods (LP,LI,V,C,G).
Our multi-task methods (MI, M2) have relatively less stable support union than the comparison methods (LP,LI,V,C,G) since the latter produce different topologies per task and thus, a bigger support union.
M2 structures are sparser, more sign-coherent and have less tasks where edges have maximum magnitude than MI structures.
}
\label{fig:chartsconnectomes}
\end{figure}

\begin{figure}
\begin{center}
\includegraphics[width=0.5\linewidth]{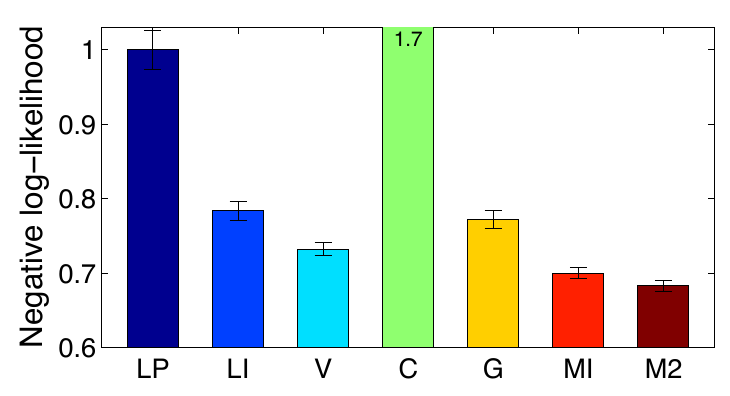}\includegraphics[width=0.5\linewidth]{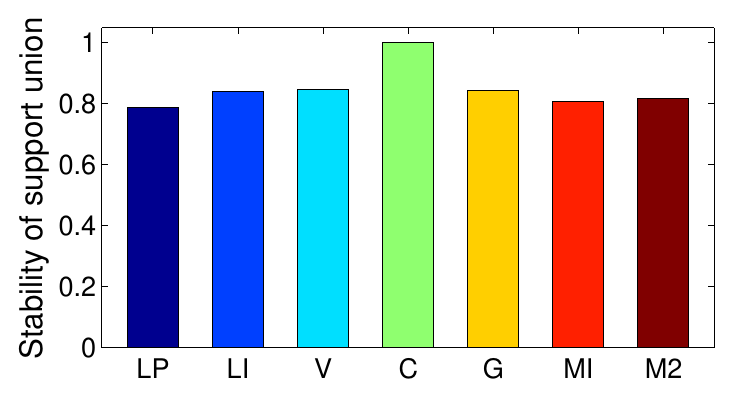} \\
\vspace{-0.125in}\makebox[0.5\linewidth]{\footnotesize \makebox[0.35in]{} (a)}\makebox[0.5\linewidth]{\footnotesize \makebox[0.35in]{} (b)} \\
\includegraphics[width=0.5\linewidth]{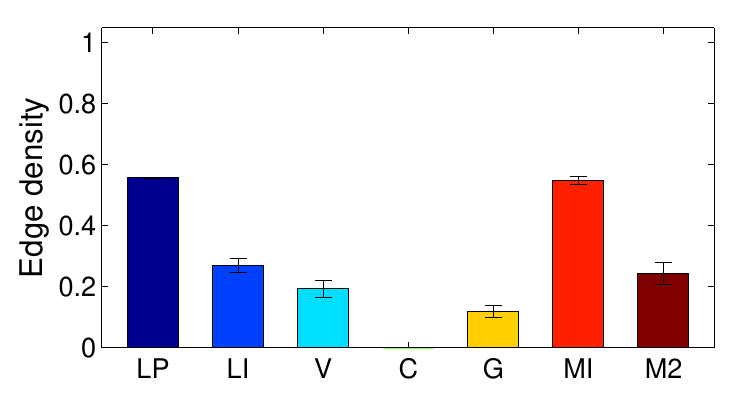}\includegraphics[width=0.5\linewidth]{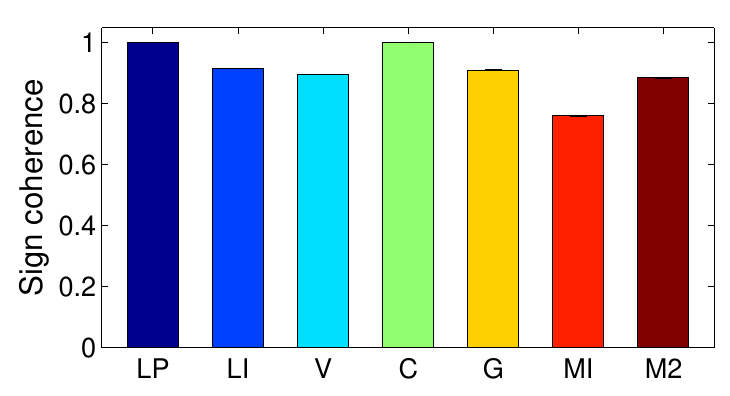} \\
\vspace{-0.125in}\makebox[0.5\linewidth]{\footnotesize \makebox[0.35in]{} (c)}\makebox[0.5\linewidth]{\footnotesize \makebox[0.35in]{} (d)} \\
\includegraphics[width=0.5\linewidth]{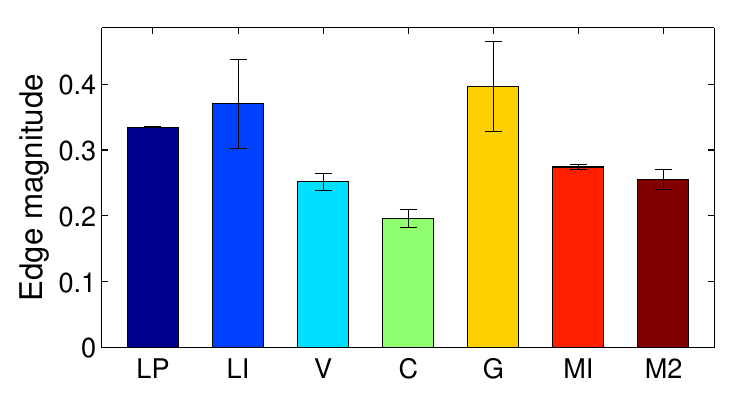}\includegraphics[width=0.5\linewidth]{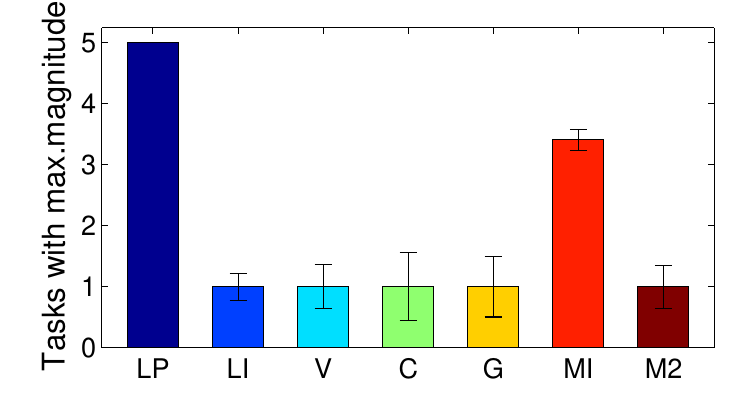} \\
\vspace{-0.125in}\makebox[0.5\linewidth]{\footnotesize \makebox[0.35in]{} (e)}\makebox[0.5\linewidth]{\footnotesize \makebox[0.35in]{} (f)} \\
\end{center}
\vspace{-0.2in}
\captionx{
(a) test negative log-likelihood, (b) stability of the support union across tasks, (c) edge density, (d) sign coherence across tasks, (e) average edge magnitude and (f) tasks where edges have maximum magnitude, for structures learnt for the \emph{cancer genome atlas} data set.
We include the graphical lasso, by either pooling the data from all the tasks together and learning a single model (LP), or by learning models independently per task (LI); the methods of \citeauthor{Varoquaux10} (V), \citeauthor{Chiquet11} (C), \citeauthor{Guo11} (G), and the $\ell_{1,\infty}$ (MI) and $\ell_{1,2}$ (M2) multi-task methods.
(Error bars at $90\%$ significance level.
The regularization parameter was selected in a validation set.)
Our multi-task methods (MI, M2) have better (smaller) negative log-likelihood than the comparison methods (LP,LI,V,C,G).
Our multi-task methods (MI, M2) have relatively less stable support union than the comparison methods (LP,LI,V,C,G) since the latter produce different topologies per task and thus, a bigger support union.
M2 structures are sparser, more sign-coherent and have less tasks where edges have maximum magnitude than MI structures.
}
\label{fig:chartscancer}
\end{figure}

\begin{figure}
\begin{center}
\raisebox{0.1in}{\footnotesize{(L)}}\includegraphics[width=0.25\linewidth,height=1.05in]{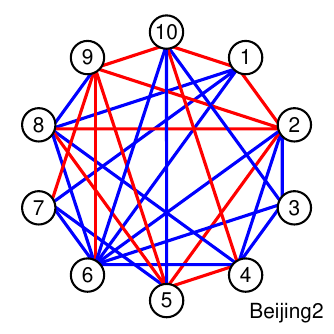}\includegraphics[width=0.25\linewidth,height=1.05in]{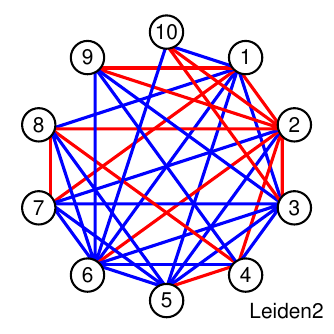}\includegraphics[width=0.25\linewidth,height=1.05in]{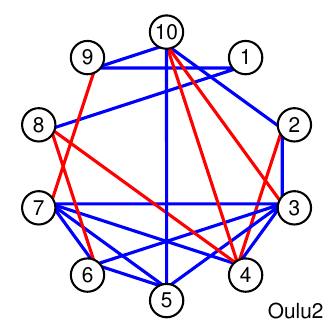} \\
\raisebox{0.1in}{\footnotesize{(V)}}\includegraphics[width=0.25\linewidth,height=1.05in]{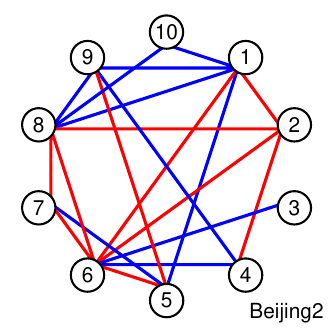}\includegraphics[width=0.25\linewidth,height=1.05in]{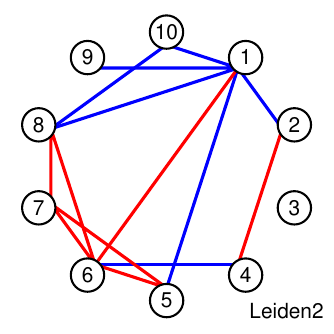}\includegraphics[width=0.25\linewidth,height=1.05in]{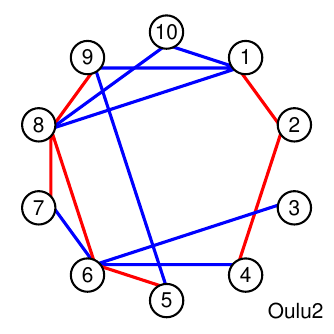} \\
\raisebox{0.1in}{\footnotesize{(C)}}\includegraphics[width=0.25\linewidth,height=1.05in]{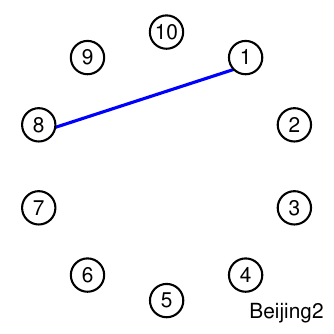}\includegraphics[width=0.25\linewidth,height=1.05in]{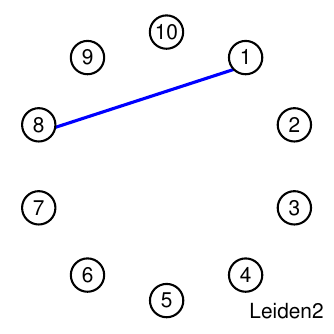}\includegraphics[width=0.25\linewidth,height=1.05in]{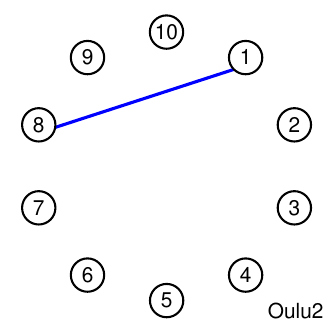} \\
\raisebox{0.1in}{\footnotesize{(G)}}\includegraphics[width=0.25\linewidth,height=1.05in]{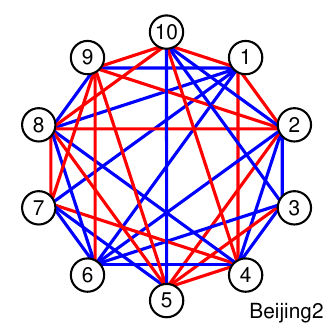}\includegraphics[width=0.25\linewidth,height=1.05in]{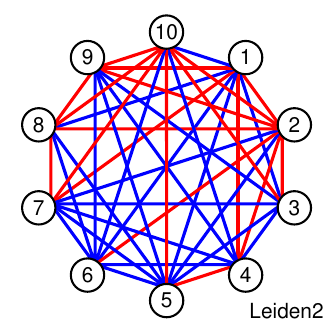}\includegraphics[width=0.25\linewidth,height=1.05in]{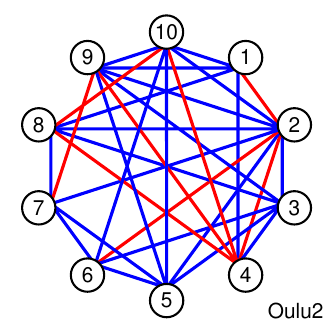} \\
\raisebox{0.1in}{\footnotesize{(MI)}}\includegraphics[width=0.25\linewidth,height=1.05in]{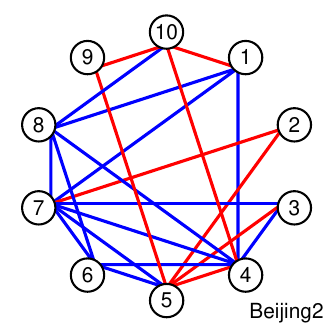}\includegraphics[width=0.25\linewidth,height=1.05in]{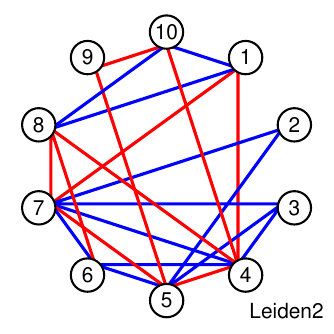}\includegraphics[width=0.25\linewidth,height=1.05in]{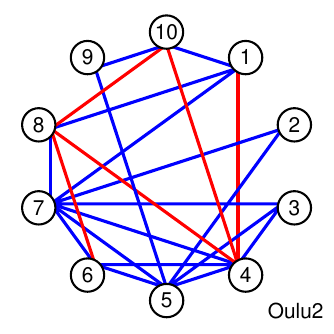} \\
\raisebox{0.1in}{\footnotesize{(M2)}}\includegraphics[width=0.25\linewidth,height=1.05in]{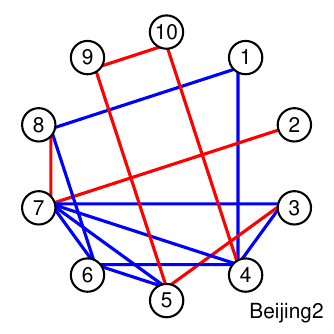}\includegraphics[width=0.25\linewidth,height=1.05in]{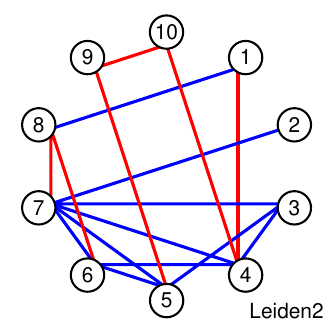}\includegraphics[width=0.25\linewidth,height=1.05in]{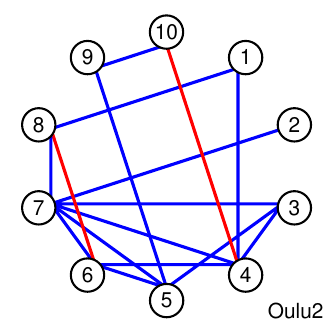} \\
\end{center}
\vspace{-0.2in}
\captionx{
Learnt structures from the \emph{1000 functional connectomes} data set for three randomly selected collection sites (Beijing2, Leiden2, Oulu2).
We show a subgraph of ten randomly selected brain regions (regions from 1 to 10: left Brodmann area 18, 22, 24, 32, mammillary body, putamen, ventral lateral nucleus, right Brodmann area 18, 31, culmen).
We include the graphical lasso (L), the methods of \citeauthor{Varoquaux10} (V), \citeauthor{Chiquet11} (C), \citeauthor{Guo11} (G), and the $\ell_{1,\infty}$ (MI) and $\ell_{1,2}$ (M2) multi-task methods.
(Positive interactions are shown in blue, negative interactions in red.
The regularization parameter was selected in a validation set.)
The methods C,MI,M2 produce a sparsity pattern that is consistent across collection sites, while the methods L,V,G fail to obtain a consistent sparsity pattern.
}
\label{fig:graphsconnectomes}
\end{figure}

\begin{figure}
\begin{center}
\raisebox{0.1in}{\footnotesize{(L)}}\includegraphics[width=0.25\linewidth,height=1.05in]{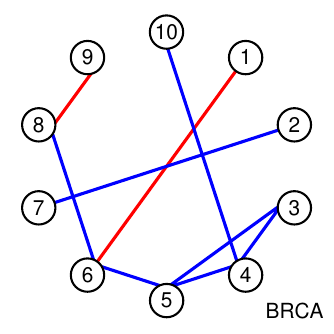}\includegraphics[width=0.25\linewidth,height=1.05in]{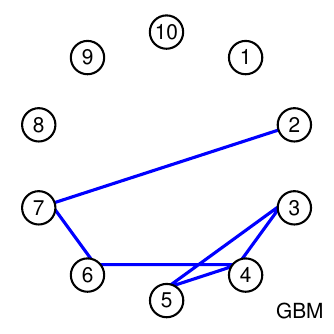}\includegraphics[width=0.25\linewidth,height=1.05in]{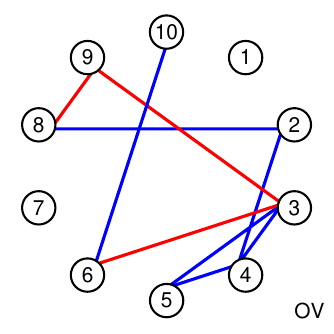} \\
\raisebox{0.1in}{\footnotesize{(V)}}\includegraphics[width=0.25\linewidth,height=1.05in]{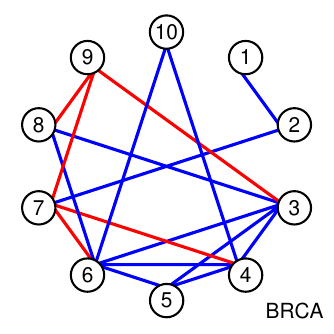}\includegraphics[width=0.25\linewidth,height=1.05in]{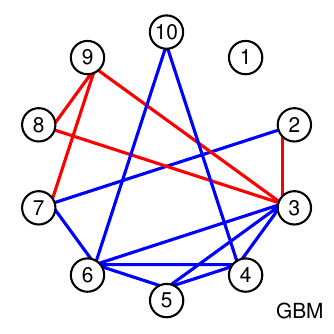}\includegraphics[width=0.25\linewidth,height=1.05in]{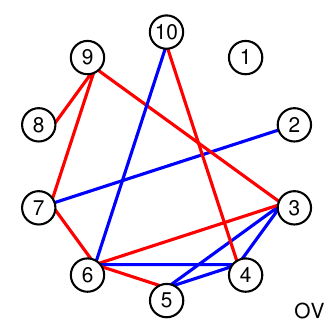} \\
\raisebox{0.1in}{\footnotesize{(C)}}\includegraphics[width=0.25\linewidth,height=1.05in]{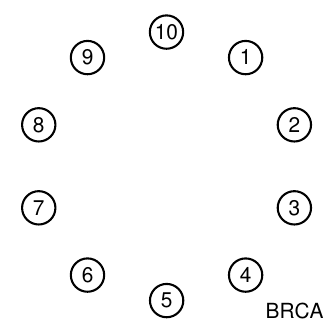}\includegraphics[width=0.25\linewidth,height=1.05in]{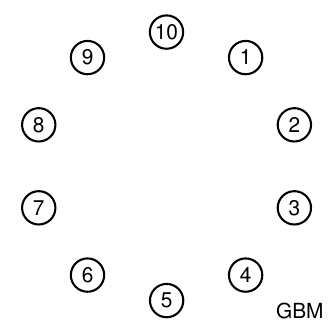}\includegraphics[width=0.25\linewidth,height=1.05in]{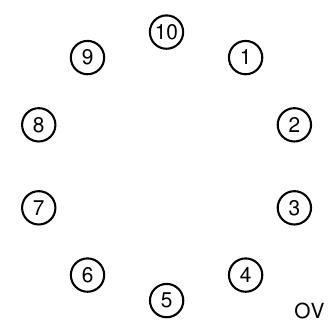} \\
\raisebox{0.1in}{\footnotesize{(G)}}\includegraphics[width=0.25\linewidth,height=1.05in]{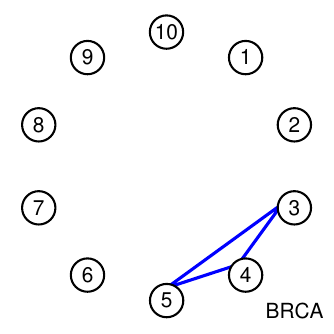}\includegraphics[width=0.25\linewidth,height=1.05in]{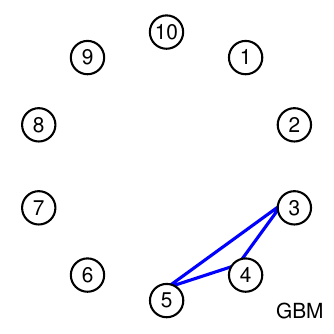}\includegraphics[width=0.25\linewidth,height=1.05in]{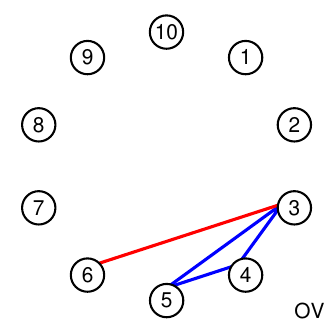} \\
\raisebox{0.1in}{\footnotesize{(MI)}}\includegraphics[width=0.25\linewidth,height=1.05in]{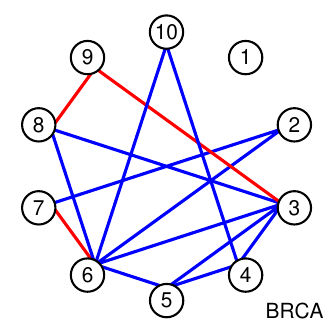}\includegraphics[width=0.25\linewidth,height=1.05in]{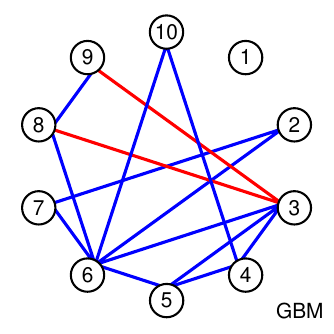}\includegraphics[width=0.25\linewidth,height=1.05in]{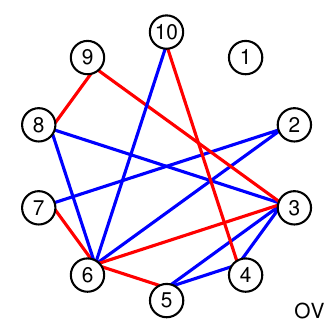} \\
\raisebox{0.1in}{\footnotesize{(M2)}}\includegraphics[width=0.25\linewidth,height=1.05in]{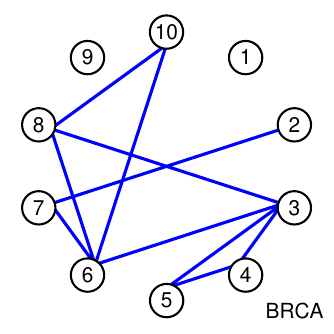}\includegraphics[width=0.25\linewidth,height=1.05in]{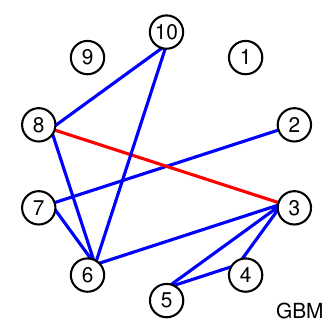}\includegraphics[width=0.25\linewidth,height=1.05in]{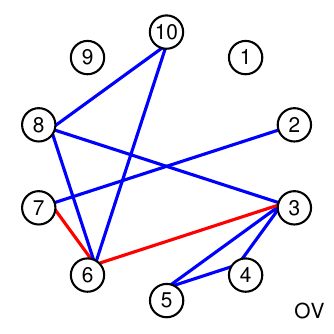} \\
\end{center}
\vspace{-0.2in}
\captionx{
Learnt structures from the \emph{cancer genome atlas} data set for three randomly selected cancer types (BRCA: breast invasive carcinoma, GBM: glioblastoma multiforme, OV: ovarian serous cystadenocarcinoma).
We show a subgraph of ten randomly selected genes (genes from 1 to 10: AGXT2, ANK2, CCNB2, CEP55, DKFZp762E1312, FAM107A, FEZ1, HLA-F, KIAA1217, PAQR8).
We include the graphical lasso (L), the methods of \citeauthor{Varoquaux10} (V), \citeauthor{Chiquet11} (C), \citeauthor{Guo11} (G), and the $\ell_{1,\infty}$ (MI) and $\ell_{1,2}$ (M2) multi-task methods.
(Positive interactions are shown in blue, negative interactions in red.
The regularization parameter was selected in a validation set.)
The methods C,MI,M2 produce a sparsity pattern that is consistent across cancer types, while the methods L,V,G fail to obtain a consistent sparsity pattern.
}
\label{fig:graphscancer}
\end{figure}

Our experimental setup is as follows.
For the \emph{1000 functional connectomes} data set, we learn one Gaussian graphical model for each of the $41$ collection sites, i.e., each collection site is a task.
For the \emph{cancer genome atlas} data set, we learn one Gaussian graphical model for each of the five cancer types, i.e., each cancer type is a task.
We used one third of the subjects for training, one third for validation and the remaining third for testing.
We use the validation set for selecting the optimal regularization parameter, and the testing set for reporting log-likelihoods.
We performed six repetitions by making each third of the subjects take turns as training, validation and testing sets.

In Figures \ref{fig:chartsconnectomes} and \ref{fig:chartscancer}, we report the negative log-likelihood on the testing set, the stability of the support union across tasks, the edge density, the sign coherence across tasks, the average edge magnitude as well as the number of tasks where edges have maximum magnitude.
The regularization parameter was selected in a validation set.
For visualization purposes, we report the negative log-likelihood in the following fashion.
After computing the negative log-likelihood, we subtracted the entropy measured on the testing set and then scaled the resulting value in $[0,1]$.
For computing our stability measure, we first computed the support union (i.e., the union of the edge sets across tasks) for each repetition.
Then we averaged those adjacency matrices across repetitions and obtained a score $s_{n_1n_2} \in [0,1]$ for each node pair $(n_1,n_2)$.
We then transformed each score by setting ${s_{n_1n_2} \leftarrow \max(s_{n_1n_2},1-s_{n_1n_2})}$ since a node pair is stable when an edge is consistently present or absent.
Finally, we averaged the transformed scores across all node pairs.
For computing the sign coherence across tasks, we first computed for each node pair $(n_1,n_2)$ the proportion of tasks with positive sign $s_{n_1n_2}^+ \in [0,1]$ as well as the proportion of tasks with negative sign $s_{n_1n_2}^- \in [0,1]$.
We then computed the score $s_{n_1n_2} \leftarrow \max{(s_{n_1n_2}^+,s_{n_1n_2}^-)}/(s_{n_1n_2}^+ + s_{n_1n_2}^-)$ since a node pair is sign-coherent when an edge is consistently positive or negative.
(Note that we exclude the case where the node pair is not an edge).
Finally, we averaged the scores across all node pairs.
The number of tasks where edges have maximum magnitude was first computed for each node pair $(n_1,n_2)$, and then averaged across all node pairs.
We can observe that the log-likelihood of both our $\ell_{1,\infty}$ and $\ell_{1,2}$ multi-task methods is better than the comparison methods.
Both our $\ell_{1,\infty}$ and $\ell_{1,2}$ multi-task methods are relatively less stable than the comparison methods, due to the fact that the latter produce different topologies per task and thus, a bigger support union.
Note that our $\ell_{1,2}$ multi-task method produces structures that are sparser than those produced by our $\ell_{1,\infty}$ counterpart.
Additionally, edges in $\ell_{1,2}$ structures are more coherent in signs that those of $\ell_{1,\infty}$ structures.
Furthermore, $\ell_{1,\infty}$ structures have approximately two thirds of the tasks have edges with maximum magnitude, while in $\ell_{1,2}$ structures only one task has an edge with maximum magnitude.
In fact, \citet{Schmidt11} noted that the $\ell_{1,\infty}$-norm regularizer encourages values to have the same magnitude.

Figures \ref{fig:graphsconnectomes} and \ref{fig:graphscancer} show a subgraph of learnt structures for three randomly selected tasks.
The regularization parameter was selected in a validation set.
We can observe that the sparsity pattern of the structures produced by our multi-task method is consistent across tasks.
The comparison methods fail to obtain a consistent sparsity pattern.
The exception is the method of \citet{Chiquet11}.
However, note in Figures \ref{fig:chartsconnectomes}(a) and \ref{fig:chartscancer}(a) that the produced models do not generalize well on testing data.

Next, we identify and discuss edges that show the important differences between the tasks under analysis.
We chose the edges are present in at least $95\%$ of the tasks, have sign coherence across tasks lower than $70\%$, and have a partial correlation\footnote{The partial correlation is computed from the precision matrix in the same way that the correlation is computed from the covariance matrix.} with a magnitude of at least $0.02$.

In the \emph{1000 functional connectomes} data set, we identified only one edge between the Right Brodmann area 23 and the Left Brodmann area 30, by using our $\ell_{1,2}$ multi-task method.
This edge is present in $39$ collection sites: $13$ sites had positive partial correlation (from $0.0086$ to $0.0339$), while $26$ sites had negative partial correlation (from $-0.0714$ to $-0.0096$).
The Brodmann areas 23 and 30 have been identified as core regions associated with the \emph{default mode network} which is active during wakeful rest \citep{Buchner08}.
Thus, the fact that we find this edge across $39$ collection sites is clinically relevant.
In comparison, our $\ell_{1,\infty}$ multi-task method found the edge in $40$ sites, the method of \citeauthor{Guo11} found the edge in $36$ sites, graphical lasso found the edge in $26$ sites, and the methods of \citeauthor{Chiquet11} and \citeauthor{Varoquaux10} did not find the edge in any collection site.
In neuroscience, it is well known that the default mode network is disrupted with diseases such as autism spectrum disorders, schizophrenia and Alzheimer's disease \citep{Buchner08}.
We speculate that the difference in the collection sites ($13$ with positive and $26$ with negative edge weights) could be explained by the presence of some subjects with an undetected disease.
For instance, Broadmann areas 23 and 30 showed increased activity for depressed unmedicated patients in a resting-state positron emission tomography (PET) study \citep{Monkul12}.

In the \emph{cancer genome atlas} data set, we identified five edges which are present in all the cancer types, by using our $\ell_{1,2}$ multi-task method.
In all of the cases, the edge weights were positive (or negative) in three cancer types and negative (or positive) in the remaining two cancer types.
The edges are between the ABCA8 and PRC1 genes (partial correlation from $-0.1342$ to $0.0509$), the ANP32E and CDC14B genes (partial correlation from $-0.0535$ to $0.0616$), the FLJ20489 and TMEM4 genes (partial correlation from $-0.056$ to $0.1203$), the FMNL3 and FMO4 genes (partial correlation from $-0.1214$ to $0.1546$) and the RPN2 and TMEPAI genes (partial correlation from $-0.049$ to $0.0708$).
Two of these genes (FMNL3 and PRC1) have been previously linked to other types of cancer not analyzed here.
The FMNL3 gene has been linked to pancreatic cancer \citep{Jones08}.
The edge between the FMNL3 and FMO4 genes has the highest value for lung squamous cell carcinoma.
Interestingly, lung cancer and pancreatic cancer are highly associated with tobacco use \citep{Engeland96}.
Thus, the aforementioned genes (FMNL3 and FMO4) could be related to genetic predisposition.
The PRC1 gene has been linked to melanoma \citep{Pleasance10}.
The edge between the ABCA8 and PRC1 genes has the highest value for breast invasive carcinoma.
Interestingly, women with breast cancer have a higher risk for melanoma and viceversa \citep{Goggins04}.

\section{Concluding Remarks} \label{sec:conclusions}

In this paper, we presented $\ell_{1,p}$ multi-task structure learning for Gaussian graphical models.
We analyzed the sufficient number of samples for correctly recovering the support union and edge signs, for both sub-Gaussian random variables and variables with finite high-order moment.
We also provide information-theoretic lower bounds.
Our result implies that for sub-Gaussian variates, the polynomial-time method of $\ell_{1,p}$ regularization achieves optimal rates, since the necessary and the sufficient number of samples are both $\O(\log K + \log N)$.
Finally, we showed that multi-task learning is statistically more efficient than single-task learning, in the sense that multi-task learning requires less samples in order to avoid failure.

For experiments, we used a block coordinate descent method which is provably convergent and yields sparse and positive definite estimates.
In synthetic experiments, we showed that the $\ell_{1,p}$ multi-task method outperforms others in recovering the topology of the ground truth model.
Additionally, the cross-validated log-likelihood of the $\ell_{1,p}$ multi-task method is higher than the comparison methods in two real-world data sets.

There are several ways of extending this research.
In this paper, we assume that the assignment of training samples to tasks is known.
A more challenging setting comes from the assumption that the assignment of training samples to tasks is unknown.
In this paper, we used the $\ell_{1,p}$-norm regularizer.
In the context of linear regression, it was shown that there are regimes in which the $\ell_1$ regularizer requires less samples than the $\ell_{1,p}$ regularizer, and viceversa \citep{Negahban11,Obozinski11}.
\emph{Dirty models} have been shown superior to the previous two approaches \citep{Jalali10}.
Thus, it is important to study the statistical efficiency of the methods such as \citep{Hara11,Hara13}.
Finally, in light of recent advances \citep{Loh13}, it is also important to study the consistency of non-convex regularizers, such as \citep{Guo11,Zhu15}.

\section*{Acknowledgments}

We thank Dardo Tomasi for the preprocessing of the \emph{1000 functional connectomes} data set.
This work was supported in part by NIDA Grants 1 R01 DA020949, 1 R01 DA023579 and NIBIB Grant 1 R01 EB007530.

\appendix

\section{Detailed Proofs} \label{sec:detailedproofs}

In this section, we state the proofs of all the theorems in our manuscript.

\subsection{Proof of Theorem \ref{thm:consistencysubgaussian}}

Here, we provide the detailed proof of Theorem \ref{thm:consistencysubgaussian}.
First, we derive some intermediate lemmas needed for the final proof.
\begin{lemma} \label{lem:strictdualnormp}
Let $p\in (1,\infty)$.
Let the $\ell_\dual{p}$-norm be the dual of the $\ell_p$-norm, i.e., $\frac{1}{p} + \frac{1}{\dual{p}} = 1$.
Let $\a,\b \in \R^K$ be two vectors such that $\|\a\|_p=\t{\a}\b$ and $\|\b\|_\dual{p} \leq 1$.
We have that, $\|\b\|_\dual{p} < 1 \Rightarrow \a = \zero$.
Additionally, $\|\b\|_\dual{p} = 1$ and $\a \neq \zero \Rightarrow (\forall k){\rm\ }b_k=\sgn(a_k)(|a_k|/\|\a\|_p)^{p/\dual{p}}$, and therefore $(\forall k){\rm\ }\sgn(a_k)=\sgn(b_k)$.
\end{lemma}
\begin{proof}
From the identity for dual norms, we have $\|\a\|_p=\max_{\|\c\|_\dual{p}\leq 1}\t{\a}\c$.
Let $\b$ be the optimal solution for the previous maximization.
That is, $\|\a\|_p=\max_{\|\c\|_\dual{p}\leq 1}\t{\a}\c=\t{\a}\b$ and $\|\b\|_\dual{p} \leq 1$.
Thus, we obtained our assumptions.

First, we prove the statement $\|\b\|_\dual{p} < 1 \Rightarrow \a = \zero$ by contradiction.
Assume $\|\b\|_\dual{p} < 1$ and $\a \neq \zero$.
The objective function is linear and non-constant since $\a \neq \zero$, and the constraint set is convex.
Thus, there is a unique optimal solution that must be on the boundary of the constraint set, i.e., $\|\b\|_\dual{p} = 1$.
Therefore, we have a contradiction.

Next, we prove the statement $\|\b\|_\dual{p} = 1$ and $\a \neq \zero \Rightarrow (\forall k){\rm\ }b_k=\sgn(a_k)(|a_k|/\|\a\|_p)^{p/\dual{p}}$.
Again, the objective function is linear and non-constant since $\a \neq \zero$, and the constraint set is convex.
Thus, there is a unique optimal solution that must be on the boundary of the constraint set, i.e., $\|\b\|_\dual{p} = 1$.
It is trivial to verify that for a vector $\b$ that fulfills $(\forall k){\rm\ }b_k=\sgn(a_k)(|a_k|/\|\a\|_p)^{p/\dual{p}}$, we have that $\|\b\|_\dual{p} = 1$ and $\|\a\|_p=\t{\a}\b$.
Therefore, the prescribed $\b$ is indeed the unique optimal solution.
Finally, for the prescribed $\b$, it is trivial to verify that $(\forall k){\rm\ }\sgn(a_k)=\sgn(b_k)$.
\qedhere
\end{proof}

\begin{lemma} \label{lem:strictdualnorminf}
Let $\a,\b \in \R^K$ be two vectors such that $\|\a\|_\infty=\t{\a}\b$ and $\|\b\|_1 \leq 1$.
We have that, $\|\b\|_1 < 1 \Rightarrow \a = \zero$.
Let $\K = \argmax_k{|a_k|}$ be the set of indices that attain the maximum value.
Additionally, $\|\b\|_1 = 1$ and $\a \neq \zero \Rightarrow (\forall k){\rm\ }b_k=\sgn(a_k)\iverson{k \in \K}t_k$ for any set of weights $\vt \geq 0$ such that $\sum_k{\iverson{k \in \K}t_k} = 1$, and therefore $(\forall k){\rm\ }a_kb_k \geq 0$.
\end{lemma}
\begin{proof}
From the identity for dual norms, we have $\|\a\|_\infty=\max_{\|\c\|_1\leq 1}\t{\a}\c$.
Let $\b$ be the optimal solution for the previous maximization.
That is, $\|\a\|_\infty=\max_{\|\c\|_1\leq 1}\t{\a}\c=\t{\a}\b$ and $\|\b\|_1\leq 1$.
Thus, we obtained our assumptions.

We prove the first statement $\|\b\|_1 < 1 \Rightarrow \a = \zero$ by contradiction.
Assume $\|\b\|_1 < 1$ and $\a \neq \zero$.
The objective function is linear and non-constant since $\a \neq \zero$, and the constraint set is convex.
Thus, there is a unique optimal solution that must be on the boundary of the constraint set, i.e., $\|\b\|_1 = 1$.
Therefore, we have a contradiction.

The second statement can be found in Lemma 1 in \citep{Negahban11}.
Note that by construction, the prescribed $\b$ fulfills:
\begin{align*}
\|\b\|_1 & = \textstyle{ \sum_k{|\sgn(a_k)\iverson{k \in \K}t_k|} } \\
 & = \textstyle{ \sum_k{\iverson{k \in \K}t_k} } \\
 & = 1 \; .
\end{align*}
Finally, note that:
\begin{align*}
(\forall k){\rm\ }a_kb_k & = a_k\sgn(a_k)\iverson{k \in \K}t_k \\
 & = |a_k|\iverson{k \in \K}t_k \\
 & \geq 0 \; ,
\end{align*}
\noindent which proves our claim.
\qedhere
\end{proof}

\begin{lemma} \label{lem:signrecovery}
Let $a,b \in \R$ and $\eps>0$.
Assume that $|a-b| \leq \eps$ and either $b=0$ or $|b|>2\eps$.
If $b=0$ then $|a| \leq \eps$.
If $b>0$ then $a>0$.
If $b<0$ then $a<0$.
\end{lemma}
\begin{proof}
If $b=0$, since $|a-b| \leq \eps$ we have $|a| \leq \eps$.

\noindent If $b>2\eps$, since $|a-b| \leq \eps$ we have $a \geq b - \eps > 2\eps - \eps = \eps > 0$.

\noindent If $b<-2\eps$, since $|a-b| \leq \eps$ we have $a \leq b + \eps < -2\eps + \eps = -\eps < 0$.
\qedhere
\end{proof}

\begin{lemma} \label{lem:consistency}
Let the $\ell_\dual{p}$-norm be the dual of the $\ell_p$-norm, i.e., $\frac{1}{p} + \frac{1}{\dual{p}} = 1$.
Let Assumption \ref{asm:incoherence} hold for the $\ell_\dual{p}$-norm and $\alpha_\dual{p} \in (0,1]$.
Assume that $(\forall k){\rm\ }\|\BSigmah\si{k} - \BSigmat\si{k}\|_\infty \leq \eps$.
Set the regularization parameter in Eq.~\eqref{eq:multitaskggm} to $\rho = C_2 \eps K^{1/\dual{p}}/\alpha_\dual{p}$ for some $C_2 \geq 8$.
If the following holds:
\begin{align} \label{eq:samplecomplexity}
1 \geq d_{\BOmegat} {\rm\ } \left( 6\eps \max{(C_{\BSigmat} C_{\BGammat}, C_{\BSigmat}^3 C_{\BGammat}^2)} \right) (1+\rho/\eps)^2 \; ,
\end{align}
\noindent then we have:
\begin{align*}
(\forall k){\rm\ }\|\BOmegat\si{k} - \BOmegah\si{k}\|_\infty \leq 2 \eps C_{\BGammat} (1+\rho/\eps) \; .
\end{align*}
Furthermore, the support union and the edge signs of the empirical minimizer $\BOmegah \equiv \{\omegah_{n_1n_2}\si{k}\}$ are equal to those of the true model $\BOmegat \equiv \{\omegat_{n_1n_2}\si{k}\}$.
That is:
\begin{align*}
\Sup_{\BOmegah} = \Sup_{\BOmegat}\text{\ \ \ and\ \ \ }(\forall k,(n_1,n_2) \in \Sup_{\BOmegat}){\rm\ } & \textstyle{ \left( \omegat_{n_1n_2}\si{k} = 0 \text{\ \ and\ \ } |\omegah_{n_1n_2}\si{k}| \leq 2 \eps C_{\BGammat} (1+\rho/\eps) \right) } \\
 & \text{or\ \ } (\omegat_{n_1n_2}\si{k} > 0 \text{\ \ and\ \ } \omegah_{n_1n_2}\si{k} > 0) \\
 & \text{or\ \ } (\omegat_{n_1n_2}\si{k} < 0 \text{\ \ and\ \ } \omegah_{n_1n_2}\si{k} < 0) \; ,
\end{align*}
\noindent provided that:
\begin{align*}
\textstyle{ (\forall k,(n_1,n_2) \in \Sup_{\BOmegat}){\rm\ }\omegat_{n_1n_2}\si{k} = 0 \text{\ \ \ or\ \ \ } |\omegat_{n_1n_2}\si{k}| > 4 \eps C_{\BGammat} (1+\rho/\eps) } \; .
\end{align*}
\end{lemma}
\begin{proof}
Recall that the support union was formally defined in Eq.~\eqref{eq:supportunion}.
For clarity, let the support union of the true model be $\Sup \equiv \Sup_{\BOmegat}$.
Let $\Supc$ be the complement of $\Sup$.
Additionally, we assume that every task contains the same number of samples, i.e., $(\forall k){\rm\ }T\si{k} = 1$.
Our proof makes use of some of the Karush-Kuhn-Tucker conditions, such as stationarity, complementary slackness, and dual feasibility.
Given the length of the proof, we split it into four parts.

\vspace{1em} \noindent \emph{Part I.}
First, we show that the empirical minimizer $\BOmegah$ correctly excludes all non-edges if it fulfills a \emph{strict dual feasibility} condition.
Note that the minimizer $\BOmegah\si{k}$ of the problem in Eq.~\eqref{eq:multitaskggm} fulfills the \emph{stationarity} condition.
Let $\Zh\si{k} \in \R^{N \times N}$, we have:
\begin{align} \label{eq:pdstationarity}
(\forall k){\rm\ }T\si{k}(\inv{\textstyle{\BOmegah\si{k}}} - \BSigmah\si{k}) - \rho \Zh\si{k} = \zero \; ,
\end{align}
\noindent where the $\ell_{\infty,\dual{p}}$-norm fulfills the \emph{dual feasibility} condition:
\begin{align} \label{eq:pddualfeasibility}
\|\Zh\|_{\infty,\dual{p}} \equiv \max_{n_1 \neq n_2}\|(\zh_{n_1n_2}\si{1},\dots,\zh_{n_1n_2}\si{K})\|_\dual{p} \leq 1 \; .
\end{align}
For shortness, let $\vzh_{n_1n_2} = (\zh_{n_1n_2}\si{1},\dots,_{n_1n_2}\si{K})$ and $\vomegah_{n_1n_2} = (\omegah_{n_1n_2}\si{1},\dots,\omegah_{n_1n_2}\si{K})$.
By the \emph{complementary slackness} condition, we have $\|\BOmegah\|_{1,p} = \sum_k{\dotprod{\Zh\si{k}}{\BOmegah\si{k}}}$ and thus:
\begin{align} \label{eq:pdinnerprod}
 \sum_{n_1 \neq n_2}{\|\vomegah_{n_1n_2}\|_p} & = \sum_{k,n_1 \neq n_2}{\zh_{n_1n_2}\si{k}\omegah_{n_1n_2}\si{k}} \nonumber \\
 \Leftarrow {\rm\ } (\forall n_1 \neq n_2){\rm\ }{\|\vomegah_{n_1n_2}\|_p} & = \sum_k{\zh_{n_1n_2}\si{k}\omegah_{n_1n_2}\si{k}} \; .
\end{align}
By Eq.~\eqref{eq:pddualfeasibility}, Eq.~\eqref{eq:pdinnerprod} and Lemmas \ref{lem:strictdualnormp} and \ref{lem:strictdualnorminf}, for $p>1$ we have:
\begin{align} \label{eq:compslacknesszero}
(\forall n_1 \neq n_2){\rm\ }\|\vzh_{n_1n_2}\|_\dual{p} < 1 \Rightarrow \vomegah_{n_1n_2} = \zero \; .
\end{align}
By Eq.~\eqref{eq:pddualfeasibility}, Eq.~\eqref{eq:pdinnerprod} and Lemma \ref{lem:strictdualnormp} for $p\in (1,\infty)$, we have:
\begin{align} \label{eq:compslacknessnozerop}
(\forall n_1 \neq n_2){\rm\ }\|\vzh_{n_1n_2}\|_\dual{p} \hns=\hns 1 \text{\ ,\ } \vomegah_{n_1n_2} \neq \zero & \Rightarrow (\forall k){\rm\ }\zh_{n_1n_2}\si{k}=\sgn(\omegah_{n_1n_2}\si{k})(|\omegah_{n_1n_2}\si{k}|/\|\vomegah_{n_1n_2}\|_p)^{p/\dual{p}} \nonumber \\
 & \Rightarrow (\forall k){\rm\ }\sgn(\zh_{n_1n_2}\si{k})=\sgn(\omegah_{n_1n_2}\si{k}) \; .
\end{align}
By Eq.~\eqref{eq:pddualfeasibility}, Eq.~\eqref{eq:pdinnerprod} and Lemma \ref{lem:strictdualnorminf} for $p=\infty$, let $\K = \argmax_k{|\omegah_{n_1n_2}\si{k}|}$ be the set of indices that attain the maximum value, and $\vt \geq 0$ be any set of weights such that $\sum_k{\iverson{k \in \K}t_k} = 1$.
We have:
\begin{align} \label{eq:compslacknessnozeroinf}
(\forall n_1 \neq n_2){\rm\ }\|\vzh_{n_1n_2}\|_\dual{p} \hns=\hns 1 \text{\ ,\ } \vomegah_{n_1n_2} \neq \zero & \Rightarrow \nonumber (\forall k){\rm\ }\zh_{n_1n_2}\si{k}=\sgn(\omegah_{n_1n_2}\si{k})\iverson{k \in \K}t_k \\
 & \Rightarrow (\forall k){\rm\ }\zh_{n_1n_2}\si{k}\omegah_{n_1n_2}\si{k} \geq 0 \; .
\end{align}
In particular, consider all edges $(n_1,n_2) \notin \Sup$.
From Eq.~\eqref{eq:compslacknesszero}, it follows that if the $\ell_{\infty,\dual{p}}$-norm fulfills a \emph{strict dual feasibility} condition:
\begin{align} \label{eq:strictdualfeasibility}
\|\Zh_{\Supc}\|_{\infty,\dual{p}} \equiv \max_{(n_1,n_2) \in \Supc}\|(\zh_{n_1n_2}\si{1},\dots,\zh_{n_1n_2}\si{K})\|_\dual{p} < 1 \; ,
\end{align}
\noindent then $\BOmegah_{\Supc} = \zero$, which implies that \emph{$\BOmegah$ correctly excludes all non-edges}, i.e., $\Sup \subseteq \Sup_{\BOmegah}$.

\vspace{1em} \noindent \emph{Part II.}
In what follows, we apply the \emph{primal-dual witness} method.
Our goal is to show that the \emph{strict dual feasibility} condition is fulfilled.

To be clear, the following procedure is not a practical algorithm for solving Eq.~\eqref{eq:multitaskggm}.
It is a proof technique for certifying the behavior of the empirical minimizer.
We use $\Supc$ in order to construct the primal-dual witness solution $(\BOmegac,\Zc)$ as follows:
\begin{enumerate}[a)]
\item \label{item:pdrestricted} We determine $\BOmegac$ in a manner that guarantees that $(\forall k){\rm\ }\BOmegac\si{k}\succ \zero$ and $\BOmegac_{\Supc}=\zero$, by solving the restricted problem:
\begin{align} \label{eq:pdrestricted}
\BOmegac = \argmax_{(\forall k){\rm\ }\BOmega\si{k}\succ \zero {\rm\ ,\ } \BOmega_{\Supc}=\zero}\left(\sum_k{T\si{k}\ell_{\BSigmah\si{k}}(\BOmega\si{k})} - \rho\|\BOmega\|_{1,p} \right) \; .
\end{align}
\item \label{item:pdcompslackness} We choose $\Zc_{\Sup}$ in order to fulfill the \emph{complementary slackness} condition in Eq.~\eqref{eq:compslacknessnozerop} for $p\in (1,\infty)$ or Eq.~\eqref{eq:compslacknessnozeroinf} for $p=\infty$.
\item \label{item:pdstationaritynonsupp} We set $\Zc_{\Supc}$ in order to guarantee that $(\BOmegac,\Zc)$ fulfills the \emph{stationarity} condition in Eq.~\eqref{eq:pdstationarity}.
That is:
\begin{align} \label{eq:pdstationaritynonsupp}
(\forall k){\rm\ }\Zc_{\Supc}\si{k} = \frac{T\si{k}}{\rho} ({[\inv{\textstyle{\BOmegac\si{k}}}]}_{\Supc} - \BSigmah_{\Supc}\si{k}) \; .
\end{align}
\item \label{item:pdstrictdualfeasibility} We verify the \emph{strict dual feasibility} condition.
That is, by using Eq.~\eqref{eq:strictdualfeasibility}, we verify that:
\begin{align*}
\|\Zc_{\Supc}\|_{\infty,\dual{p}} < 1 \; .
\end{align*}
\end{enumerate}
If the primal-dual witness construction succeeds, then it acts as a witness to the fact that the solution $\BOmegac$ to the restricted problem in Eq.~\eqref{eq:pdrestricted} is equal to the solution $\BOmegah$ to the original (unrestricted) problem in Eq.~\eqref{eq:multitaskggm}.
Note that all steps hold by construction, with the exception of step \eqref{item:pdstrictdualfeasibility}.
Thus, in what follows, we concentrate on proving that step \eqref{item:pdstrictdualfeasibility} holds under the assumptions made in this lemma.

First, note that by steps \eqref{item:pdrestricted} and \eqref{item:pdstationaritynonsupp}, the primal-dual witness solution $(\BOmegac,\Zc)$ fulfills the stationarity condition in Eq.~\eqref{eq:pdstationarity}.
That is:
\begin{align*}
(\forall k){\rm\ }\inv{\textstyle{\BOmegac\si{k}}} - \BSigmah\si{k} - \frac{\rho}{T\si{k}} \Zc\si{k} = \zero \; .
\end{align*}
Let $\BDelta\si{k} \equiv \BOmegat\si{k} - \BOmegac\si{k}$, $\A\si{k} \equiv \BSigmah\si{k}-\BSigmat\si{k}$ and $\B\si{k} \equiv \inv{\textstyle{\BOmegac\si{k}}} - \inv{\BOmegat\si{k}} - \inv{\BOmegat\si{k}}\hns\BDelta\si{k}\inv{\BOmegat\si{k}}$.
By Eq.~\eqref{eq:truesolution} we know that $(\forall k){\rm\ }\BSigmat\si{k} = \inv{\BOmegat\si{k}}$ and thus:
\begin{align} \label{eq:pdstationarityexp}
(\forall k){\rm\ }\inv{\BOmegat\si{k}}\hns\BDelta\si{k}\inv{\BOmegat\si{k}} - (\A\si{k} - \B\si{k}) - \frac{\rho}{T\si{k}} \Zc\si{k} = \zero \; .
\end{align}
Let $\vec(\C)$ be the vectorized form of matrix $\C$.
By property of the Kronecker product and the definition in Eq.~\eqref{eq:hessian}, we have:
\begin{align*}
(\forall k){\rm\ }\vec(\inv{\BOmegat\si{k}}\hns\BDelta\si{k}\inv{\BOmegat\si{k}}) & = (\kronprod{\inv{\BOmegat\si{k}}}{\inv{\BOmegat\si{k}}}) \vec(\BDelta\si{k}) \\
 & = \BGammat\si{k} \vec(\BDelta\si{k}) \; .
\end{align*}
Note that by step \eqref{item:pdrestricted} we have $\BOmegac_{\Supc} = \zero \Rightarrow \BDelta_{\Supc} = \zero$.
We then rewrite Eq.~\eqref{eq:pdstationarityexp} as follows:
\begin{align} \refstepcounter{equation}
(\forall k){\rm\ }\BGammat_{\Sup\Sup}\si{k}\BDelta_{\Sup}\si{k} - (\A_{\Sup}\si{k}-\B_{\Sup}\si{k}) - \frac{\rho}{T\si{k}} \Zc_{\Sup}\si{k} = \zero \; , \tag{\theequation.a}\label{eq:pdstationaritysup} \\
(\forall k){\rm\ }\BGammat_{\Supc\Sup}\si{k}\BDelta_{\Sup}\si{k} - (\A_{\Supc}\si{k}-\B_{\Supc}\si{k}) - \frac{\rho}{T\si{k}} \Zc_{\Supc}\si{k} = \zero \; . \tag{\theequation.b}\label{eq:pdstationaritysupc}
\end{align}
Since $\BGammat_{\Sup\Sup}\si{k}$ is invertible, from Eq.~\eqref{eq:pdstationaritysup} we have:
\begin{align*}
(\forall k){\rm\ }\BDelta_{\Sup}\si{k} = \inv{\BGammat_{\Sup\Sup}\si{k}} (\A_{\Sup}\si{k}-\B_{\Sup}\si{k}) + \frac{\rho}{T\si{k}} \inv{\BGammat_{\Sup\Sup}\si{k}} \Zc_{\Sup}\si{k} \; .
\end{align*}
By substituting $\BDelta_{\Sup}\si{k}$ in Eq.~\eqref{eq:pdstationaritysupc} and by defining $\BPsit\si{k} \equiv \BGammat_{\Supc\Sup}\si{k}\inv{\BGammat_{\Sup\Sup}\si{k}}$ as in Assumption \ref{asm:incoherence}, we obtain:
\begin{align} \label{eq:pdindep}
(\forall k){\rm\ }\Zc_{\Supc}\si{k} & = \frac{T\si{k}}{\rho} \BGammat_{\Supc\Sup}\si{k}\BDelta_{\Sup}\si{k} - \frac{T\si{k}}{\rho} (\A_{\Supc}\si{k}-\B_{\Supc}\si{k}) \nonumber \\
 & = \frac{T\si{k}}{\rho} \BPsit\si{k} (\A_{\Sup}\si{k}-\B_{\Sup}\si{k}) + \BPsit\si{k} \Zc_{\Sup}\si{k} - \frac{T\si{k}}{\rho} (\A_{\Supc}\si{k}-\B_{\Supc}\si{k}) \; .
\end{align}
Recall that $\BPsit\si{k} \in \R^{|\Supc| \times |\Sup|}$.
Given a set of $K$ vectors $(\forall k){\rm\ }\c\si{k} \in \R^{|\Sup|}$, define for shortness $\D = \myprod{\BPsit}{\C} \in \R^{|\Supc| \times K}$ by $K$ independent products $(\forall k){\rm\ }\d\si{k} = \BPsit\si{k} \c\si{k}$.
Furthermore, define the $\ell_{\infty,1,\dual{p}}$-norm as follows:
\begin{align} \label{eq:pdtensornorm}
\|\BPsit\|_{\infty,1,\dual{p}} \equiv \max_i \|\textstyle{ (\sum_j{|\psit_{ij}\si{1}|},\dots,\sum_j{|\psit_{ij}\si{K}|}) }\|_\dual{p} \leq 1-\alpha_\dual{p} \; ,
\end{align}
\noindent where the last inequality follows from Assumption \ref{asm:incoherence}.
By using the $\ell_{\infty,\dual{p}}$-norm defined in Eq.~\eqref{eq:strictdualfeasibility} and the $\ell_{\infty,1,\dual{p}}$-norm defined in Eq.~\eqref{eq:pdtensornorm}, it is easy to verify that:
\begin{align} \label{eq:pdnormineq}
\|\D\|_{\infty,\dual{p}} & = \max_i {\|(d_i\si{1},\dots,d_i\si{K})\|_\dual{p}} \nonumber \\
 & = \max_i \|\textstyle{ (\sum_j{\psit_{ij}\si{1}c_j\si{1}},\dots,\sum_j{\psit_{ij}\si{K}c_j\si{K}}) }\|_\dual{p} \nonumber \\
 & \leq \max_i \|\textstyle{ (\sum_j{|\psit_{ij}\si{1}|},\dots,\sum_j{|\psit_{ij}\si{K}|}) }\|_\dual{p} {\rm\ \ } \max_{jk}{|c_j\si{k}|} \nonumber \\
 & = \|\BPsit\|_{\infty,1,\dual{p}} \|\C\|_\infty \; .
\end{align}
Recall that we assumed that $(\forall k){\rm\ }T\si{k} = 1$.
Thus, the Eq.~\eqref{eq:pdindep} becomes:
\begin{align*}
\Zc_{\Supc} = \frac{1}{\rho} \myprod{\BPsit}{(\A_{\Sup}-\B_{\Sup})} + \myprod{\BPsit}{\Zc_{\Sup}} - \frac{1}{\rho} (\A_{\Supc}-\B_{\Supc}) \; .
\end{align*}
By assumption, we have that $(\forall k){\rm\ }\|\A\si{k}\|_\infty = \|\BSigmah\si{k} - \BSigmat\si{k}\|_\infty \leq \eps$.
For the moment, we will assume\footnote{In the next part of the proof, we show that this indeed holds.} that $(\forall k){\rm\ }\|\B\si{k}\|_\infty \leq \eps$.
Since $\rho \geq 8 \eps K^{1/\dual{p}}/\alpha_\dual{p}$, we have $\eps \leq \frac{\rho\alpha_\dual{p}}{8K^{1/\dual{p}}}$.
By Eq.~\eqref{eq:pddualfeasibility}, Eq.~\eqref{eq:pdtensornorm} and Eq.~\eqref{eq:pdnormineq}, we have:
\begin{align*}
\|\Zc_{\Supc}\|_{\infty,\dual{p}} & = \frac{1}{\rho} \|\myprod{\BPsit}{(\A_{\Sup}-\B_{\Sup})}\|_{\infty,\dual{p}} + \|\myprod{\BPsit}{\Zc_{\Sup}}\|_{\infty,\dual{p}} + \frac{1}{\rho} \|\A_{\Supc}-\B_{\Supc}\|_{\infty,\dual{p}} \\
 & \leq \frac{1}{\rho} \|\BPsit\|_{\infty,1,\dual{p}} \|\A_{\Sup}-\B_{\Sup}\|_\infty + \|\BPsit\|_{\infty,1,\dual{p}} \|\Zc_{\Sup}\|_\infty + \frac{1}{\rho} \|\A_{\Supc}-\B_{\Supc}\|_{\infty,\dual{p}} \\
 & \leq \frac{1-\alpha_\dual{p}}{\rho} \|\A_{\Sup}-\B_{\Sup}\|_\infty + (1-\alpha_\dual{p}) \|\Zc\|_\infty + \frac{1}{\rho} \|\A_{\Supc}-\B_{\Supc}\|_{\infty,\dual{p}} \\
 & \leq \frac{1-\alpha_\dual{p}}{\rho} (\|\A_{\Sup}\|_\infty+\|\B_{\Sup}\|_\infty) + (1-\alpha_\dual{p}) \|\Zc\|_{\infty,\dual{p}} + \frac{K^{1/\dual{p}}}{\rho} (\|\A_{\Supc}\|_\infty+\|\B_{\Supc}\|_\infty) \\
 & \leq \frac{1-\alpha_\dual{p}}{\rho} (\|\A\|_\infty+\|\B\|_\infty) + (1-\alpha_\dual{p}) \|\Zc\|_{\infty,\dual{p}} + \frac{K^{1/\dual{p}}}{\rho} (\|\A\|_\infty+\|\B\|_\infty) \\
 & \leq \frac{2 \eps (1-\alpha_\dual{p})}{\rho} + (1-\alpha_\dual{p}) + \frac{2 \eps K^{1/\dual{p}}}{\rho} \\
 & \leq \frac{\alpha_\dual{p}(1-\alpha_\dual{p})}{4 K^{1/\dual{p}}} + (1-\alpha_\dual{p}) + \frac{\alpha_\dual{p}}{4} \\
 & \leq \frac{\alpha_\dual{p}(1-\alpha_\dual{p})}{4} + (1-\alpha_\dual{p}) + \frac{\alpha_\dual{p}}{4} \\
 & = \textstyle{ 1 - \frac{1}{2}\alpha_\dual{p} - \frac{1}{4}\alpha_\dual{p}^2 } \\
 & < 1 \text{\ \ for\ \ } \alpha_\dual{p} \in (0,1] \; .
\end{align*}

\vspace{1em} \noindent \emph{Part III.}
Next, we show that the assumption $(\forall k){\rm\ }\|\B\si{k}\|_\infty \leq \eps$ in the previous part of the proof, indeed holds.
Recall that $\BDelta\si{k} \equiv \BOmegat\si{k} - \BOmegac\si{k}$.
By Neumann series, we know that $\inv{(\I-\C)} = \sum_{i=0}^\infty{\C^i}$.
Let $\J\si{k} \equiv \sum_{i=0}^\infty{(\inv{\BOmegat\si{k}}\hns\BDelta\si{k})^i}$, we have:
\begin{align} \label{eq:pdinvomegac}
(\forall k){\rm\ }\inv{\textstyle{\BOmegac\si{k}}} & = \inv{(\BOmegat\si{k} - \BDelta\si{k})} \nonumber \\
 & = \inv{(\BOmegat\si{k}(\I - \inv{\BOmegat\si{k}}\hns\BDelta\si{k}))} \nonumber \\
 & = \inv{(\I - \inv{\BOmegat\si{k}}\hns\BDelta\si{k})} \inv{\BOmegat\si{k}} \nonumber \\
 & = \J\si{k}\inv{\BOmegat\si{k}} \nonumber \\
 & = (\I + \inv{\BOmegat\si{k}}\hns\BDelta\si{k} + \J\si{k} (\inv{\BOmegat\si{k}}\hns\BDelta\si{k})^2) \inv{\BOmegat\si{k}} \nonumber \\
 & = \inv{\BOmegat\si{k}} + \inv{\BOmegat\si{k}}\hns\BDelta\si{k}\inv{\BOmegat\si{k}} + \J\si{k} (\inv{\BOmegat\si{k}}\hns\BDelta\si{k})^2 \inv{\BOmegat\si{k}} \; .
\end{align}
By replacing the above into the definition of $\B\si{k}$, we have:
\begin{align} \label{eq:pdmatrixb}
(\forall k){\rm\ }\B\si{k} & = \inv{\textstyle{\BOmegac\si{k}}} - \inv{\BOmegat\si{k}} - \inv{\BOmegat\si{k}}\hns\BDelta\si{k}\inv{\BOmegat\si{k}} \nonumber \\
 & = \J\si{k} (\inv{\BOmegat\si{k}}\hns\BDelta\si{k})^2 \inv{\BOmegat\si{k}} \; .
\end{align}
Define the norm $\|\C\|_* = \max_{n_1}{\sum_{n_2}|c_{n_1n_2}|}$.
Note that $(\forall k){\rm\ }\|\BSigmat\si{k}\|_* \leq C_{\BSigmat}$ where $C_{\BSigmat}$ is defined in Eq.~\eqref{eq:sigmaconst}.
Recall that by step \eqref{item:pdrestricted} we have $\BOmegac_{\Supc} = \zero \Rightarrow \BDelta_{\Supc} = \zero$.
Furthermore, by Eq.~\eqref{eq:degree}, each row/column in $\BDelta\si{k}$ contains at most $d_{\BOmegat}$ nonzero entries, thus ${(\forall k){\rm\ }\|\BDelta\si{k}\|_* \leq d_{\BOmegat}\|\BDelta\si{k}\|_\infty}$.
For the moment, we will assume that:
\begin{align} \label{eq:pdradius}
\|\BDelta\si{k}\|_\infty \leq 1/(3d_{\BOmegat}C_{\BSigmat}) \; .
\end{align}
Given the above and by sub-multiplicativity of the $\ell_*$-norm, we have:
\begin{align*}
(\forall k){\rm\ }\|\inv{\BOmegat\si{k}}\hns\BDelta\si{k}\|_* & = \|\BSigmat\si{k}\BDelta\si{k}\|_* \\
 & \leq \|\BSigmat\si{k}\|_* \|\BDelta\si{k}\|_* \\
 & \leq d_{\BOmegat} C_{\BSigmat}\|\BDelta\si{k}\|_\infty \\
 & \leq 1/3 \; ,
\end{align*}
\noindent and furthermore:
\begin{align*}
(\forall k){\rm\ }\|\J\si{k}\|_* & = \textstyle{ \|\sum_{i=0}^\infty{(\inv{\BOmegat\si{k}}\hns\BDelta\si{k})^i}\|_* } \\
 & \leq \textstyle{ \sum_{i=0}^\infty{\|\inv{\BOmegat\si{k}}\hns\BDelta\si{k}\|_*^i} } \\
 & \leq \textstyle{ \sum_{i=0}^\infty{(1/3)^i} } \\
 & \leq 3/2 \; .
\end{align*}
It is well known that $\|\C\D\|_\infty \leq \|\C\|_*\|\D\|_\infty$ as well as $\|\C\D\|_\infty \leq \|\C\|_\infty\|\t{\D}\|_*$.
By the above, sub-multiplicativity of the $\ell_*$-norm and given that $(\forall k){\rm\ }\|\BSigmat\si{k}\|_* \leq C_{\BSigmat}$ and ${(\forall k){\rm\ }\|\BDelta\si{k}\|_* \leq d_{\BOmegat}\|\BDelta\si{k}\|_\infty}$ as well as $\inv{\BOmegat\si{k}} = \BSigmat\si{k}$, we have from Eq.~\eqref{eq:pdmatrixb}:
\begin{align} \label{eq:pdlinfmatrixb}
(\forall k){\rm\ }\|\B\si{k}\|_\infty & = \|\J\si{k} (\inv{\BOmegat\si{k}}\hns\BDelta\si{k})^2 \inv{\BOmegat\si{k}}\|_\infty \nonumber \\
 & = \|\J\si{k}\BSigmat\si{k}\BDelta\si{k}\BSigmat\si{k}\BDelta\si{k}\BSigmat\si{k}\|_\infty \nonumber \\
 & \leq \|\J\si{k}\BSigmat\si{k}\BDelta\si{k}\BSigmat\si{k}\BDelta\si{k}\|_\infty \|\BSigmat\si{k}\|_* \nonumber \\
 & \leq \|\J\si{k}\BSigmat\si{k}\BDelta\si{k}\BSigmat\si{k}\|_\infty \|\BDelta\si{k}\|_* \|\BSigmat\si{k}\|_* \nonumber \\
 & \leq \|\J\si{k}\BSigmat\si{k}\BDelta\si{k}\|_\infty \|\BDelta\si{k}\|_* \|\BSigmat\si{k}\|_*^2 \nonumber \\
 & \leq \|\J\si{k}\BSigmat\si{k}\|_* \|\BDelta\si{k}\|_\infty \|\BDelta\si{k}\|_* \|\BSigmat\si{k}\|_*^2 \nonumber \\
 & \leq \|\J\si{k}\|_* \|\BDelta\si{k}\|_\infty \|\BDelta\si{k}\|_* \|\BSigmat\si{k}\|_*^3 \nonumber \\
 & \leq \textstyle{ \frac{3}{2} d_{\BOmegat}C_{\BSigmat}^3 \|\BDelta\si{k}\|_\infty^2 } \; .
\end{align}
Thus, Eq.~\eqref{eq:pdlinfmatrixb} provides an upper bound of $\|\B\si{k}\|_\infty$ for all $k$.
Recall that by step \eqref{item:pdrestricted} we have $\BOmegac_{\Supc} = \zero \Rightarrow \BDelta_{\Supc} = \zero$.
Define the function $\F : \R^{|\Sup| \times K} \to \R^{|\Sup| \times K}$ as follows $\F(\BDelta_{\Sup}) = (\F\si{1}(\BDelta_{\Sup}\si{1}),\dots,\F\si{K}(\BDelta_{\Sup}\si{K}))$, where:
\begin{align} \label{eq:pdbrouwerfunction}
(\forall k){\rm\ }\F\si{k}(\BDelta_{\Sup}\si{k}) = -\inv{\BGammat_{\Sup\Sup}\si{k}} \left( {[\inv{\textstyle{\BOmegac\si{k}}}]}_{\Sup} - \BSigmah_{\Sup}\si{k} - \frac{\rho}{T\si{k}} \Zc_{\Sup}\si{k} \right) + \BDelta_{\Sup}\si{k} \; .
\end{align}
Note that $(\BOmegac,\Zc)$ fulfills the stationarity condition in Eq.~\eqref{eq:pdstationarity} if and only if $\BDelta_{\Sup}$ is a fixed point of $\F$, that is $\F(\BDelta_{\Sup}) = \BDelta_{\Sup}$.
Define the ball:
\begin{align} \label{eq:pdball}
\Ball(r) = \{\BDelta_{\Sup} \mid \|\BDelta_{\Sup}\|_\infty \leq r\} \; .
\end{align}
Note that $\F$ is continuous and that $\Ball(r)$ is convex and compact.
By the Brouwer's fixed point theorem \citep[p. 161]{Ortega70}, if $\F$ maps $\Ball(r)$ into itself, then there exists some fixed point in $\B(r)$.
That is:
\begin{align*}
(\forall \BDelta_{\Sup} \in \Ball(r)){\rm\ }\F(\BDelta_{\Sup}) \in \Ball(r) {\rm\ \ \ }\Rightarrow{\rm\ \ \ } (\exists \BDelta_{\Sup} \in \Ball(r)){\rm\ }\F(\BDelta_{\Sup}) = \BDelta_{\Sup} \; .
\end{align*}
Note that by property of the Kronecker product and the definition in Eq.~\eqref{eq:hessian}, we have:
\begin{align*}
(\forall k){\rm\ }{[\inv{\BOmegat\si{k}}\hns\BDelta\si{k}\inv{\BOmegat\si{k}}]}_{\Sup} & = {[\kronprod{\inv{\BOmegat\si{k}}}{\inv{\BOmegat\si{k}}}]}_{\Sup\Sup} \BDelta_{\Sup}\si{k} \\
 & = \BGammat_{\Sup\Sup}\si{k} \BDelta_{\Sup}\si{k} \; .
\end{align*}
Recall that $\A\si{k} \equiv \BSigmah\si{k}-\BSigmat\si{k}$ and $\inv{\BOmegat\si{k}} = \BSigmat\si{k}$.
By Eq.~\eqref{eq:pdinvomegac}, Eq.~\eqref{eq:pdmatrixb} and the above observation, we can rewrite Eq.~\eqref{eq:pdbrouwerfunction} as follows:
\begin{align*}
(\forall k){\rm\ }\F\si{k}(\BDelta_{\Sup}\si{k}) & = -\inv{\BGammat_{\Sup\Sup}\si{k}} \left( {[\inv{\textstyle{\BOmegac\si{k}}}]}_{\Sup} - \BSigmah_{\Sup}\si{k} - \frac{\rho}{T\si{k}} \Zc_{\Sup}\si{k} \right) + \BDelta_{\Sup}\si{k} \\
 & \hspace{-1.05in} = -\inv{\BGammat_{\Sup\Sup}\si{k}} \left( {[\inv{\textstyle{\BOmegac\si{k}}}]}_{\Sup} - {[\inv{\BOmegat\si{k}}]}_{\Sup} - \A_{\Sup}\si{k} - \frac{\rho}{T\si{k}} \Zc_{\Sup}\si{k} \right) + \BDelta_{\Sup}\si{k} \\
 & \hspace{-1.05in} = -\inv{\BGammat_{\Sup\Sup}\si{k}} \left( {[\inv{\BOmegat\si{k}}\hns\BDelta\si{k}\inv{\BOmegat\si{k}}]}_{\Sup} + {[\J\si{k} (\inv{\BOmegat\si{k}}\hns\BDelta\si{k})^2 \inv{\BOmegat\si{k}}]}_{\Sup} - \A_{\Sup}\si{k} - \frac{\rho}{T\si{k}} \Zc_{\Sup}\si{k} \right) + \BDelta_{\Sup}\si{k} \\
 & \hspace{-1.05in} = -\inv{\BGammat_{\Sup\Sup}\si{k}} \left( \BGammat_{\Sup\Sup}\si{k} \BDelta_{\Sup}\si{k} + \B_{\Sup}\si{k} - \A_{\Sup}\si{k} - \frac{\rho}{T\si{k}} \Zc_{\Sup}\si{k} \right) + \BDelta_{\Sup}\si{k} \\
 & \hspace{-1.05in} = -\inv{\BGammat_{\Sup\Sup}\si{k}} \left( \B_{\Sup}\si{k} - \A_{\Sup}\si{k} - \frac{\rho}{T\si{k}} \Zc_{\Sup}\si{k} \right) \; .
\end{align*}
By assumption, we have that $(\forall k){\rm\ }\|\A\si{k}\|_\infty = \|\BSigmah\si{k} - \BSigmat\si{k}\|_\infty \leq \eps$.
Recall that we assumed that $(\forall k){\rm\ }T\si{k} = 1$.
By the above, Eq.~\eqref{eq:pdlinfmatrixb} and since $(\forall k){\rm\ }\|\inv{\BGammat_{\Sup\Sup}\si{k}}\|_* \leq C_{\BGammat}$ where $C_{\BGammat}$ is defined in Eq.~\eqref{eq:gammaconst}, we have:
\begin{align} \label{eq:pdlinff}
(\forall k){\rm\ }\|\F\si{k}(\BDelta_{\Sup}\si{k})\|_\infty & \leq \|\inv{\BGammat_{\Sup\Sup}\si{k}}\|_* \| \B_{\Sup}\si{k} - \A_{\Sup}\si{k} - \frac{\rho}{T\si{k}} \Zc_{\Sup}\si{k} \|_\infty \nonumber \\
 & \leq \|\inv{\BGammat_{\Sup\Sup}\si{k}}\|_* \left( \|\B_{\Sup}\si{k}\|_\infty + \|\A_{\Sup}\si{k}\|_\infty + \frac{\rho}{T\si{k}} \|\Zc_{\Sup}\si{k}\|_\infty \right) \nonumber \\
 & \leq \|\inv{\BGammat_{\Sup\Sup}\si{k}}\|_* \left( \|\B\si{k}\|_\infty + \|\A\si{k}\|_\infty + \frac{\rho}{T\si{k}} \|\Zc_{\Sup}\|_{\infty,\dual{p}} \right) \nonumber \\
 & \leq C_{\BGammat} \left( \textstyle{ \frac{3}{2} d_{\BOmegat}C_{\BSigmat}^3 \|\BDelta\si{k}\|_\infty^2 } + \eps + \rho \right) \nonumber \\
 & = C_{\BGammat} \left( \textstyle{ \frac{3}{2} d_{\BOmegat}C_{\BSigmat}^3 \|\BDelta\si{k}\|_\infty^2 } + \eps (1+\rho/\eps) \right) \; .
\end{align}
In order to finish the proof, we set $r$ in the ball defined in Eq.~\eqref{eq:pdball} so that Eq.~\eqref{eq:pdradius}, Eq.~\eqref{eq:pdlinfmatrixb} and Eq.~\eqref{eq:pdlinff} are fulfilled with the proper constants.
Recall that $\BOmegac_{\Supc} = \zero \Rightarrow \BDelta_{\Supc} = \zero$ and thus $\|\BDelta\si{k}\|_\infty = \|\BDelta_{\Sup}\si{k}\|_\infty$.
Assume that $(\forall k){\rm\ }\|\BDelta\si{k}\|_\infty \leq r$.
Note that this implies that Eq.~\eqref{eq:pdball} is fulfilled and then $\BDelta_{\Sup} \in \Ball(r)$.
Our goal is to fulfill the conditions in Eq.~\eqref{eq:pdradius}, Eq.~\eqref{eq:pdlinfmatrixb} and Eq.~\eqref{eq:pdlinff}, that is:
\begin{align} \refstepcounter{equation}
r & \leq 1/(3d_{\BOmegat}C_{\BSigmat}) \; , \tag{\theequation.a}\label{eq:pdfinalfirst} \\
\textstyle{\frac{3}{2}} d_{\BOmegat}C_{\BSigmat}^3 r^2 & \leq \eps \; , \tag{\theequation.b}\label{eq:pdfinalsecond} \\
C_{\BGammat} \left( \textstyle{ \frac{3}{2} d_{\BOmegat}C_{\BSigmat}^3 r^2 } + \eps (1+\rho/\eps) \right) & \leq r \; . \tag{\theequation.c}\label{eq:pdfinalthird}
\end{align}
As we will show, the above statements hold by using the following setting:
\begin{align} \label{eq:pdsettingr}
r = 2 \eps C_{\BGammat} (1+\rho/\eps) \; .
\end{align}
By Eq.~\eqref{eq:samplecomplexity} and Eq.~\eqref{eq:pdsettingr}, we have that Eq.~\eqref{eq:pdfinalfirst} holds since:
\begin{align} \label{eq:pdfinalfirstproof}
r & = 2 \eps C_{\BGammat} (1+\rho/\eps) \nonumber \\
 & \leq 2 \eps C_{\BGammat} (1+\rho/\eps)^2 \nonumber \\
 & \leq \textstyle{ \min\left( \frac{1}{3d_{\BOmegat}C_{\BSigmat}} {\rm\ },{\rm\ } \frac{1}{3d_{\BOmegat}C_{\BSigmat}^3C_{\BGammat}} \right) } \; .
\end{align}
By Eq.~\eqref{eq:samplecomplexity} and Eq.~\eqref{eq:pdsettingr}, we have that Eq.~\eqref{eq:pdfinalsecond} holds since:
\begin{align*}
\textstyle{\frac{3}{2}} d_{\BOmegat}C_{\BSigmat}^3 r^2 & = \textstyle{ \left( 6 d_{\BOmegat} \eps C_{\BSigmat}^3 C_{\BGammat}^2 (1+\rho/\eps)^2 \right) \eps } \\
 & \leq \eps \; .
\end{align*}
By Eq.~\eqref{eq:pdsettingr} and Eq.~\eqref{eq:pdfinalfirstproof}, we have that Eq.~\eqref{eq:pdfinalthird} holds since:
\begin{align*}
C_{\BGammat} \left( \textstyle{ \frac{3}{2} d_{\BOmegat}C_{\BSigmat}^3 r^2 } + \eps (1+\rho/\eps) \right) & \leq C_{\BGammat} \left( \textstyle{ \frac{3}{2} d_{\BOmegat}C_{\BSigmat}^3 \frac{1}{3d_{\BOmegat}C_{\BSigmat}^3C_{\BGammat}} r } + \eps (1+\rho/\eps) \right) \\
 & = \textstyle{\frac{1}{2}} r + \eps C_{\BGammat} (1+\rho/\eps) \\
 & = \textstyle{\frac{1}{2}} r + \textstyle{\frac{1}{2}} r \\
 & = r \; .
\end{align*}
Thus, we showed that $(\forall k){\rm\ }\|\B\si{k}\|_\infty \leq \eps$.

\vspace{1em} \noindent \emph{Part IV.}
Here, we complete our proof.
Since the primal-dual witness construction succeeded, the solution $\BOmegac$ to the restricted problem in Eq.~\eqref{eq:pdrestricted} is equal to the solution $\BOmegah$ to the original (unrestricted) problem in Eq.~\eqref{eq:multitaskggm}.
Recall that $\BOmegac_{\Supc} = \zero \Rightarrow \BDelta_{\Supc} = \zero$ and thus $\|\BDelta\si{k}\|_\infty = \|\BDelta_{\Sup}\si{k}\|_\infty$.
Since $\BDelta\si{k} \equiv \BOmegat\si{k} - \BOmegac\si{k}$ by Eq.~\eqref{eq:pdsettingr} we have:
\begin{align*}
(\forall k){\rm\ }\|\BOmegat\si{k} - \BOmegah\si{k}\|_\infty \leq 2 \eps C_{\BGammat} (1+\rho/\eps) \; .
\end{align*}
By our assumption that $(\forall k,(n_1,n_2) \in \Sup){\rm\ }\omegat_{n_1n_2}\si{k} = 0 \text{\ or\ } |\omegat_{n_1n_2}\si{k}| > 4 \eps C_{\BGammat} (1+\rho/\eps)$ and by Lemma \ref{lem:signrecovery}, it follows that the edge signs of $\BOmegat$ are equal to those of $\BOmegah$.
This together with the fact that $\BOmegah_{\Supc} = \zero$ implies that \emph{$\BOmegah$ correctly recovers the support union}, i.e., $\Sup_{\BOmegah} = \Sup$.
\qedhere
\end{proof}

\begin{lemma} \label{lem:concentrationsubgaussian}
Let $\BSigmat \equiv \{\sigmat_{n_1n_2}\si{k}\}$.
Assume that for each $n$ and $k$, the random variable $x_n\si{k} / \sqrt{\sigmat_{nn}\si{k}}$ is zero-mean and sub-Gaussian with parameter $C_1$.
Assume that we are given $M$ i.i.d. samples for each of the $K$ tasks.
For some $\tau > 2$, with probability at least $1-4/N^{\tau-2}$, we have:
\begin{align*}
(\forall k){\rm\ }\|\BSigmah\si{k} - \BSigmat\si{k}\|_\infty \leq \sqrt{\frac{\log K + \tau \log N}{M}} \left( 8\sqrt{2}(1+4C_1^2) \max_{nk}{\sigmat_{nn}\si{k}} \right) \; .
\end{align*}
\end{lemma}
\begin{proof}
Note that in order to bound $\|\BSigmah-\BSigmat\|_\infty$, we need to simultaneously bound each entry of $\BSigmah-\BSigma$.
We use Lemma 1 in \citep{Ravikumar11} since we assume that $x_n\si{k} / \sqrt{\sigmat_{nn}\si{k}}$ is zero-mean and sub-Gaussian with parameter $C_1$.
Thus, by using Lemma 1 in \citep{Ravikumar11} (for bounding each specific entry) and the union bound (for bounding simultaneously all entries for all tasks), we have for $\eps \in (0,8(1+4C_1^2) \max_{nk}{\sigmat_{nn}\si{k}})$:
\begin{align*}
\P[ (\exists k){\rm\ }\|\BSigmah\si{k}-\BSigmat\si{k}\|_\infty \geq \eps ] \leq 4KN^2 {\rm\ } \lexp{-\frac{M\eps^2}{128(1+4C_1^2)^2 (\max_{nk}{\sigmat_{nn}\si{k}})^2}} = \delta \; .
\end{align*}
By solving for $\eps$ in the above, we get $\eps = \sqrt{\frac{\log K + \log{(4N^2/\delta)}}{M}} \left( 8\sqrt{2}(1+4C_1^2) \max_{nk}{\sigmat_{nn}\si{k}} \right)$.
By setting $\delta = 4/N^{\tau-2}$, we prove our claim.
\qedhere
\end{proof}

Next, we provide the final proof.
\begin{proof}[\textbf{Proof of Theorem \ref{thm:consistencysubgaussian}}]
Lemma \ref{lem:consistency} assumed $(\forall k){\rm\ }\|\BSigmah\si{k} - \BSigmat\si{k}\|_\infty \leq \eps$.
By Lemma \ref{lem:concentrationsubgaussian}, we have $\eps = \sqrt{\frac{\log K + \tau \log N}{M}} \left( 8\sqrt{2}(1+4C_1^2) \max_{nk}{\sigmat_{nn}\si{k}} \right)$ with probability at least $1-4/N^{\tau-2}$.
By invoking both lemmas, we prove our claim.
\qedhere
\end{proof}

\subsection{Proof of Theorem \ref{thm:consistencygeneral}}

Here, we provide the detailed proof of Theorem \ref{thm:consistencygeneral}.
First, we derive an intermediate lemma needed for the final proof.
We also use Lemma \ref{lem:consistency} from the previous sub-section.
\begin{lemma} \label{lem:concentrationgeneral}
Let $\BSigmat \equiv \{\sigmat_{n_1n_2}\si{k}\}$.
Assume that for each $n$ and $k$, the random variable $x_n\si{k} / \sqrt{\sigmat_{nn}\si{k}}$ is zero-mean and has $4\m$-th moments upper bounded by $C_1$.
Assume that we are given $M$ i.i.d. samples for each of the $K$ tasks.
For some $\tau > 2$, with probability at least $1-1/N^{\tau-2}$, we have:
\begin{align*}
(\forall k){\rm\ }\|\BSigmah\si{k} - \BSigmat\si{k}\|_\infty \leq \sqrt{\frac{K^{1/\m} N^{\tau/\m}}{M}} \left( 2\m(\m(C_1+1))^{\frac{1}{2\m}} \max_{nk}{\sigmat_{nn}\si{k}} \right) \; .
\end{align*}
\end{lemma}
\begin{proof}
Note that in order to bound $\|\BSigmah-\BSigma\|_\infty$, we need to simultaneously bound each entry of $\BSigmah-\BSigma$.
We use Lemma 2 in \citep{Ravikumar11} since we assume that $x_n\si{k}$ is zero-mean, and that there is a positive integer $\m$ and a scalar $C_1$ such that $\E[(x_n\si{k} / \sqrt{\sigmat_{nn}\si{k}})^{4\m}] \leq C_1$.
Thus, by using Lemma 2 in \citep{Ravikumar11} (for bounding each specific entry) and the union bound (for bounding simultaneously all entries for all tasks), we have:
\begin{align*}
\P[ (\exists k){\rm\ }\|\BSigmah\si{k}-\BSigmat\si{k}\|_\infty \geq \eps ] \leq KN^2 {\rm\ } \frac{\m^{2\m+1}2^{2\m}(C_1+1) (\max_{nk}{\sigmat_{nn}\si{k}})^{2\m}}{M^\m\eps^{2\m}} = \delta \; .
\end{align*}
By solving for $\eps$ in the above, we obtain $\eps = \sqrt{\frac{K^{1/\m} N^{2/\m}}{M \delta^{1/\m}}} \left( 2\m(\m(C_1+1))^{\frac{1}{2\m}} \max_{nk}{\sigmat_{nn}\si{k}} \right)$.
By setting $\delta = 1/N^{\tau-2}$, we prove our claim.
\qedhere
\end{proof}

Next, we provide the final proof.
\begin{proof}[\textbf{Proof of Theorem \ref{thm:consistencygeneral}}]
Lemma \ref{lem:consistency} assumed $(\forall k){\rm\ }\|\BSigmah\si{k} - \BSigmat\si{k}\|_\infty \leq \eps$.
By Lemma \ref{lem:concentrationgeneral}, we have $\eps = \sqrt{\frac{K^{1/\m} N^{\tau/\m}}{M}} \left( 2\m(\m(C_1+1))^{\frac{1}{2\m}} \max_{nk}{\sigmat_{nn}\si{k}} \right)$ with probability at least $1-1/N^{\tau-2}$.
By invoking both lemmas, we prove our claim.
\qedhere
\end{proof}

\subsection{Proof of Theorem \ref{thm:tight}}

Here, we provide the detailed proof of Theorem \ref{thm:tight}.
First, we derive an intermediate lemma needed for the final proof.
\begin{lemma} \label{lem:bessel}
Assume $(x\si{1},y\si{1}),\dots,(x\si{M},y\si{M})$ are $M$ i.i.d. samples drawn from a bivariate Gaussian distribution with zero mean and identity covariance.
The random variable $z = \frac{1}{M} \sum_m x\si{m}y\si{m}$ follows a ``Bessel'' distribution with parameter $M$, with density function:
\begin{align*}
\frac{ M |M z/2|^{(M-1)/2} {\rm\ } \BesselK_{(M-1)/2}(|M z|) }{ \sqrt{\pi} {\rm\ } \Gamma(M/2) } \; ,
\end{align*}
\noindent where $\BesselK$ denotes the modified Bessel function of the second kind, and $\Gamma$ denotes the Gamma function.
Furthermore:
\begin{align*}
\P_z[|z| \leq \eps] & = 
\frac{ \left(\frac{\pi}{2}\right)^{(1-M {\rm\ mod\ } 2)/2} 2^{(M-1)/2} {\rm\ } \MeijerG_{1,3}^{2,1}\left( \begin{smallmatrix}1 \\ 1/2, & M/2, & 0\end{smallmatrix} \left| M^2 \eps^2/4 \right.\right) }{(M-2)!! {\rm\ } \pi} \; , \\
\P_z[|z| > \eps] & \leq 2\exp{-M\eps^2/8} \text{\ \ for\ \ }\eps \leq 3.96 \; , \\
\P_z[|z| > \eps] & \geq \exp{-\sqrt{2M}\eps - M\eps^2} \; ,
\end{align*}
\noindent where $\MeijerG$ denotes the Meijer G-function and $!!$ denotes the double factorial.
\end{lemma}
\begin{proof}
Note that $x$ and $y$ are jointly independent and have density functions $f_x(x) = \exp{-x^2/2}/\sqrt{2\pi}$ and $f_y(y) = \exp{-y^2/2}/\sqrt{2\pi}$ respectively.
By the product distribution formula, the density function of $w = xy$ is given by $f_w(w) = \int_{-\infty}^{+\infty}f_x(x)f_y(w/x)/|x|dx = \BesselK_0(|w|)/\pi$.
The characteristic function of $f_w$ is given by $\varphi_w(t) = \int_{-\infty}^{+\infty}\exp{i t w}f_w(w)dw = 1/\sqrt{1+t^2}$.
The density function of the sum $s = \sum_m w\si{m}$ is given by $f_s(s) = \int_{-\infty}^{+\infty}\exp{-i t s}\varphi_w(t)^Mdt = |s/2|^{(M-1)/2} {\rm\ } \BesselK_{(M-1)/2}(|s|)/(\sqrt{\pi} {\rm\ } \Gamma(M/2))$.
Note that $z$ as a function of $s$ can be defined as $z(s) = s/M$ and therefore $s(z) = M z$.
By change of variables, the density function $f_z(z) = |\partial s(z)/\partial z| f_s(s(z))$ is the one stated in our claim.

The exact probability $\P_z[|z| \leq \eps] = \int_{-\eps}^{+\eps}f_z(z)dz$.
The lower bound for $\P_z[|z| \leq \eps]$ can be obtained by using Taylor series.
The upper bound for $\P_z[|z| \leq \eps]$ comes from the fact that the moment generating function of $w = xy$ fulfills $\E_w[\exp{uw}] = 1/\sqrt{1-u^2} \leq \exp{2u^2}$ for $|u| \leq 0.99$.
By Markov's inequality and since $w\si{1},\dots,w\si{M}$ are jointly independent, we have:
\begin{align*}
\P_z[z > \eps] & = \P_{w\si{1},\dots,w\si{M}}\left[ \frac{1}{M} \sum_m w\si{m} > \eps \right] \\
 & = \P_{w\si{1},\dots,w\si{M}}\left[ \exp{\frac{t}{M} \sum_m w\si{m}} > \exp{t\eps} \right] \\
 & \leq \E_{w\si{1},\dots,w\si{M}}[ \exp{\frac{t}{M} \sum_m w\si{m}} ] \exp{-t\eps} \\
 & = \E_w[ \exp{\frac{t}{M}w} ]^M \exp{-t\eps} \\
 & \leq \exp{2\frac{t^2}{M^2}M-t\eps} \; .
\end{align*}
By optimally setting $t = M\eps/4$, we have $\P_z[z > \eps] \leq \exp{-M\eps^2/8}$.
By the union bound, we have $\P_z[|z| > \eps] \leq \P_z[z > \eps] + \P_z[-z > \eps]$, and we prove that the stated upper bound holds.
Note that we used the moment generating function upper bound for $u = t/M = \eps/4 \leq 0.99$ and thus $\eps \leq 3.96$.
\qedhere
\end{proof}

Next, we provide the final proof.
\begin{proof}[\textbf{Proof of Theorem \ref{thm:tight}}]
The proof is a specialization of Lemma \ref{lem:consistency} for a specific class of graphs.
This will allow us obtain tighter results than the ones presented in Theorems \ref{thm:consistencysubgaussian} and \ref{thm:consistencygeneral}.
For clarity, let the support union of the true model be $\Sup \equiv \Supt$.
Let $\Supc$ be the complement of $\Sup$.
Additionally, we assume that every task contains the same number of samples, i.e., $(\forall k){\rm\ }T\si{k} = 1$.
Given the length of the proof, we split it into four parts.

\vspace{1em} \noindent \emph{Part I.}
Here, we present our restricted model.
Assume that $N$ is an even number and that we partition the set of nodes $\V = \{1,\dots,N\}$ into $N/2$ pairs $\T_1,\dots,\T_{N/2}$.
That is, $\V = \cup_i{\T_i}$ and $|\T_i|=2$ for all $i$.
Let the true model $\BOmegat$ have support:
\begin{align} \label{eq:tightsupportdef}
\Sup \equiv \{(n_1,n_2) \mid (\exists i){\rm\ }\T_i = \{n_1,n_2\} \} \; .
\end{align}
Furthermore, assume that the algorithm knows that the diagonal elements in the true model $\BOmegat$ are all equal to $1$.
With respect to our general analysis in Lemma \ref{lem:consistency}, several things simplify in this restricted model.
For each task $k$, the true model $\BOmegat$ is given by:
\begin{align*}
\BOmegat\si{k} \equiv \left[ \begin{array}{cccc}
\BOmegat_{\T_1\T_1}\si{k} & \zero & \dotsm & \zero \\
\zero & \BOmegat_{\T_2\T_2}\si{k} & \dotsm & \zero \\
\vdots & \vdots & \ddots & \vdots \\
\zero & \zero & \dotsm & \BOmegat_{\T_{N/2}\T_{N/2}}\si{k}
\end{array} \right] \; ,
\end{align*}
\noindent where for the node pair $\T_i = \{n_1,n_2\}$, we define:
\begin{align*}
\BOmegat_{\T_i\T_i}\si{k} \equiv \left[ \begin{array}{cc}
1 & \omega_{n_1n_2}\si{k} \\
\omega_{n_1n_2}\si{k} & 1 \\
\end{array} \right] \; .
\end{align*}

\vspace{1em} \noindent \emph{Part II.}
Here, we show that a specifically constructed function $f$ governs the recovery of the true model.
Let $x_n\si{m,k}$ be the value of variable $n$ for sample $m$ and task $k$.
Let $\DD_{n_1n_2} = \{x_{n_1}\si{m,k},x_{n_2}\si{m,k}\}$ denote the data set formed by using only two variables $n_1$ and $n_2$.
Define the function $f : \R^{2 \times M \times K} \to \R$ as:
\begin{align} \label{eq:tightf}
f(\DD_{n_1n_2}) = \left\|\left(\frac{1}{M} \sum_m{x_{n_1}\si{m,1}x_{n_2}\si{m,1}},\dots,\frac{1}{M} \sum_m{x_{n_1}\si{m,K}x_{n_2}\si{m,K}}\right)\right\|_\dual{p} \; .
\end{align}
Note that by step \eqref{item:pdrestricted} in the proof of Lemma \ref{lem:consistency}, we have $\BOmegac_{\Supc} = \zero \Rightarrow [{\inv{\textstyle{\BOmegac\si{k}}}]}_{\Supc} = \zero$ for all $k$.
Additionally, since we assume that $(\forall k){\rm\ }T\si{k} = 1$, the \emph{stationarity condition} in Eq.~\eqref{eq:pdstationaritynonsupp} is equivalent to:
\begin{align*}
(\forall k){\rm\ }\Zc_{\Supc}\si{k} & = \frac{1}{\rho} ({[\inv{\textstyle{\BOmegac\si{k}}}]}_{\Supc} - \BSigmah_{\Supc}\si{k}) \nonumber \\
 & = -\frac{1}{\rho} \BSigmah_{\Supc}\si{k} \; .
\end{align*}
Thus, by using Eq.~\eqref{eq:tightf}, we have:
\begin{align} \label{eq:tightnonsupport}
\|\Zc_{\Supc}\|_{\infty,\dual{p}} = \frac{1}{\rho} \max_{(n_1,n_2) \in \Supc}f(\DD_{n_1n_2}) \; .
\end{align}
As we know, the above is related to the correct exclusion of all non-edges.
Next, we focus on the correct inclusion of all edges.
Recall that in our specific graph, the support $\Sup$ is form by a partition of $N/2$ node pairs $\T_1,\dots,\T_{N/2}$ as in Eq.~\eqref{eq:tightsupportdef}.
Furthermore, since we assume that $(\forall k){\rm\ }T\si{k} = 1$, the restricted problem in Eq.~\eqref{eq:pdrestricted} reduces to solving for all node pairs $i$:
\begin{align*}
\BOmegac_{\T_i\T_i} = \argmax_{(\forall k){\rm\ }\BOmega_{\T_i\T_i}\si{k}\succ \zero}\left(\sum_k{\ell_{\BSigmah_{\T_i\T_i}\si{k}}(\BOmega_{\T_i\T_i}\si{k})} - \rho\|\BOmega_{\T_i\T_i}\|_{1,p} \right) \; .
\end{align*}
Since we assume that the algorithm knows that the diagonal elements in the true model $\BOmegat$ are all equal to $1$, the above equation is equivalent to solving for every node pair ${\T_i = \{n_1,n_2\}}$:
\begin{align*}
\vomegac_{n_1n_2} = \argmax_{\vomega_{n_1n_2}}\left(\sum_k\left(\log{(1-{\omega_{n_1n_2}\si{k}}^2)} - \sigmah_{n_1n_2}\si{k}\omega_{n_1n_2}\si{k}\right) - \rho\|\vomega_{n_1n_2}\|_p\right) \; .
\end{align*}
The above problem has the minimizer $\vomegac_{n_1n_2}=\zero$ if and only if $\zero$ belongs to the subdifferential set of the non-smooth objective function at $\vomega_{n_1n_2}=\zero$.
That is:
\begin{align*}
 & (\forall k){\rm\ } \left. 0 = -\frac{2 \omega_{n_1n_2}\si{k}}{1-{\omega_{n_1n_2}\si{k}}^2} - \sigmah_{n_1n_2}\si{k} - \rho \zc_{n_1n_2}\si{k} \right|_{\vomega_{n_1n_2}=\zero} \\
\Rightarrow{\rm\ } & \vzc_{n_1n_2} = -\frac{1}{\rho} \vsigmah_{n_1n_2} \; ,
\end{align*}
\noindent where $\vzc_{n_1n_2} \in \R^K$ is the dual variable satisfying the \emph{dual feasibility} condition $\|\vzc_{n_1n_2}\|_\dual{p} \leq 1$.
Since the diagonal elements of $\BOmegac_{\T_i\T_i}$ are all equal to $1$, we impose $\|\vomegac_{n_1n_2}\|_\dual{p} < 1$ to guarantee positive definiteness.
Thus, we can conclude that the optimal solution $\vomegac_{n_1n_2}$ fulfills:
\begin{align*}
\vomegac_{n_1n_2} = \zero {\rm\ } \Leftrightarrow {\rm\ } \rho>1 \text{\ \ or\ \ } (\rho \leq 1 \text{\ \ and\ \ } \|\vsigmah_{n_1n_2}\|_\dual{p} \leq \rho) \; ,
\end{align*}
\noindent or equivalently by using Eq.~\eqref{eq:tightf}, for all $(n_1n_2) \in \Sup$:
\begin{align} \label{eq:tightsupport}
\vomegac_{n_1n_2} \neq \zero {\rm\ } \Leftrightarrow {\rm\ } \rho \leq 1 \text{\ \ and\ \ } f(\DD_{n_1n_2}) > \rho \; .
\end{align}
Thus, we obtained a simple expression for the correct inclusion of all edges.

\vspace{1em} \noindent \emph{Part III.}
Here, we analyze recovery success.
The condition for correctly excluding all non-edges is that $\|\Zc\|_{\infty,\dual{p}} < 1$.
By Eq.~\eqref{eq:tightnonsupport}, this is equivalent to $\max_{(n_1,n_2) \in \Supc}f(\DD_{n_1n_2}) < \rho$.
The condition for correctly including all edges is given by Eq.~\eqref{eq:tightsupport}.
Putting both conditions together, our goal is to lower-bound the probability that:
\begin{align*}
\rho \leq 1 \text{\ \ \ and\ \ \ } \max_{(n_1,n_2) \in \Supc}f(\DD_{n_1n_2}) < \rho \text{\ \ \ and\ \ \ } (\forall (n_1,n_2) \in \Sup){\rm\ }f(\DD_{n_1n_2}) > \rho \; .
\end{align*}
Recall that $|\Supc| \leq N^2 - N/2$.
Assume that $\rho/K^{1/\dual{p}} \leq 3.96$.
By the union bound and Lemma \ref{lem:bessel}, we have:
\begin{align} \label{eq:tightsuccessnonsup}
\P\left[ \max_{(n_1,n_2) \in \Supc}f(\DD_{n_1n_2}) > \rho \right] & = \P[ (\exists (n_1,n_2) \in \Supc) {\rm\ } f(\DD_{n_1n_2}) > \rho ] \nonumber \\
 & \leq \P\left[ (\exists (n_1,n_2) \in \Supc,k) {\rm\ } \left|\frac{1}{M} \sum_m{x_{n_1}\si{m,k}x_{n_2}\si{m,k}}\right| > \frac{\rho}{K^{1/\dual{p}}} \right] \nonumber \\
 & \leq \left(N^2-\frac{N}{2}\right)K {\rm\ } \P\left[ \left|\frac{1}{M} \sum_m{x_{n_1}\si{m,k}x_{n_2}\si{m,k}}\right| > \frac{\rho}{K^{1/\dual{p}}} \right] \nonumber \\
 & \leq (2N^2K-NK) \exp{-M\rho^2K^{-2/\dual{p}}/8} \; .
\end{align}
Next, we use Lemma 1 in \citep{Ravikumar11} since we assume that $x_n\si{k} / \sqrt{\sigmat_{nn}\si{k}}$ is zero-mean and sub-Gaussian with parameter $1$.
Recall that in our specific graph, the support $\Sup$ is form by a partition of $N/2$ node pairs $\T_1,\dots,\T_{N/2}$ as in Eq.~\eqref{eq:tightsupportdef}.
Assume that $2\rho/K^{1/\dual{p}} \leq \min_{(n_1,n_2) \in \Sup,k}{|\sigmat_{n_1n_2}\si{k}|}$.
Assume that $\rho/K^{1/\dual{p}} \in (0,40 \max_{nk}{\sigmat_{nn}\si{k}})$.
By the union bound and Lemma 1 in \citep{Ravikumar11}, we have:
\begin{align} \label{eq:tightsuccesssup}
\P\left[ (\exists (n_1,n_2) \in \Sup) {\rm\ } f(\DD_{n_1n_2}) < \rho \right] & \leq \P\left[ (\exists (n_1,n_2) \in \Sup,k) {\rm\ } \left|\frac{1}{M} \sum_m{x_{n_1}\si{m,k}x_{n_2}\si{m,k}}\right| < \frac{\rho}{K^{1/\dual{p}}} \right] \nonumber \\
 & \leq \frac{NK}{2} {\rm\ } \P\left[ \left|\frac{1}{M} \sum_m{x_{n_1}\si{m,k}x_{n_2}\si{m,k}}\right| < \frac{\rho}{K^{1/\dual{p}}} \right] \nonumber \\
 & \leq \frac{NK}{2} {\rm\ } \P\left[ \sigmat_{n_1n_2}\si{k} - \frac{1}{M} \sum_m{x_{n_1}\si{m,k}x_{n_2}\si{m,k}} > \sigmat_{n_1n_2}\si{k} - \frac{\rho}{K^{1/\dual{p}}} \right] \nonumber \\
 & \leq \frac{NK}{2} {\rm\ } \P\left[ \sigmat_{n_1n_2}\si{k} - \frac{1}{M} \sum_m{x_{n_1}\si{m,k}x_{n_2}\si{m,k}} > \frac{\rho}{K^{1/\dual{p}}} \right] \nonumber \\
 & \leq NK {\rm\ } \lexp{-\frac{M\rho^2K^{-2/\dual{p}}}{128 {\rm\ } 5^2 (\max_{nk}{\sigmat_{nn}\si{k}})^2}} \; .
\end{align}
In the above, without loss of generality, we assumed that $\sigmat_{nn}\si{k}>0$, the proof is carried in the same way in the case that $\sigmat_{nn}\si{k}<0$.
For convenience, in order to obtain a simple expression, assume that $\max_{nk}{\sigmat_{nn}\si{k}} \leq 1/20$.
Thus, from Eq.~\eqref{eq:tightsuccessnonsup} and Eq.~\eqref{eq:tightsuccesssup}, we have:
\begin{align*}
\P\left[ \max_{(n_1,n_2) \in \Supc}f(\DD_{n_1n_2}) > \rho \text{\ or\ } (\exists (n_1,n_2) \in \Sup) {\rm\ } f(\DD_{n_1n_2}) < \rho \right] \leq 2N^2K \exp{-M\rho^2K^{-2/\dual{p}}/8} = \delta \; .
\end{align*}
By solving for $M$ in the above, we get that recovery success is guaranteed with probability at least $1-\delta$, provided that $M \geq \frac{8 K^{2/\dual{p}}}{\rho^2} \left(\log K + 2\log N + \log{\frac{2}{\delta}}\right)$.

\vspace{1em} \noindent \emph{Part IV.}
Here, we analyze recovery failure by a proof by contradiction.
Our goal is to show that if a solution $\BOmegac$ exists with $\BOmegac_{\Supc} = \zero$, then $\|\Zc\|_{\infty,\dual{p}} > 1$.
This contradicts the dual feasibility condition in Eq.~\eqref{eq:pddualfeasibility}, and thus $\BOmegac$ is not an optimal solution from the Karush-Kuhn-Tucker conditions.
By Eq.~\eqref{eq:tightnonsupport}, our goal is to lower-bound the probability that:
\begin{align*}
\max_{(n_1,n_2) \in \Supc}f(\DD_{n_1n_2}) > \rho \; .
\end{align*}
Next, we partition the set of nodes $\V = \{1,\dots,N\}$ into $N/2$ pairs $\U_1,\dots,\U_{N/2}$ as follows.
For $i < N/2$, the pair $\U_i$ contains one node from $\T_i$ and one node from $\T_{i+1}$.
The pair $\U_{N/2}$ contains one node from $\T_{N/2}$ and one node from $\T_1$.
Note that by construction, if $T_i = \{n_1,n_2\}$ then $x_{n_1}\si{k}$ and $x_{n_2}\si{k}$ are independent for all $k$.
Define:
\begin{align*}
\U \equiv \{(n_1,n_2) \mid (\exists i){\rm\ }\U_i = \{n_1,n_2\} \} \; .
\end{align*}
By Lemma \ref{lem:bessel} and since $\U \in \Supc$, we have:
\begin{align*}
\P\left[ \max_{(n_1,n_2) \in \Supc}f(\DD_{n_1n_2}) > \rho \right] & = \P[ (\exists (n_1,n_2) \in \Supc) {\rm\ } f(\DD_{n_1n_2}) > \rho ] \\
 & \geq \P[ (\exists (n_1,n_2) \in \U) {\rm\ } f(\DD_{n_1n_2}) > \rho ] \\
 & \geq \P\left[ (\exists (n_1,n_2) \in \U,k) {\rm\ } \left|\frac{1}{M} \sum_m{x_{n_1}\si{m,k}x_{n_2}\si{m,k}}\right| > \rho \right] \\
 & = 1 - \P\left[ (\forall (n_1,n_2) \in \U,k) {\rm\ } \left|\frac{1}{M} \sum_m{x_{n_1}\si{m,k}x_{n_2}\si{m,k}}\right| \leq \rho \right] \\
 & = 1 - \P\left[ \left|\frac{1}{M} \sum_m{x_{n_1}\si{m,k}x_{n_2}\si{m,k}}\right| \leq \rho \right]^{NK/2} \\
 & = 1 - \left(1 - \P\left[ \left|\frac{1}{M} \sum_m{x_{n_1}\si{m,k}x_{n_2}\si{m,k}}\right| > \rho \right]\right)^{NK/2} \\
& \geq 1 - (1 - \exp{-\sqrt{2M}\rho - M\rho^2})^{NK/2} \\
 & = 1-\delta \; .
\end{align*}
By solving for $M$ in the above, we get that recovery failure is guaranteed with probability at least $1-\delta$, provided that $M \leq \frac{1}{\rho^2} \left(1+\log{\frac{1}{1-\delta^{2/(NK)}}} + \sqrt{1+2\log{\frac{1}{1-\delta^{2/(NK)}}}}\right)$.

To complete the proof, we set $\rho = \eps K^{1/\dual{p}}$.
Note that we assume that $\eps = \rho/K^{1/\dual{p}} \leq \min{(3.96, 40 \max_{nk}{\sigmat_{nn}\si{k}}, \min_{(n_1,n_2) \in \Sup,k}{|\sigmat_{n_1n_2}\si{k}|}/2)}$.
By positive definiteness, we have that $\min_{(n_1,n_2) \in \Sup,k}{|\sigmat_{n_1n_2}\si{k}|} \leq \max_{nk}{\sigmat_{nn}\si{k}}$.
Furthermore, we assumed that $\max_{nk}{\sigmat_{nn}\si{k}} \leq 1/20$ and therefore $\eps \leq 1/40$.
\qedhere
\end{proof}

\subsection{Proof of Theorem \ref{thm:inftheorylower}}

\begin{proof}
Assume we construct a block-diagonal matrix $\BOmegat\si{u} \in \R^{KN \times KN}$ by arranging the precision matrices $\BOmegat\si{u,1},\dots,\BOmegat\si{u,K} \in \R^{N \times N}$ on its diagonal.
Similarly, construct a matrix $\BOmega(\DD) \in \R^{KN \times KN}$ by arranging the precision matrices $\BOmega\si{1}(\DD),\dots,\BOmega\si{K}(\DD) \in \R^{N \times N}$ on its diagonal.
Note that $\BOmegat\si{u} \succ \zero$ and $\BOmega(\DD) \succ \zero$ since their blocks are positive definite as well.
Furthermore, note that under this setting, each sample contains $KN$ variables, and thus $\DD$ contains exactly $M$ samples in total.
Our results follow from invoking Theorems 1 and 2 in \citep{Wang10} by using the $M$ samples and $KN$ variables.
\qedhere
\end{proof}

\subsection{Proof of Theorem \ref{thm:diagonalstep}}

\begin{proof}
If $\BOmega\si{k}$ is a symmetric matrix, according to the Haynsworth inertia formula, ${\BOmega\si{k}\succ \zero}$ if and only if its Schur complement $z\si{k}-\t{\y\si{k}}\inv{\W\si{k}}\y\si{k}>0$ and $\W\si{k}\succ \zero$.
By maximizing Eq.~\eqref{DerivMultiTaskGGM} with respect to $z\si{k}$, we get $z\si{k}-\t{\y\si{k}}\inv{\W\si{k}}\y\si{k}=\frac{1}{v\si{k}}$.
This equality defines the optimal value for $z\si{k}$ in Eq.~\eqref{eq:diagonalstep}.
Since $v\si{k}>0$, the Schur complement $z\si{k}-\t{\y\si{k}}\inv{\W\si{k}}\y\si{k}$ is strictly positive.

Finally, in an iterative optimization algorithm, it suffices to initialize $\BOmega\si{k}$ to a matrix that is known to be positive definite, e.g., a diagonal matrix with positive elements.
\qedhere
\end{proof}

\subsection{Proof of Theorem \ref{thm:lpquad}}

\begin{proof}
By replacing the optimal $z\si{k}$ given by Theorem \ref {thm:diagonalstep} into the objective function in Eq.~\eqref{DerivMultiTaskGGM}, we get:
\begin{align} \label{VectorMultiTaskGGM}
\min_{(\forall k){\rm\ }\y\si{k}\in \R^{N-1}}\left(\begin{array}{l}
  \sum_k{T\si{k}\left(\frac{1}{2}\t{\y\si{k}}v\si{k}\inv{\W\si{k}}\y\si{k} +\t{\u\si{k}}\y\si{k}\right)} \\
  +\rho\sum_n\|(y_n\si{1},\dots,y_n\si{K})\|_p
\end{array}\right) \; .
\end{align}
Since $\W\si{k}\succ \zero \Rightarrow \inv{\W\si{k}}\succ \zero$, hence Eq.~\eqref{VectorMultiTaskGGM} is strictly convex.

Without loss of generality, we use the last row/column in our presentation, since permutation of rows and columns is always possible.
Let:
\begin{align*}
\inv{\W\si{k}} = \left[ \begin{array}{cc}
\H_{11}\si{k} & \h_{12}\si{k} \\
\t{\h_{12}\si{k}} & h_{22}\si{k}
\end{array} \right]
\; , \;
\y\si{k} = \left[ \begin{array}{c}
\y_1\si{k} \\
x_k
\end{array} \right]
\; , \;
\u\si{k} = \left[ \begin{array}{c}
\u_1\si{k} \\
u_2\si{k}
\end{array} \right] \; ,
\end{align*}
\noindent where $\H_{11}\si{k}\in \R^{N-2 \times N-2}$, $\h_{12}\si{k},\y_1\si{k},\u_1\si{k}\in \R^{N-2}$.

In terms of the variable $\x$ and the constants $q_k = T\si{k}v\si{k}h_{22}\si{k}$, $c_k = -T\si{k}(v\si{k}\t{\h_{12}\si{k}}\y_1\si{k}+u_2\si{k})$, the $\ell_{1,p}$ regularized quadratic problem in Eq.~\eqref{VectorMultiTaskGGM} can be reformulated as in Eq.~\eqref{eq:lpquad}.
Moreover, since $(\forall k){\rm\ }T\si{k}>0, v\si{k}>0, h_{22}\si{k}>0 \Rightarrow \q>\zero$, and therefore Eq.~\eqref{eq:lpquad} is strictly convex.
\qedhere
\end{proof}

\subsection{Proof of Theorem \ref{thm:quadknapsack}}

Here, we provide the detailed proof of Theorem \ref{thm:quadknapsack}.
First, we derive some intermediate lemmas needed for the final proof.
\begin{lemma} \label{LinfQuadIsQuadL1}
For $\q>\zero$, $\rho >0$, $p=\infty$, the $\ell_p$ regularized separable quadratic problem in Eq.~\eqref{eq:lpquad} is equivalent to the separable quadratic problem with one $\ell_1$ constraint:
\begin{align} \label{QuadL1Cons}
\min_{\left\|\r\right\|_1\leq \rho}\left(\frac{1}{2}\t{(\r-\c)}\inv{\diag(\q)}(\r-\c)\right) \; .
\end{align}
Furthermore, their optimal solutions are related by $\x^*=\inv{\diag(\q)}(\c-\r^*)$.
\end{lemma}
\begin{proof}
By Lagrangian duality, the problem in Eq.~\eqref{QuadL1Cons} is the dual of the problem in Eq.~\eqref{eq:lpquad}.
Furthermore, strong duality holds in this case.
\qedhere
\end{proof}

\begin{remark} \label{QuadL1NoZeros}
In Eq.~\eqref{QuadL1Cons}, we can assume that $(\forall k){\rm\ }c_k\neq 0$.
If $(\exists k){\rm\ }c_k=0$, the partial optimal solution is $r_k^*=0$, and since this assignment does not affect the constraint, we can safely remove $r_k$ from the optimization problem.
\end{remark}

\begin{remark} \label{QuadL1MoreThanRho}
In what follows, we assume that $\|\c\|_1 >\rho$.
If $\|\c\|_1\leq \rho$, the unconstrained optimal solution of Eq.~\eqref{QuadL1Cons} is also its optimal solution, since $\r^*=\c$ is inside the feasible region given that $\|\r^*\|_1\leq \rho$.
\end{remark}

\begin{lemma} \label{QuadL1SameOrthant}
For $\q>\zero$, $(\forall k){\rm\ }c_k\neq 0$, $\|\c\|_1 >\rho$, the optimal solution $\r^*$ of the separable quadratic problem with one $\ell_1$ constraint in Eq.~\eqref{QuadL1Cons} belongs to the same orthant as the unconstrained optimal solution $\c$, i.e., $(\forall k){\rm\ }r_k^*c_k\geq 0$.
\end{lemma}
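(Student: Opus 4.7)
The plan is to prove the claim by contradiction. Suppose there is some index $k$ for which $r_k^* c_k < 0$. In particular this means $r_k^* \neq 0$ and its sign is opposite to that of $c_k$. I will construct a strictly better feasible point by flipping the sign of the offending coordinate, contradicting optimality of $\r^*$.

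Concretely, I would define $\tilde{\r}$ by $\tilde{r}_j = r_j^*$ for $j \neq k$ and $\tilde{r}_k = -r_k^*$. Feasibility is immediate, since $\|\tilde{\r}\|_1 = \|\r^*\|_1 \leq \rho$. For the objective, because the quadratic form in eq.\eqref{QuadL1Cons} is separable with positive weights $1/q_k$, only the $k$-th term changes. Since $r_k^*$ and $c_k$ have opposite signs, $|r_k^* - c_k| = |r_k^*| + |c_k|$, whereas $|-r_k^* - c_k| = \bigl||c_k| - |r_k^*|\bigr| < |r_k^*| + |c_k|$. Hence
\begin{equation*}
\frac{(\tilde{r}_k - c_k)^2}{2 q_k} < \frac{(r_k^* - c_k)^2}{2 q_k},
\end{equation*}
so $\tilde{\r}$ attains a strictly smaller objective value while remaining feasible. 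This contradicts optimality of $\r^*$, hence $r_k^* c_k \geq 0$ for every $k$.

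The main subtlety, and the only step that requires any care, is verifying that the flip strictly decreases the separable objective. This follows from the elementary inequality above, which uses only the fact that $q_k > 0$ (given) and that $r_k^*$ and $c_k$ are nonzero with opposite signs. The hypotheses $(\forall k)\, c_k \neq 0$ and $\|\c\|_1 > \rho$ are not strictly needed for the contradiction argument itself, but they are consistent with Remarks \ref{QuadL1NoZeros} and \ref{QuadL1MoreThanRho} that remove the trivial cases in which the claim would hold vacuously or by $\r^* = \c$.
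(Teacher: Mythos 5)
Your proof is correct and follows essentially the same route as the paper: a contradiction argument that perturbs only the offending coordinate to obtain a feasible point with a strictly smaller separable objective. The only cosmetic difference is that you reflect the coordinate ($\tilde{r}_k=-r_k^*$, preserving $\|\r\|_1$) whereas the paper sets it to zero (decreasing $\|\r\|_1$); both yield a strict decrease of $\frac{1}{2q_k}$ times a positive quantity, so the argument goes through either way.
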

\begin{proof}
We prove this by contradiction.
Assume $(\exists k_1){\rm\ }r_{k_1}^*c_{k_1}< 0$.
Let $\r$ be a vector such that $r_{k_1}=0$ and $(\forall k_2 \neq k_1){\rm\ }r_{k_2}=r_{k_2}^*$.
The solution $\r$ is feasible, since $\|\r^*\|_1\leq \rho$ and $\|\r\|_1=\|\r^*\|_1-|r_{k_1}^*|\leq \rho$.
The difference in the objective function between $\r^*$ and $\r$ is:
\begin{align*}
\textstyle{ \frac{1}{2}\t{(\r^*-\c)}\inv{\diag(\q)}(\r^*-\c) - \frac{1}{2}\t{(\r-\c)}\inv{\diag(\q)}(\r-\c) } & = \textstyle{ \frac{1}{2q_{k_1}}({r_{k_1}^*}^2-2 c_{k_1}r_{k_1}^*) } \\
 & > \textstyle{ \frac{{r_{k_1}^*}^2}{2q_{k_1}} } \\
 & > 0 \; .
\end{align*}
Thus, the objective function for $\r$ is smaller than for $\r^*$ (the assumed optimal solution), which is a contradiction.
\qedhere
\end{proof}

\begin{lemma} \label{QuadL1ConsIsQuadKnapsack}
For $\q>\zero$, $(\forall k){\rm\ }c_k\neq 0$, $\|\c\|_1 >\rho$, the separable quadratic problem with one $\ell_1$ constraint in Eq.~\eqref{QuadL1Cons} is equivalent to the continuous quadratic knapsack problem:
\begin{align} \label{QuadKnapsack}
\min_{\g \geq \zero {\rm\ ,\ } \t{\one}\g=\rho}\left( \sum_k{\frac{1}{2q_k}(g_k - |c_k|)^2} \right) \; .
\end{align}
Furthermore, their optimal solutions are related by $(\forall k){\rm\ }r_k^*=\sgn(c_k)g_k^*$.
\end{lemma}
\begin{proof}
By invoking Lemma \ref{QuadL1SameOrthant}, we can replace $(\forall k){\rm\ }r_k=\sgn(c_k)g_k$, $g_k\geq 0$ in Eq.~\eqref{QuadL1Cons}.
Finally, we change the inequality constraint $\t{\one}\g\leq \rho$ to an equality constraint since $\|\c\|_1 >\rho$ and therefore, the optimal solution must be on the boundary of the constraint set.
\qedhere
\end{proof}
The continuous quadratic knapsack problem has been solved in several areas.
\citet{Helgason80} provides an $\O(K\log K)$ algorithm which initially sort the breakpoints.
\citet{Brucker84} and later \citet{Kiwiel07} provide deterministic linear-time algorithms by using medians of breakpoint subsets.
In the context of machine learning, \citet{Duchi08b} provides a randomized linear-time algorithm, while \citet{Liu09} provides an $\O(K\log K)$ algorithm.
We point out that \citet{Duchi08b,Liu09} assume that the weights of the quadratic term are all equal, i.e., $(\forall k){\rm\ }q_k=1$.
Here, we assume arbitrary positive weights, i.e., $(\forall k){\rm\ }q_k>0$.
\begin{lemma} \label{QuadKnapsackHalfway}
For $\q>\zero$, $(\forall k){\rm\ }c_k\neq 0$, $\|\c\|_1 >\rho$, the continuous quadratic knapsack problem in Eq.~\eqref{QuadKnapsack} has the solution:
\begin{align} \label{QuadKnapsackBPFunc}
g_k(\nu)=\max(0,|c_k|-\nu q_k) \; ,
\end{align}
\noindent for some $\nu$, and furthermore, the optimal solution fulfills the condition:
\begin{align*}
\g^*=\g(\nu) {\rm\ } \Leftrightarrow {\rm\ } \t{\one}\g(\nu)=\rho \; .
\end{align*}
\end{lemma}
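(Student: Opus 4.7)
The problem in eq.\eqref{QuadKnapsack} is a strictly convex quadratic program (since $\q>\zero$) on a nonempty polytope defined by one linear equality and nonnegativity bounds. Strict convexity yields a unique minimizer, and Slater's condition trivially holds for $\rho>0$ (take $g_k=\rho/K$), so the KKT conditions are both necessary and sufficient for optimality. The plan is therefore to write down these KKT conditions, eliminate the bound multipliers by case analysis, and read off the announced piecewise-linear form.

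More concretely, I would introduce a multiplier $\nu\in\R$ for the equality constraint $\t{\one}\g=\rho$ and multipliers $\lambda_k\geq 0$ for the bounds $g_k\geq 0$. Stationarity of the Lagrangian gives, for each $k$,
\begin{equation*}
\frac{g_k-|c_k|}{q_k}+\nu-\lambda_k=0,
\end{equation*}
together with complementary slackness $\lambda_k g_k=0$ and primal feasibility $g_k\geq 0$. I would split into two cases. If $g_k>0$, then $\lambda_k=0$ forces $g_k=|c_k|-\nu q_k$, which is consistent only when $\nu<|c_k|/q_k$. If $g_k=0$, then stationarity gives $\lambda_k=\nu-|c_k|/q_k$, and the sign constraint $\lambda_k\geq 0$ forces $\nu\geq |c_k|/q_k$. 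Combining the two cases gives exactly $g_k(\nu)=\max(0,|c_k|-\nu q_k)$, which is eq.\eqref{QuadKnapsackBPFunc}.

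For the second assertion, I would simply observe that, with the stationarity and complementary slackness conditions already captured in the formula $g_k(\nu)$, the only remaining KKT condition is primal feasibility of the equality constraint, namely $\t{\one}\g(\nu)=\rho$ (nonnegativity is automatic from the $\max(0,\cdot)$). Since KKT is necessary and sufficient here, $\g^*=\g(\nu)$ if and only if this single scalar equation holds, establishing the equivalence.

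The only mildly delicate point is making sure there exists some $\nu$ for which $\t{\one}\g(\nu)=\rho$, so that the characterization is nonvacuous; this is not hard because $\nu\mapsto \t{\one}\g(\nu)$ is continuous and nonincreasing in $\nu$, equal to $\|\c\|_1>\rho$ at $\nu=0$ and equal to $0$ for $\nu\geq \max_k |c_k|/q_k$, so the intermediate value theorem supplies the required $\nu>0$. I do not expect any real obstacle; the main thing to get right is the careful case split in stationarity so that the $\max(0,\cdot)$ form emerges cleanly.
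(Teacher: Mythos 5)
Your proposal is correct and follows the same route as the paper: the paper's proof simply forms the Lagrangian of eq.\eqref{QuadKnapsack} and invokes the KKT conditions, which is exactly what you do, only with the case analysis and the existence argument for $\nu$ spelled out explicitly. No substantive difference.
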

\begin{proof}
The Lagrangian of Eq.~\eqref{QuadKnapsack} is:
\begin{align*}
\min_{\g \geq \zero}{\left(\sum_k{\frac{1}{2q_k}(g_k - |c_k|)^2} + \nu(\t{\one}\g-\rho)\right)} \; .
\end{align*}
Both results can be obtained by invoking the Karush-Kuhn-Tucker optimality conditions on the above problem.
\qedhere
\end{proof}

\begin{remark}
Note that $g_k(\nu)$ in Eq.~\eqref{QuadKnapsackBPFunc} is a decreasing piecewise linear function with breakpoint $\nu=\frac{|c_k|}{q_k}>0$.
By Lemma \ref{QuadKnapsackHalfway}, finding the optimal $\g^*$ is equivalent to finding $\nu$ in a piecewise linear function $\t{\one}\g(\nu)$ that produces $\rho$.
\end{remark}

\begin{lemma} \label{QuadKnapsackSolution}
For $\q>\zero$, $(\forall k){\rm\ }c_k\neq 0$, $\|\c\|_1 >\rho$, the continuous quadratic knapsack problem in Eq.~\eqref{QuadKnapsack} has the optimal solution $g_k^*=\max(0,|c_k|-\nu^* q_k)$ for:
\begin{align*}
\frac{|c_{\pi_{k^*}}|}{q_{\pi_{k^*}}} \geq \nu^*=\frac{\sum_{k=1}^{k^*}{|c_{\pi_k}|}-\rho}{\sum_{k=1}^{k^*}{q_{\pi_k}}} \geq \frac{|c_{\pi_{k^*+1}}|}{q_{\pi_{k^*+1}}} \; ,
\end{align*}
\noindent where the breakpoints are sorted in decreasing order by a permutation $\pi$ of the indices ${1,2,\dots,K}$, i.e.,
$\frac{|c_{\pi_1}|}{q_{\pi_1}} \geq
\frac{|c_{\pi_2}|}{q_{\pi_2}} \geq \dots \geq
\frac{|c_{\pi_K}|}{q_{\pi_K}} \geq
\frac{|c_{\pi_{K+1}}|}{q_{\pi_{K+1}}}\equiv 0$.
\end{lemma}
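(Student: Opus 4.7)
The plan is to build on Lemma \ref{QuadKnapsackHalfway}: the optimal $\nu^*$ is the unique value at which the univariate function $\phi(\nu)\equiv\t{\one}\g(\nu)=\sum_k\max(0,|c_k|-\nu q_k)$ equals $\rho$. So the task reduces to locating $\nu^*$ on the graph of $\phi$ and writing down its closed-form expression.

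First I would record the structural properties of $\phi$. Since $\q>\zero$, each $g_k(\nu)=\max(0,|c_k|-\nu q_k)$ is continuous, nonnegative, nonincreasing, convex, and piecewise linear with a single breakpoint at $|c_k|/q_k>0$ (using $c_k\neq 0$). Hence $\phi$ is continuous, nonincreasing, and piecewise linear, with $\phi(0)=\|\c\|_1>\rho$ by assumption and $\phi(\nu)\to 0$ as $\nu\to\infty$. By the intermediate value theorem, there exists $\nu^*>0$ with $\phi(\nu^*)=\rho$, and by Lemma \ref{QuadKnapsackHalfway} this is exactly the multiplier that yields $\g^*$.

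Next I would localize $\nu^*$ among the breakpoints. Let $\pi$ be the permutation sorting the breakpoints in decreasing order, as stated, with the sentinel $|c_{\pi_{K+1}}|/q_{\pi_{K+1}}\equiv 0$. The breakpoints partition $[0,\infty)$ into intervals on which $\phi$ is affine. On the interval $\bigl[|c_{\pi_{k^*+1}}|/q_{\pi_{k^*+1}},\,|c_{\pi_{k^*}}|/q_{\pi_{k^*}}\bigr]$, the indices with $g_{\pi_k}(\nu)>0$ are exactly $\pi_1,\dots,\pi_{k^*}$, so
\begin{equation*}
\phi(\nu)=\sum_{k=1}^{k^*}\bigl(|c_{\pi_k}|-\nu q_{\pi_k}\bigr).
\end{equation*}
I would take $k^*$ to be the unique index such that $\nu^*$ falls in this interval (existence follows from $\phi(0)>\rho>0=\lim_{\nu\to\infty}\phi(\nu)$ and monotonicity; uniqueness of the interval is clear unless $\nu^*$ coincides with a breakpoint, in which case either choice of adjacent $k^*$ yields the same value).

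Finally, setting the affine expression above equal to $\rho$ and solving for $\nu$ gives
\begin{equation*}
\nu^*=\frac{\sum_{k=1}^{k^*}|c_{\pi_k}|-\rho}{\sum_{k=1}^{k^*}q_{\pi_k}},
\end{equation*}
and the bracketing inequalities $|c_{\pi_{k^*}}|/q_{\pi_{k^*}}\geq\nu^*\geq |c_{\pi_{k^*+1}}|/q_{\pi_{k^*+1}}$ are simply the statement that $\nu^*$ lies in the chosen interval. Substituting into eq.\eqref{QuadKnapsackBPFunc} yields the claimed $g_k^*$. The main (very mild) obstacle is handling the boundary case in which $\nu^*$ equals a breakpoint: there two consecutive affine pieces of $\phi$ agree, so the formula is consistent with either adjacent choice of $k^*$, and uniqueness of $\g^*$ is preserved.
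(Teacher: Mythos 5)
Your proposal is correct and follows essentially the same route as the paper: use Lemma \ref{QuadKnapsackHalfway} to reduce the problem to solving $\t{\one}\g(\nu)=\rho$ for the piecewise linear, nonincreasing function $\t{\one}\g$, locate $\nu^*$ between consecutive sorted breakpoints, and solve the resulting linear equation. The paper states this in two sentences; you have simply supplied the supporting details (continuity, monotonicity, the intermediate value argument, and the breakpoint tie case), all of which are sound.
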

\begin{proof}
Given $k^*$, $\nu^*$ can be found straightforwardly by using the equation of the line.
In order to find $k^*$, we search for the range in which:
\begin{align*}
\t{\one}\g\left(\frac{|c_{\pi_{k^*}}|}{q_{\pi_{k^*}}}\right) \leq \rho \leq \t{\one}\g\left(\frac{|c_{\pi_{k^*+1}}|}{q_{\pi_{k^*+1}}}\right) \; ,
\end{align*}
\noindent which proves our claim.
\qedhere
\end{proof}

Next, we provide the final proof.
\begin{proof}[\textbf{Proof of Theorem \ref{thm:quadknapsack}}]
For $\|\c\|_1\leq \rho$, from Remark \ref{QuadL1MoreThanRho} we know that $\r^*=\c$.
By Lemma \ref{LinfQuadIsQuadL1}, the optimal solution of Eq.~\eqref{eq:lpquad} is $\x^*=\inv{\diag(\q)}(\c-\r^*)=\zero$, and we prove the first claim.

For $\|\c\|_1> \rho$, by Lemma \ref{LinfQuadIsQuadL1}, the optimal solution of Eq.~\eqref{eq:lpquad} $x_{\pi_k}^*=\frac{1}{q_{\pi_k}}(c_{\pi_k}-r_{\pi_k}^*)$.
By Lemma \ref{QuadL1ConsIsQuadKnapsack}, $x_{\pi_k}^*=\frac{1}{q_{\pi_k}}(c_{\pi_k}-\sgn(c_{\pi_k})g_{\pi_k}^*)$.
By Lemma \ref{QuadKnapsackSolution}, $x_{\pi_k}^*=\frac{c_{\pi_k}}{q_{\pi_k}}-\sgn(c_{\pi_k})\max(0,\frac{|c_{\pi_k}|}{q_{\pi_k}}-\nu^*)$.

If $k>k^* \Rightarrow \frac{|c_{\pi_k}|}{q_{\pi_k}}< \nu^* \Rightarrow x_{\pi_k}^*=\frac{c_{\pi_k}}{q_{\pi_k}}$, and we prove the second claim.

If $k\leq k^* \Rightarrow \frac{|c_{\pi_k}|}{q_{\pi_k}}\geq \nu^* \Rightarrow x_{\pi_k}^*=\sgn(c_{\pi_k})\nu^*$, and we prove the third claim.
\qedhere
\end{proof}

\subsection{Proof of Theorem \ref{thm:quadtrustreg}}

Here, we provide the detailed proof of Theorem \ref{thm:quadtrustreg}.
First, we derive some intermediate lemmas needed for the final proof.
\begin{lemma} \label{L2QuadIsQuadTrustReg}
For $\q>\zero$, $\rho >0$, $p=2$, the $\ell_p$ regularized separable quadratic problem in Eq.~\eqref{eq:lpquad} is equivalent to the separable quadratic trust-region problem:
\begin{align} \label{QuadTrustReg}
\min_{\|\r\|_2\leq \rho}\left(\frac{1}{2}\t{(\r-\c)}\inv{\diag(\q)}(\r-\c)\right) \; .
\end{align}
Furthermore, their optimal solutions are related by $\x^*=\inv{\diag(\q)}(\c-\r^*)$.
\end{lemma}
\begin{proof}
By Lagrangian duality, the problem in Eq.~\eqref{QuadTrustReg} is the dual of the problem in Eq.~\eqref{eq:lpquad}.
Furthermore, strong duality holds in this case.
\qedhere
\end{proof}

\begin{remark} \label{QuadTRNoZeros}
In Eq.~\eqref{QuadTrustReg}, we can assume that $(\forall k){\rm\ }c_k\neq 0$.
If $(\exists k){\rm\ }c_k=0$, the partial optimal solution is $r_k^*=0$, and since this assignment does not affect the constraint, we can safely remove $r_k$ from the optimization problem.
\end{remark}

\begin{remark} \label{QuadTRMoreThanRho}
In what follows, we assume that $\|\c\|_2 >\rho$.
If $\|\c\|_2\leq \rho$, the unconstrained optimal solution of Eq.~\eqref{QuadTrustReg} is also its optimal solution, since $\r^*=\c$ is inside the feasible region given that $\|\r^*\|_2\leq \rho$.
\end{remark}
The trust-region problem has been extensively studied by the mathematical optimization community \citep{Boyd06,Forsythe65,More83}.
Trust-region methods arise in the optimization of general convex functions.
In that context, the strategy behind trust-region methods is to perform a local second-order approximation to the original objective function.
The quadratic model for local optimization is ``trusted'' to be correct inside a circular region (i.e., the trust region).
Separability is usually not assumed, i.e., a symmetric matrix $\mathbf{Q}$ is used instead of $\diag(\q)$ in Eq.~\eqref{eq:lpquad}, and therefore the general algorithms are more involved than ours.
In the context of machine learning, \citet{Duchi09} provides a closed form solution for the separable version of the problem when the weights of the quadratic term are all equal, i.e., $(\forall k){\rm\ }q_k=1$.
In this paper, we assume arbitrary positive weights, i.e., $(\forall k){\rm\ }q_k>0$.
A closed form solution is not possible in this general case, but the efficient one-dimensional Newton-Raphson method can be applied.
\begin{lemma} \label{QuadTrustRegDualLemma}
For $\q>\zero$, $(\forall k){\rm\ }c_k\neq 0$, $\|\c\|_2 >\rho$, the separable quadratic trust-region problem in Eq.~\eqref{QuadTrustReg} is equivalent to the problem:
\begin{align} \label{QuadTrustRegDual}
\min_{\lambda\geq 0}{\left(\sum_n\frac{c_n^2}{q_n+\lambda q_n^2}+\rho^2\lambda\right)} \; .
\end{align}
Furthermore, their optimal solutions are related by $\r^*=\inv{\diag(\one+\lambda^*\q)}\c$.
\end{lemma}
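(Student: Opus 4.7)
The plan is to derive eq.\eqref{QuadTrustRegDual} as a rescaled form of the Lagrangian dual of eq.\eqref{QuadTrustReg}.

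First I would introduce a nonnegative multiplier $\lambda$ for the constraint $\|\r\|_2^2\leq \rho^2$ and form the Lagrangian
\[
L(\r,\lambda) = \tfrac{1}{2}\t{(\r-\c)}\inv{\diag(\q)}(\r-\c) + \tfrac{\lambda}{2}(\|\r\|_2^2-\rho^2).
\]
The primal is strictly convex in $\r$ and Slater's condition is trivial (take $\r=\zero$), so strong duality applies. Setting $\nabla_\r L=\zero$ gives $(\inv{\diag(\q)}+\lambda\I)\,\r = \inv{\diag(\q)}\,\c$, which rearranges to $\r^*(\lambda) = \inv{\diag(\one+\lambda\q)}\c$, i.e.\ the primal--dual relationship claimed in the lemma.

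Next I would substitute $\r^*(\lambda)$ back into $L$ to form the dual function. Completing the square in $\r$ and using the stationarity identity $\t{\r^*}\bigl(\inv{\diag(\q)}+\lambda\I\bigr)\r^* = \t{\r^*}\inv{\diag(\q)}\c$, which equals $\sum_n c_n^2/(q_n(1+\lambda q_n))$, a short calculation yields
\[
g(\lambda) = \tfrac{1}{2}\sum_n \tfrac{c_n^2}{q_n} - \tfrac{1}{2}\sum_n \tfrac{c_n^2}{q_n(1+\lambda q_n)} - \tfrac{\lambda\rho^2}{2}.
\]
By strong duality the primal optimum equals $\max_{\lambda\geq 0} g(\lambda)$. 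Flipping sign, multiplying by $2$, and dropping the $\lambda$-independent constant $\sum_n c_n^2/q_n$ produces the equivalent minimization $\min_{\lambda\geq 0}\bigl(\sum_n c_n^2/(q_n+\lambda q_n^2) + \rho^2\lambda\bigr)$, which is precisely eq.\eqref{QuadTrustRegDual}. The optimizer $\lambda^*$ is the same in both formulations, and the primal solution is recovered via $\r^*=\inv{\diag(\one+\lambda^*\q)}\c$.

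The main obstacle is the bookkeeping in the dual substitution and recognizing that the stated minimization is the dual maximization, flipped in sign and shifted by a $\lambda$-independent constant. As a consistency check, differentiating the stated objective gives $f'(\lambda) = -\sum_n c_n^2/(1+\lambda q_n)^2 + \rho^2$, whose zero is exactly the active-constraint KKT condition $\|\r^*(\lambda)\|_2 = \rho$. Strict convexity of $f$ in $\lambda$ (each summand is a convex positive-reciprocal-of-linear function) together with $f'(0)=\rho^2-\|\c\|_2^2<0$ from the hypothesis $\|\c\|_2>\rho$ ensure the minimizer is unique and lies in $(0,\infty)$, consistent with Remark \ref{QuadTRMoreThanRho}.
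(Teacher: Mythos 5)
Your proof is correct and takes the same route as the paper, which simply asserts that eq.\eqref{QuadTrustRegDual} is the Lagrangian dual of eq.\eqref{QuadTrustReg} with strong duality; you have filled in the details the paper leaves implicit, including the observation that the stated minimization is the negated dual shifted by the $\lambda$-independent constant $\sum_n c_n^2/q_n$. The stationarity computation, the recovered relation $\r^*=\inv{\diag(\one+\lambda^*\q)}\c$, and the consistency check $f'(\lambda)=\rho^2-\|\r^*(\lambda)\|_2^2$ are all accurate.
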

\begin{proof}
By Lagrangian duality, the problem in Eq.~\eqref{QuadTrustRegDual} is the dual of the problem in Eq.~\eqref{QuadTrustReg}.
Furthermore, strong duality holds in this case.
\qedhere
\end{proof}

\begin{remark}
For the special case $\q=\one$ of \citet{Duchi09}, the problem in Eq.~\eqref{QuadTrustRegDual} becomes $\min_{\lambda\geq 0}{\left(\frac{\|\c\|_2^2}{1+\lambda}+\rho^2\lambda\right)}$.
By minimizing with respect to $\lambda$ and by noting that $\lambda\geq 0$, we obtain the optimal solution $\lambda^*=\max{\left(0,\frac{\|\c\|_2}{\rho}-1\right)}$.
\end{remark}

Next, we provide the final proof.
\begin{proof}[\textbf{Proof of Theorem \ref{thm:quadtrustreg}}]
For $\|\c\|_2\leq \rho$, from Remark \ref{QuadTRMoreThanRho} we know that $\r^*=\c$.
By Lemma \ref{L2QuadIsQuadTrustReg}, the optimal solution of Eq.~\eqref{eq:lpquad} is $\x^*=\inv{\diag(\q)}(\c-\r^*)=\zero$, and we prove the first claim.

For $\|\c\|_2> \rho$, by Lemma \ref{L2QuadIsQuadTrustReg}, the optimal solution of Eq.~\eqref{eq:lpquad} is $(\forall k){\rm\ }x_k^*=\frac{1}{q_k}(c_k-r_k^*)$.
By Lemma \ref{QuadTrustRegDualLemma}, $x_k^*=\frac{1}{q_k}(c_k-\frac{1}{1+\lambda^*q_k}c_k)=\frac{\lambda^*}{1+\lambda^*q_k}c_k$, and we prove the second claim.
\qedhere
\end{proof}

\section{Algorithmic Details} \label{sec:algorithm}

First, we discuss the computational complexity the block coordinate descent method.
Algorithm \ref{Algorithm} has a time complexity of $\O(LN^3K)$ for $L$ iterations, $N$ variables and $K$ tasks.
The polynomial dependence $\O(N^3)$ on the number of variables is expected since, in the general case, we cannot produce an algorithm faster than computing the inverse of the sample covariance in the case of an infinite sample.
For $p=\infty$, the linear-time dependence $\O(K)$ on the number of tasks can be accomplished by using a deterministic linear-time method by using medians of breakpoint subsets \citep{Kiwiel07}.
A very easy-to-implement $\O(K\log K)$ algorithm is obtained by initially sorting the breakpoints and searching the range for which Eq.~\eqref{eq:quadknapsackrange} holds.
For $p=2$, the linear-time dependence $\O(K)$ on the number of tasks can be accomplished by using the one-dimensional Newton-Raphson method for solving Eq.~\eqref{eq:quadtrustregdual}.
In our implementation, we initialize $\lambda=0$ and perform 10 iterations of the Newton-Raphson method.

Next, we show that the block coordinate descent method converges to the optimal solution.
\begin{lemma}
The solution sequence generated by the block coordinate descent method is bounded and every cluster point is a solution of the $\ell_{1,p}$ multi-task structure learning problem in Eq.~\eqref{eq:multitaskggm}.
\end{lemma}
\begin{proof}
The non-smooth regularizer $\|\BOmega\|_{1,p}$ is separable into a sum of $\O(N^2)$ individual functions of the form $\|(\omega_{n_1n_2}\si{1},\dots,\omega_{n_1n_2}\si{K})\|_p$.
These functions are defined over blocks of $K$ variables, i.e., ${\omega_{n_1n_2}\si{1},\dots,\omega_{n_1n_2}\si{K}}$.
The objective function in Eq.~\eqref{eq:multitaskggm} is continuous on a compact level set.
By virtue of Theorem 4.1 in \citep{Tseng01}, we prove our claim.
\qedhere
\end{proof}
Finally, we show that we can reduce the size of the original problem by removing nodes that are not endpoints of any edge in the optimal solution.
The rule for node removal depends only on the sample covariance matrix and thus, it can be applied as a preprocessing step.
\begin{lemma}
Let the $\ell_\dual{p}$-norm be the dual of the $\ell_p$-norm, i.e., $\frac{1}{p} + \frac{1}{\dual{p}} = 1$.
If the $\ell_{\infty,\dual{p}}$-norm fulfills $\max_n\|(T\si{1}u_n\si{1},\dots,T\si{K}u_n\si{K})\|_\dual{p} \leq \rho$, then the problem in Eq.~\eqref{DerivMultiTaskGGM} has the minimizer $(\forall k){\rm\ }{\y\si{k}}^*=\zero$.
\end{lemma}
\begin{proof}
By replacing the optimal $z\si{k}$ given by Theorem \ref {thm:diagonalstep} into the objective function in Eq.~\eqref{DerivMultiTaskGGM}, we get:
\begin{align*}
\min_{(\forall k){\rm\ }\y\si{k}\in \R^{N-1}}\left(\begin{array}{l}
  \sum_k{T\si{k}\left(\frac{1}{2}\t{\y\si{k}}v\si{k}\inv{\W\si{k}}\y\si{k} +\t{\u\si{k}}\y\si{k}\right)} \\
  +\rho\sum_n\|(y_n\si{1},\dots,y_n\si{K})\|_p
\end{array}\right) \; .
\end{align*}
The above problem has the minimizer $(\forall k){\rm\ }{\y\si{k}}^*=\zero$ if and only if $\zero$ belongs to the subdifferential set of the non-smooth objective function at $(\forall k){\rm\ }{\y\si{k}}=\zero$.
That is:
\begin{align*}
(\exists \A\in \R^{N-1 \times K}){\rm\ }(T\si{1}\u\si{1},\dots,T\si{K}\u\si{K})+\A=\zero \text{\ \ and\ \ } \max_n\|(a_{n1},\dots,a_{nK})\|_\dual{p} \leq \rho \; .
\end{align*}
This condition is true for $\max_n\|(T\si{1}u_n\si{1},\dots,T\si{K}u_n\si{K})\|_\dual{p} \leq \rho$.
\qedhere
\end{proof}

\section{Real-World Data Sets}

In this section, we provide further details regarding the real-world data sets used in our experimental validation in Section \ref{sec:results}.

\subsection{1000 Functional Connectomes: Brain Regions} \label{sec:regionsconnectomes}

We present the list of the $157$ standard Talairach regions, from the \emph{1000 functional connectomes} data set.
In order to abbreviate our presentation, we use parentheses, e.g., ``(left, right) amygdala'', to indicate that we used two regions: left amygdala and right amygdala.
The brain regions are the following:

\vspace{-0.15in}\begin{itemize}\setlength{\itemsep}{0pt}\setlength{\parskip}{0pt}
\item Cerebellum: cerebellar lingual
\item Cerebellum: (culmen, declive, pyramis, tuber, uvula) of vermis
\item Cerebellum: (left, right) (cerebellar tonsil, culmen, declive, dentate, fastigium, inferior semi-lunar lobule, nodule, pyramis, tuber, uvula)
\item Cerebrum: hypothalamus
\item Cerebrum: (left, right) (amygdala, claustrum, hippocampus, pulvinar, putamen)
\item Cerebrum: (left, right) (anterior, lateral dorsal, lateral posterior, medial dorsal, midline, ventral anterior, ventral lateral, ventral posterior lateral, ventral posterior medial) nucleus
\item Cerebrum: (left, right) Brodmann area (1,2,\dots,47)
\item Cerebrum: (left, right) caudate (body, head, tail)
\item Cerebrum: (left, right) (lateral, medial) globus pallidus
\item Brainstem: (left, right) (mammillary body, red nucleus, substantia nigra, subthalamic nucleus)
\end{itemize}

\subsection{The Cancer Genome Atlas: Genes} \label{sec:genescancer}

We present the list of the $187$ genes, from the \emph{cancer genome atlas} data set.
These genes are commonly regulated in cancer and were identified on independent data sets by \citet{Lu07}.
The genes are the following:

{\small
\vspace{0.05in}\noindent
ABCA8, ABHD6, ACLY, ADAM10, ADAM12, ADHFE1, AGXT2, ALDH6A1, 
ANK2, ANKS1B, ANP32E, AP1S1, APOL2, ARL4D, ARPC1B, AURKA, 
AYTL2, BAT2D1, BAX, BFAR, BID, BOLA2, BRP44L, C10orf116, 
C17orf27, C1orf58, C1orf96, C5orf4, C6orf60, C8orf76, CALU, CARD4, 
CASC5, CBX3, CCNB2, CCT5, CDC14B, CDCA7, CEP55, CHRDL1, 
CIDEA, CKLF, CLEC3B, CLU, CNIH4, DBR1, DDX39, DHRS4, 
DKFZp667G2110, DKFZp762E1312, DMD, DNMT1, DTL, DTX3L, E2F3, ECHDC2, 
ECHDC3, EFCBP1, EFHC2, EIF2AK1, EIF2C2, EIF2S2, Ells1, EPHX2, 
EPRS, ERBB4, FAM107A, FAM49B, FARP1, FBXO3, FBXO32, FEN1, 
FEZ1, FKBP10, FKBP11, FLJ11286, FLJ14668, FLJ20489, FLJ20701, FLJ21511, 
FMNL3, FMO4, FNDC3B, FOXP1, FTL, GEMIN6, GLT25D1, GNL2, 
GOLPH2, GPR172A, GSTM5, GULP1, HDGF, HIF3A, HLA-F, HLF, 
HNRPK, HNRPU, HPSE2, HSPE1, ILF3, IPO9, IQGAP3, K-ALPHA-1, 
KCNAB1, KDELC1, KDELR2, KDELR3, KIAA1217, KIAA1715, LDHD, LOC162073, 
LOC91689, LRRFIP2, LSM4, MAGI1, MORC2, MPPE1, MSRA, MTERFD1, 
NAP1L1, NCL, NDRG2, NME1, NONO, NOX4, NPM1, NR3C2, 
NRP2, NUSAP1, P53AIP1, PALM, PAQR8, PDIA6, PGK1, PINK1, 
PLEKHB2, PLIN, PLOD3, PPAP2B, PPIH, PPP2R1B, PRC1, PSMA4, 
PSMA7, PSMB2, PSMB4, PSMB8, PTP4A3, RBAK, RECK, RORA, 
RPN2, SCNM1, SEMA6D, SFXN1, SHANK2, SLAMF8, SLC24A3, SLC38A1, 
SNCA, SNRPB, SNX10, SORBS2, SPP1, STAT1, SYNGR1, TAP1, 
TAPBP, TCEAL2, TMEM4, TMEPAI, TNFSF13B, TNPO1, TRPM3, TTK, 
TTL, TUBAL3, UBA2, USP2, UTP18, WASF3, WHSC1, WISP1, 
XTP3TPA, ZBTB12, ZWILCH.
}

\bibliography{references}

\end{document}